\newcommand{\nocontentsline}[3]{}
\newcommand{\tocless}[2]{\bgroup\let\addcontentsline=\nocontentsline#1{#2}\egroup}
\DeclareMathOperator{\spec}{spec}
\newcommand{\Deltab}{\boldsymbol{\Delta}}
\newcommand{\yi}{{y_i}}
\newcommand{\ninf}[1]
{{\left\|#1\right\|_{\max}}}
\newcommand{\none}[1]
{{\left\|#1\right\|_{\rm{sum}}}}
\newcommand{\ellb}{\boldsymbol{\ell}}
\newcommand{\ellbb}{\boldsymbol{l}}
\newcommand{\Tc}{\mathcal{T}}
\setlist[itemize]{leftmargin=5mm}
\newcommand{\snorm}[2]{\left|\left|\left|#1\right|\right|\right|_{#2}}
\newcommand{\nab}{\boldsymbol{\nabla}}
\newcommand{\Wd}{\vct{W}^{\dagger}}
\newcommand{\dual}[1]{{\lVert #1 \rVert_{*}}}
\newcommand{\Lcexp}{\Lc_{\rm{exp}}}
\newcommand{\Lcpll}{\Lc_{\rm{pll}}}
\newcommand{\Gcpll}{\Gc_{\rm{pll}}}
\newcommand{\vb}{{\vct{v}}}
\newcommand{\Vb}{{\mtx{V}}}
\newcommand{\sft}[1]{\mathbb{S}(#1)}
\newcommand{\sfti}[2]{\mathbb{S}_{#1}(#2)}
\newcommand{\sftd}[1]{{\mathbb{S}}^{\prime}(#1)}
\newcommand{\vct}[1]{\bm{#1}}
\newcommand{\mtx}[1]{\bm{#1}}
\newcommand{\tsn}[1]{{\left\vert\kern-0.25ex\left\vert\kern-0.25ex\left\vert #1 
    \right\vert\kern-0.25ex\right\vert\kern-0.25ex\right\vert}}
\definecolor{darkred}{RGB}{150,0,0}
\definecolor{darkgreen}{RGB}{0,150,0}
\definecolor{darkblue}{RGB}{0,0,200}
\newtheorem{theorem}{Theorem}
\newtheorem{assumption}{Assumption}
\newtheorem{lemma}{Lemma}
\newtheorem{corollary}{Corollary}
\newtheorem{remark}{Remark}
\newcommand{\diag}[1]{\operatorname{diag}(#1)}
\DeclareMathOperator{\tr}{tr}
\newcommand{\appropto}{\mathrel{\vcenter{
  \offinterlineskip\halign{\hfil$##$\cr
    \propto\cr\noalign{\kern2pt}\sim\cr\noalign{\kern-2pt}}}}}
\newcommand{\cut}[1]{\textcolor{red}{}}
\newcommand{\W}{{\vct{W}}}
\newcommand{\Omegab}{{\vct{\Omega}}}
\newcommand{\Ab}{{\vct{A}}}
\newcommand{\sign}[1]{\texttt{sign}(#1)}
\newcommand{\M}{\vct{M}}
\newcommand{\Ub}{\vct{U}}
\newcommand{\G}{\vct{G}}
\newcommand{\Hb}{{\vct{H}}}
\newcommand{\Wb}{{\vct{W}}}
\newcommand{\Sb}{\vct{S}}
\newcommand{\Y}{\vct{Y}}
\newcommand{\A}{\vct{A}}
\newcommand{\rb}{\vct{r}}
\newcommand{\ub}{\vct{u}}
\newcommand{\w}{{\vct{w}}}
\newcommand{\Bb}{\vct{B}}
\newcommand{\e}{\vct{e}}
\newcommand{\eb}{\vct{e}}
\newcommand{\y}{\vct{y}}
\newcommand{\s}{\vct{s}}
\newcommand{\ab}{\vct{a}}
\newcommand{\qb}{{\vct{q}}}
\newcommand{\hb}{\vct{h}}
\newcommand{\Lc}{\mathcal{L}}
\newcommand{\Gc}{{\mathcal{G}}}
\newcommand{\Qc}{\mathcal{Q}}
\newcommand{\beq}{\begin{equation}}
\newcommand{\eeq}{\end{equation}}
\newcommand{\bea}{\begin{align}}
\newcommand{\eea}{\end{align}}
\newcommand{\R}{\mathbb{R}}
\newcommand{\nn}{\notag}
  \newcommand{\Sigmab}{\boldsymbol\Sigma}
 \newcommand{\deltab}{\boldsymbol\delta}
\DeclarePairedDelimiterX{\inp}[2]{\langle}{\rangle}{#1, #2}
\newcommand{\ones}{\mathds{1}}
\providecommand{\norm}[1]{\lVert#1\rVert}
\title{Implicit Bias of Spectral Descent and Muon \\ on Multiclass Separable Data}
\author{%
  Chen Fan \\
  Department of Computer Science\\
  University of British Columbia\\
  \AND
  Mark Schmidt \\
  Department of Computer Science\\
  University of British Columbia\\
  Canada CIFAR AI Chair (Amii) \\
  \And
  Christos Thrampoulidis \\
  Department of Electrical and Computer Engineering \\
  University of British Columbia \\
}
\setlist[itemize]{topsep=0pt, partopsep=0pt, itemsep=0pt, parsep=0pt}
\begin{document}
\maketitle

\begin{abstract}
Different gradient-based methods for optimizing overparameterized models can all achieve zero training error yet converge to distinctly different solutions inducing different generalization properties. We provide the first complete characterization of implicit optimization bias for p-norm normalized steepest descent (NSD) and momentum steepest descent (NMD) algorithms in multi-class linear classification with cross-entropy loss. Our key theoretical contribution is proving that these algorithms converge to solutions maximizing the margin with respect to the classifier matrix's p-norm, with established convergence rates. These results encompass important special cases including Spectral Descent and Muon, which we show converge to max-margin solutions with respect to the spectral norm. A key insight of our contribution is that the analysis of general entry-wise and Schatten p-norms can be reduced to the analysis of NSD/NMD with max-norm by exploiting a natural ordering property between all p-norms relative to the max-norm and its dual sum-norm. For the specific case of descent with respect to the max-norm, we further extend our analysis to include preconditioning, showing that Adam converges to the matrix's max-norm solution. Our results demonstrate that the multi-class linear setting, which is inherently richer than the binary counterpart, provides the most transparent framework for studying implicit biases of matrix-parameter optimization algorithms.
\end{abstract}

\tocless\section{Introduction} \label{sec:intro}
The ever-increasing training cost of large language models (LLMs) has demanded better optimizer designs with improved performance and efficiency \citep{brown2020language,achiam2023gpt,guo2025deepseek}. The de facto standard optimizers for deep learning training are Adam and AdamW \citep{kingma2014adam,loshchilov2017decoupled}. However, these algorithms that employ diagonal preconditioners to independently adjust the learning rate of each coordinate, may fail to capture their inter-dependencies and fully leverage the geometry of the loss landscape \citep{zhang2024transformers}. This has spurred a series of research efforts on improving Adam or AdamW's computational efficiency \citep{grosse2016kronecker, gupta2018shampoo, shazeer2018adafactor,zhang2024adam}, with LLM-training as the target application domain \citep{jordan2024muon,vyas2024soap,pethick2025training,liu2025muon}. 

A noticeable work by \citet{jordan2024muon}
proposed the Muon optimizer, which was shown to have remarkable performances on NanoGPT benchmarks. More recently, it has been shown that Muon can be used for large-scale LLM training with the potential to replace AdamW as the standard choice \citep{liu2025muon}. The key step in Muon is to orthogonalize the updates via the Newton-Schulz iteration \citep{jordan2024muon,bernstein2024old}. More precisely, the update (denoted as $\Deltab$) is (approximately) replaced by the product of its singular-vector matrices $\Ub \Vb^T$ (where the (truncated) singular value decomposition (SVD) of $\Deltab$ is $\Deltab = \Ub \Sigmab \Vb^T$).
Even though the benefits of orthogonalization are not fully understood, \citet{jordan2024muon} pointed out that it could promote updates in directions of small magnitudes given the weight matrices are typically low-rank. 
Moreover, if the above SVD approximation is exact and gradient accumulations are turned off, then Muon becomes spectral descent  \citep{carlson2015preconditioned, bernstein2024old}, which is the (normalized) steepest descent w.r.t the spectral norm \citep{bernstein2024old}. As noted by \citet{bernstein2024old}, spectral descent is also Shampoo (which won the AlgoPerf competition \citep{shi2023distributed,dahl2023benchmarking}) without accumulations in preconditioners. Thus, Muon can be viewed as (approximate) Shampoo when both optimizers are without accumulations. In essence, we observe that one important ingredient of Muon and Shampoo (without accumulations) is the spectral-descent step,
\begin{align*}
     \Wd = \W - \eta \Ub \Vb^T \quad \text{where} \quad \nabla \Lc(\W) = \Ub \Sigmab \Vb^T. 
\end{align*}

Theoretical investigations of spectral descent or Muon mainly focus on characterizing the convergence rates of the algorithm (e.g., the rate of decrease of the gradient norm in the non-convex setting \citep{an2025asgo,li2025note,pethick2025training}). However, modern machine learning models are overparameterized, leading to multiple weight configurations that achieve identical training loss but exhibit markedly different generalization properties \citep{zhang2017understanding, belkin2019reconciling}. 
The key insight is that gradient-based methods inherently prefer ``simple'' solutions according to optimizer-specific notions of simplicity. Understanding this implicit bias/regularization requires analyzing not just loss convergence, but the geometric trajectory of parameter updates throughout training. To this end, our work aims to address the fundamental question:
\begin{center}
    \textit{What is the \textbf{implicit bias} of \textbf{spectral descent} (and its momentum variants) in linear multiclass classification with separable data and cross-entropy loss?}
\end{center}

\begin{table}[t!]
\centering
\begin{threeparttable}
\captionsetup{width=1. \textwidth}
\caption{Summary of margin convergence rates for NSD and NMD algorithms of different norm constraints for linear multiclass separable data with the CE loss. The (truncated) SVDs of the gradient and momentum are denoted as $\nab = \Ub \Sigmab \Vb^T$ and $\M = \tilde{\Ub} \tilde
{\Sigmab} \tilde{\Vb}^T$ respectively.}
\begin{tabular}{|l|l|l|l|l|}
\hline 
Method                 & Norm Constraint                                      & Update $\Deltab$                            & Reference                  & Rate \tnote{2}             \\ \hline \hline
NGD          & \multirow{2}{*}{Unit $\lVert \cdot \rVert_{2}$-ball} & $\frac{\nab}{\lVert \nab \rVert_{2}}$ & \citet{hazan2015beyond} &  -   \\ \cline{1-1} \cline{3-5} 
NMD-GD &                                                      & $\frac{\M}{\lVert \M \rVert_{2}}$     & \citet{cutkosky2020momentum} & - \\ \hline
SignGD                 & \multirow{2}{*}{Unit $\ninf{\cdot}$-ball}            & $\sign{\nab}$                                   &  \citet{bernstein2018signsgd} & -      \\ \cline{1-1} \cline{3-5} 
Signum                 &                                                      & $\sign{\M}$                                   & \citet{bernstein2018signsgd} & - \\ \hline
\textbf{\textit{Spectral-GD}}            & \multirow{2}{*}{Unit $\snorm{\cdot}{\infty}$-ball}   & $\Ub  \Vb^{T}$                              & \citet{bernstein2024old} & $O(\frac{\log t + n}{t^{1/2}})$     \\ \cline{1-1} \cline{3-5} 
\textbf{\textit{Muon}} \tnote{1}             &                                                      & \addstackgap[1.5pt]{$\tilde{\Ub}  \tilde{\Vb}^T$}                              & \citet{jordan2024muon} & $O(\frac{d \log t + d n}{t^{1/2}})$ \\ \hline
\end{tabular}
\label{table:main_results}
 \begin{tablenotes}
    \footnotesize
    \item[1] We consider EMA-style momentum of the form \eqref{eq: adam1}. 
    \item[2] NGD and SignGD rates are the same as Spectral-GD; Signum and NMD-GD rates are the same as Muon. 
\end{tablenotes}
\end{threeparttable}
\end{table}

The multiclass setting where the parameter is a \textbf{matrix}, is a natural place to study the class of spectral-descent algorithms, and provides an inherently richer setting. Our work captures this richness by establishing convergence with respect to not only entry-wise matrix norms, but also matrix Schatten norms. Hence, while the focus is on spectral descent and Muon, the analysis establishes implicit bias rates for a wide family of algorithms (Table \ref{table:main_results}), and we state the results in the most general form from the perspective of steepest descent with (unit) norm-ball constraints. Our contributions are as follows:
\begin{enumerate}[leftmargin=*] 
    \item For multiclass separable data trained with the cross-entropy (CE) loss, we show that the iterates of normalized steepest descent (NSD) defined with respect to (w.r.t.) any matrix entry-wise or Schatten norms converge to a solution that maximizes the margin defined w.r.t. the same norm, with a rate $\mathcal{O}(\frac{1}{t^{1/2}})$. This includes sign descent (entry-wise max-norm) \citep{bernstein2018signsgd}, normalized gradient descent (entry-wise 2-norm) \citep{hazan2015beyond}, and spectral descent (Schatten $\infty$-norm) \citep{bernstein2024old} as special cases. To achieve this, we introduce a unified analysis framework that relates entry-wise and Schatten p-norms to the entry-wise max-norm, and construct a proxy function for the loss that closely traces both  its value and gradient. We also show the same machinery applies to other multiclass losses such as the exponential loss \citep{mukherjee2010theory} and the PairLogLoss \citep{wang2021rank4class}. 
    \item Under the same setting, we utilize the same framework and proxy function to show that a $\mathcal{O}(\frac{1}{t^{1/2}})$ margin convergence rate also holds for normalized momentum steepest descent (NMD). This includes the following algorithms in analogy to the ones above: sign momentum descent \citep{bernstein2018signsgd}, normalized momentum gradient descent \citep{cutkosky2020momentum}, and Muon \citep{jordan2024muon}. The key step of the analysis is to use the proxy function to bound the sum-norm difference between the gradient and the momentum (i.e., the exponential moving averages (EMA) of the gradient), which translates to a bound on the dual norm through the fundamental norm-relationships used in the study of NSD. The margin convergence rates of various algorithms are summarized in Table \ref{table:main_results}. Furthermore, we extend the analysis to Adam (without the stability constant) and show its iterates maximize the margin w.r.t. the matrix max-norm (proof details and numerical validations in App. \ref{sec:sec_adam}).
    \item We experimentally verify our theoretical predictions across all considered algorithms. First, for sign descent (SignGD) and Signum, we demonstrate that solutions favor the max-norm margin over the 2-norm margin—the opposite behavior to normalized gradient descent (NGD) and normalized momentum gradient descent (NMD-GD). Moreover, we show that both spectral descent (Spectral-GD) and Muon favor the spectral-norm margin over the other norms. We further extend the experiments to the non-linear setting with a two-layer neural network. We observe the (unnormalized) spectral-norm margin of Spectral-GD and Muon grow faster than other algorithms. Hence, the norm-preference trend in the linear setting can also exhibit in the non-linear setting.  
\end{enumerate}

\tocless\section{Preliminaries} \label{sec:prelim}

\paragraph{Notations}  Matrices, vectors, and scalars are denoted by $\A$, $\ab$, and $a$ respectively. For matrix $\A$, we denote its $(i,j)$-th entry as $\A[i,j]$, and for vector $\ab$, its $i$-th entry as $\ab[i]$ or $a_i$. We consider entry-wise matrix p-norms defined as  $\|\A\|_p=(\sum_{i,j}|\Ab[i,j]|^p)^{1/p}$. Central to our results are: the infinity norm, denoted as $\ninf{\A}:=\|\A\|_\infty=\max_{i,j} |\A[i,j]|$ and called the \textbf{max-norm}, and the \textbf{entry-wise 1-norm}, denoted as $\none{\Ab}:=\|\A\|_1=\sum_{i,j}|\A[i,j]|$. 
The entry-wise 1-norm is dual to the max-norm. For vectors, the max-norm is equivalent to the infinity norm, denoted as $\|\ab\|_\infty$, while we denote the $\ell_1$ norm as $\|\ab\|_1$. We further denote the Schatten p-norm of $\Ab$ as $\snorm{\A}{p} := \left(\sum_{i=1}^{r} \sigma_i^p\right)^{1/p}$, where $\sigma_1, \sigma_2, \ldots, \sigma_r$ are the non-zero singular values of $\A$. Let $r = \text{rank}(\A)$, then special cases of Schatten p-norm include: nuclear norm $\snorm{\A}{1} = \sum_{i=1}^{r} \sigma_i$, Frobenius norm $\snorm{\A}{2} = \sqrt{\sum_{i=1}^{r} \sigma_i^2}$, and \textbf{spectral norm} $\snorm{\A}{\infty} = \sigma_1$. To simplify the discussions, we sometimes write $\|\Ab\|$ (dropping subscripts) to refer to any entry-wise or Schatten p-norm with $p \geq 1$. We denote by $\|\A\|_*$ the dual-norm with respect to the standard matrix inner product $\inp{\A}{\Bb}=\tr(\A^\top\Bb)$. We denote the gradient and its value at iteration $t$ as $\nab := \nabla \Lc(\W)$ and $\nab_t := \nabla \Lc(\W_t)$ respectively. 

Let $\mathbb{S}: \R^{k} \rightarrow \triangle^{k-1}$ the softmax map of $k$-dimensional vectors to the  probability simplex $\triangle^{k-1}$ such that for any $\ab\in\R^k$, it holds that $\sft{\ab} = \big[\frac{\exp({\ab[c]})}{\sum_{c \in [k]} \exp(\ab[c])}\big]_{c=1}^k \in \triangle^{k-1}$. Let $\sfti{c}{\vb}$ denote the $c$-th entry of $\mathbb{S}(\vb)$. Let $\mathbb{S}'(\ab)=\diag{\mathbb{S}(\ab)}-\mathbb{S}(\ab)\mathbb{S}(\ab)^\top$ denote the softmax gradient, with $\diag{\cdot}$  a diagonal matrix.  Finally, let $\{\eb_c\}_{c=1}^k$ be the standard basis vectors of $\R^k$, and indicator $\delta_{ij}$ be such that $\delta_{ij} = 1$ if and only if $i=j$. For any integer $k$, $[k]$ denotes $\{1,\ldots,k\}$.

\paragraph{Setup} Consider a multiclass classification problem with training data ${\hb_1, \ldots, \hb_n }$ and labels ${y_1,\ldots,y_n}$. Each datapoint $\hb_i\in\R^d$ is a vector in a $d$-dimensional embedding space (denote data matrix $\Hb = [\hb_1, \ldots, \hb_n]^\top \in \R^{n \times d}$), and each label $y_i\in[k]$ represents one of $k$ classes. We assume each class contains at least one datapoint. The classifier $f_{\W}: \R^d \rightarrow \R$ is a linear model with weight matrix $\W \in \R^{k \times d}$. The model outputs logits $\ellb_i = f_{\W}(\hb_i) = \W \hb_i$ for $i \in [n]$, which are passed through the softmax map to produce class probabilities $\hat{p}(c|\hb_i) = \mathbb{S}_c(\ellb_i)$. 

We train using empirical risk minimization (ERM):
$
\Lc_\text{ERM}(\W):=-\frac{1}{n}\sum_{i\in[n]}\ell\left({\W\hb_i};y_i\right)\,,  
$
where the loss function $\ell$  takes as input the logits of a datapoint and its label. The predominant choice in classification is the CE loss 
\begin{align}
     \Lc(\W):&=-\frac{1}{n}\sum\nolimits_{i\in[n]}\log\big(\sfti{\yi}{\W\hb_i}\big). 
     \label{eq:loss} 
\end{align} 
 We focus our discussions on the CE loss due to its ubiquity in practice. However, our results hold for other multiclass losses such as the 
exponential \citep{mukherjee2010theory} and the PairLogLoss \citep{wang2021rank4class} (see App. \ref{sec:app_other_losses}). Define the maximum margin of the dataset  w.r.t. any entry-wise or Schatten p-norm $\|\cdot\|$ as
\begin{align}\label{eq:p-margin}
    \gamma:= \max\nolimits_{\| \W \| \leq 1}\,\min\nolimits_{{i\in[n],\, c\neq \yi}}\,\left(\eb_\yi-\eb_c\right)^\top\W\hb_i\,.
\end{align}

\paragraph{Optimization Methods} We study iterative algorithms that update the weight matrix by
\begin{align*}
    \W_{t+1} = \W_t - \eta_t \Deltab_t. 
\end{align*}
For the NSD family \citep{boyd2004convex}, the update direction\footnote{For $p\in(1,\infty)$, the norms $\|\cdot\|_p$ and $\snorm{\cdot}{p}$ are strictly convex, thus there is a unique maximizer defining the update in Eqn. \eqref{eq:nsd_main}. For $p=1,\infty$ the maximizer is not necessarily unique and our results hold for any choice of $\Deltab_t$ in the set of maximizers; see e.g. \citet{ziketak1988characterization}.}
w.r.t. the norm $\lVert \cdot \rVert$ is  
\begin{align}
 \Deltab_t:=\arg\max\nolimits_{\|\Deltab\|\leq 1}\inp{\nab_t}{\Deltab}\,. \label{eq:nsd_main}
\end{align}
Note that this reduces to SignGD, Coordinate Descent (e.g.,  \citet{nutini2015coordinate}), or NGD when the max-norm (i.e. $\norm{\cdot}_{\infty}$), the entry-wise $1$-norm (i.e. $\none{\cdot}$), or the Frobenius Euclidean-norm (i.e. $\norm{\cdot}_{2}$) is used, respectively.
Concretely, the update directions for SignGD and NGD are: 
\begin{align*}
    \text{SignGD:} \quad \Deltab_t = \sign{\nab_t}, \quad \text{and} \quad \text{NGD:} \quad \Deltab_t = \nab_t / \norm{\nab_t}_2,
\end{align*}
where the $\sign{\cdot}$ and division $\frac{\cdot}{\cdot}$ operations are applied entry-wise. In the special case of spectral norm (i.e. $\snorm{\cdot}{\infty}$), this becomes the Spectral-GD, for which $\Deltab_t = \Ub_t \Vb_t^T$, where $\Ub_t$ and $\Vb_t$ are the left/right singular matrices of $\nab_t$ respectively (i.e., $\nab_t = \Ub_t \Sigmab_t \Vb_t^T$ with singular values in $\Sigmab_t>0$ arranged in non-increasing order). 
Finally, note that the Schatten $2$-norm case reduces to NGD (as $\snorm{\cdot}{2} = \norm{\cdot}_2$).

We also consider the NMD family with the following update direction w.r.t. the norm $\norm{\cdot}$
\begin{align}
    \Deltab_t:=\arg\max\nolimits_{\norm{\Deltab} \leq 1}\inp{\M_t}{\Deltab}\, \label{eq:update_muon},
\end{align}
where the momentum $\M_t$ is computed as the EMA of the gradient given by  
\begin{align}
    \mathbf{M}_t &= \beta_{1} \mathbf{M}_{t-1} + (1 - \beta_1) \nab_t. \label{eq: adam1}
\end{align}
This form of momentum is also known as the heavy-ball or the SGDM-style momentum \citep{polyak1964some,ghadimi2015global,liu2020improved}. 
Thus, an NMD algorithm chooses the update direction (among all feasible directions in the unit $\norm{\cdot}$-ball) that best aligns with the momentum instead of the gradient direction (as chosen by an NSD algorithm). Similar to above, when the max-norm and the Frobenius-norm are used, the resulting Signum and NMD-GD update directions are:
\begin{align*}
        \text{Signum:} \quad \Deltab_t = \sign{\M_t}, \quad \text{and} \quad \text{NMD-MD:} \quad \Deltab_t = {\M_t}\big/{\norm{\M_t}_2}.
\end{align*}

When spectral norm is used in \eqref{eq:update_muon}, this becomes Muon\footnote{The implementation in \citet{jordan2024muon} uses Nesterov-type momentum:  Newton-Schulz iteration applied to $\beta_1 \M_t + \nab_t$ instead of $\beta_1 \M_{t-1} + \nab_t$ \citep{liu2025muon}.} for which the SVD is on $\M_t$ (i.e. $\M_t = \tilde{\Ub}_t \tilde{\Sigmab}_t \tilde{\Vb}_t^T$) and the update direction is $\Deltab_t = \tilde{\Ub}_t \tilde{\Vb}_t^T$. Note that Muon reduces to Spectral-GD when the momentum parameter $\beta_1$ is set to $0$.  Similar reductions also hold for Signum (to SignGD) and NMD-GD (to NGD) as well.  
\paragraph{Assumptions} Establishing the implicit bias of the above mentioned gradient-based optimization algorithms requires the  following assumptions. First, we assume data are linearly separable, ensuring the margin $\gamma$ is strictly positive, an assumption routinely used in previous works \citep{soudry2018implicit, ravi2024implicit,gunasekar2018characterizing,nacson2019convergence, wu2024implicit}. 

\begin{assumption} \label{ass:sep}
    There exists $\W \in \R^{k \times d}$ such that $\min_{c \neq y_i} (\e_{y_i} - \e_c)^T \W \hb_i > 0$ for all $i \in [n]$.
\end{assumption} 

In this work, we consider learning rate schedule $\eta_t = \Theta(\frac{1}{t^a})$, where $a \in (0,1]$. Such schedules have been studied in the convergence and implicit bias of various optimization algorithms (e.g.,  \citet{bottou2018optimization}, \citet{nacson2019convergence}, and \citet{sun2023unified}) including Adam \citep{,zhang2024implicit}.  

\begin{assumption} \label{ass:learning_rate_1}The learning rate schedule $\{ \eta_t \}$ is decreasing with respect to $t$ and satisfies the following conditions: $\lim_{t \rightarrow \infty} \eta_t = 0$ and $\sum_{t=0}^{\infty}\eta_t = \infty$.
\end{assumption}

Assumption \ref{ass:learning_rate_2} can be satisfied by the above learning rate for a sufficiently large $t$ as shown in \citet[Lemma C.1]{zhang2024implicit}. It is used in our analysis of NMD and Adam.
\begin{assumption}\label{ass:learning_rate_2} The learning rate schedule satisfies the following: let $\beta \in (0,1)$ and $c_1 > 0$ be two constants, there exist time $t_0 \in \mathbb{N}_{+}$ and constant $c_2 = c_2(c_1, \beta) >0$ such that $\sum_{s=0}^t \beta^s(e^{c_1 \sum_{\tau = 1}^s \eta_{s - \tau}}-1) \leq c_2 \eta_t$ for all $t \geq t_0$.    
\end{assumption}

Finally, we assume that the $1$-norm of the data is bounded. Similar assumptions were used in \citet{ji2019implicit}, \citet{nacson2019convergence}, \citet{wu2024implicit}, and \citet{zhang2024implicit}.
\begin{assumption}\label{ass:data_bound}
    There exists constant $B > 0$ such that $\lVert \hb_i \rVert_{1} \leq B$ for all $i \in [n]$. 
\end{assumption}

\tocless\section{A Unified Framework with a Proxy Function} \label{sec:main_GW}

Analyzing margin convergence begins with studying loss convergence through  second-order Taylor expansion of the CE loss (recall that $\sftd{\vb}=\diag{\vb}-\vb\vb^\top$): 
\begin{align}
&\Lc(\W+\Deltab) = \Lc(\W) + \inp{\nabla\Lc(\W)}{\Deltab} + \frac{1}{2n}\sum\nolimits_{i\in[n]}\hb_i^\top\Deltab^\top\sftd{\W\hb_i}\Deltab\hb_i + o(\|\Deltab\|_F^3), \label{eq:loss taylor main}
\end{align}
To bound the loss at $\W_{t+1}=\W_t-\eta_t\Deltab_t$, we must bound both the first-order and second-order terms in \eqref{eq:loss taylor main}. For  NSD updates  in Eq. \eqref{eq:nsd_main}, the first term evaluates to $-\eta_t \| \nabla\Lc(\W) \|_*$ (recall that $\|\cdot \|_*$ is the dual norm). This leads to two key tasks: (1) Lower-bounding the dual gradient norm; (2) Upper-bounding the second-order term.

For the proof to proceed, these bounds should satisfy two desiderata: (1) They are expressible as the same function of $\W$, call it $\Gc(\W)$, up to constants.
(2) The function $\Gc(\W)$ is a good proxy for the loss for small values of the latter. The former helps with combining the terms, while the latter helps with demonstrating descent. Next, we obtain these key bounds for the CE loss by determining the appropriate proxy  $\Gc(\W)$. 

Besides the need for a proxy $\Gc(\W)$, we use the following facts about the sum-norm dominating any entry-wise/Schatten p-norm. Concretely, for any matrix $\A$ and any $p \geq 1$: 
\begin{align}
    \ninf{\A} \leq \snorm{\A}{p} \leq \none{\A}, \quad \text{and} \quad \ninf{\A} \leq \lVert \A \rVert_{p} \leq \none{\A}. \label{eq:A_rela}
\end{align}
These relationships (proved in Lemma \ref{lem:snorm dominate} in App. \ref{sec: G and L}) are crucial for unifying the analysis of NSD and NMD algorithms w.r.t. either the entry-wise or the Schatten norms (details below).  

\paragraph{Construction of $\Gc(\W)$} 
Before showing our construction for the CE loss, it is insightful to discuss how previous works do this in the binary case with labels $y_{b,i}\in \{\pm1 \}$, classifier vector $\w\in\R^d$ and  binary margin $\gamma_{b} \coloneqq \max_{\lVert \w\rVert \leq 1} \min_{i \in [n]} y_{b,i} \w^\top \hb_i$. For exponential loss, \citet{gunasekar2018characterizing} showed that $\lVert \nabla \Lc(\w) \rVert_{} \geq \gamma_b \Lc(\w)$. For logistic loss $\ell(t)=\log(1+\exp(-t))$, \citet{zhang2024implicit} proved $\lVert \nabla \Lc(\w) \rVert_{1} \geq \gamma_b \Gc(\w)$, where $\Gc(\w) = \frac{1}{n} \sum_{i=1}^n |\ell'(y_{b,i} \w^\top \hb_i)|$ and $\ell'$ is the first-order derivative. In both cases, one can take the common form $\Gc_b(\w)=\frac{1}{n} \sum_{i=1}^n |\ell'(y_{b,i} \w^\top \hb_i)|$. The proof relies on showing $\gamma \leq \min_{\rb \in \triangle^{n-1}} \lVert \Hb^T \rb \rVert_{}$ via Fenchel Duality \citep{telgarsky2013margins,gunasekar2018characterizing} and appropriately choosing  $\rb$.

 In the multiclass setting, where the loss function is vector-valued, it is unclear how to extend the binary proof or definition of $\Gc(\W)$. To this end, we realize that the key is in the proper manipulation of the gradient inner product $\inp{\Ab}{-\nabla\Lc(\W)}$ (for arbitrary matrix $\Ab\in\R^{k\times d}$). The CE gradient evaluates to $\nabla\Lc(\W)=\frac{1}{n} \sum_{i=1}^n(\eb_{y_i}-\sft{\W\hb_i})\hb_i^\top$ and using the fact that $\sft{\W\hb_i}\in\triangle^{k-1}$, it turns out that we can express (details in Lemma \ref{lem:CE logit loss properties}):  $\inp{\Ab}{-\nabla\Lc(\W)} = \frac{1}{n}\sum\nolimits_{i\in[n]}\sum\nolimits_{c\neq y_i}\sfti{c}{\W\hb_i}(\eb_{y_i}-\eb_c)^\top\Ab\hb_i\,.$
\\
 This  motivates defining $\Gc(\W)$ as:
 \begin{align}\label{eq:G defn}
    \Gc(\W) \coloneqq \frac{1}{n}\sum\nolimits_{i\in[n]}(1-\sfti{\yi}{\W\hb_i})
    \,.
\end{align}
 The lemma below, following from the inner-product calculation above and our definition of $\Gc(\W)$, 
confirms this is the right choice. For convenience, denote $s_{i c} \coloneqq \sfti{c}{\W \hb_i}$, for $i\in[n],c\in[k]$.

\begin{lemma}[Lower bounding the gradient dual-norm] \label{lem:lemma_main_G} 
For any $\W\in\R^{k\times d}$ and any entry-wise or Schatten p-norm $\norm{\cdot}$ with $p \geq 1$, it holds that $\norm{\nabla\Lc(\W)}_{*}  \geq \gamma\cdot \Gc(\W) $, where $\norm{\cdot}_{*}$ is the dual-norm. 
\end{lemma}

The lemma completes the first task: lower bounding the gradient's dual norm. Importantly, the factor in front of $\Gc(\W)$ is the margin $\gamma$ w.r.t. the norm $\norm{\cdot}$, which is crucial in the forthcoming  analysis.

\paragraph{$\Gc(\W)$ and second-order term} We now show how to bound the second-order term in \eqref{eq:loss taylor main}. For this, we establish the following essential lemma whose proof relies on the relationships in \eqref{eq:A_rela}.

\begin{lemma} \label{lem:hessian_bound_main}
     For any entry-wise or Schatten p-norm $\norm{\cdot}$ with $p\geq1$, any $\s\in\Delta^{k-1}$ in the $k$-dimensional simplex, any index $c\in[k]$, and $\vb \in\R^k$, it holds that
     \[
     \vb^\top \left(\diag{\s}-\s\s^\top\right)\vb \leq 4 (1-s_c) \norm{\vb \vb^T}. 
     \]
\end{lemma}
\begin{proof} Let $\Sb:= \diag{\s}-\s\s^\top $ and $q\geq 1$ such that $1/p+1/q=1$. By norm duality, it holds that
\begin{align*}
        \vb^\top\Sb\vb = \tr\left({\Sb\vb\vb^\top}\right) \leq \|\Sb\|_{q} \|\vb\vb^\top\| \leq \none{\Sb} \|\vb\vb^\top\|, 
\end{align*}
where $\norm{\cdot}_q$ is the dual of $\norm{\cdot}$ and the second inequality is by \eqref{eq:A_rela}. Direct calculation yields $\none{\Sb} = 2\sum\nolimits_{c\in[k]} s_c(1-s_c)$. The advertised bound then follows by noting the following $\sum_{c\in[k]} s_c(1-s_c)\leq 2(1-s_{c'})$ for any $c'\in[k]$ (verified in Lemma \ref{lem:trivial softmax} in App. \ref{sec: G and L}).
\end{proof}

Next, we apply the above lemma with $\vb\leftarrow\Deltab\hb_i$ and $c\leftarrow y_i$, and further use the inequalities: $\lVert \vb \vb^T \rVert_{p} = \lVert \vb \rVert_{p}^2 \leq \lVert \Deltab \rVert_{p}^2 \lVert \hb \rVert_{q}^2$ for entry-wise norms and $\snorm{\vb\vb^\top}{p} =  \|\vb\|_2^2 \leq  \snorm{\Deltab}{\infty}^2 \|\hb\|_2^2 \leq \snorm{\Deltab}{p}^2 \|\hb\|_2^2\,$ for Schatten norms. Together with Ass. \ref{ass:data_bound}, this upper bounds the second-order term in the CE loss expansion in terms of the proxy function:
\[
2B^2\|{\Deltab}\|^2\cdot\frac{1}{n}\sum\nolimits_{i\in[n]}(1-\sfti{y_i}{\W\hb_i})\,.
\]

\paragraph{Properties of $\Gc(\W)$} 
We now show that $\Gc(\W)$  meets the second desiderata: being a good proxy for the loss $\Lc(\W)$. This is rooted in the elementary relationships between $\Gc(\W)$ and $\Lc(\W)$, which are used in the various parts of the proof. Below, we summarize these key relationships. 

\begin{lemma}[Properties of $\Gc(\W)$ and $\Lc(\W)$]
\label{lem:properties_of_G_L_main}
Let $\W \in \R^{k \times d}$. The followings hold: (i)
Under Ass. \ref{ass:data_bound}, $2B \cdot \Gc(\W) \geq \norm{\nabla\Lc(\W)}_{*} $; (ii) $1\geq \frac{\Gc(\W)}{\Lc(\W)} \geq 1-\frac{n\Lc(\W)}{2} $; (iii) If $\W$ satisfies $\Lc (\W) \leq \frac{\log 2}{n}$ or $\Gc(\W) \leq \frac{1}{2n}$, then $\Lc(\W) \leq 2 \Gc(\W)$.

\end{lemma}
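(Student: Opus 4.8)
The plan is to reduce all three claims to elementary one-variable inequalities after the substitution $p_i := s_{iy_i} = \sfti{y_i}{\W\hb_i}\in(0,1]$, so that $\Lc(\W)=\tfrac1n\sum_{i\in[n]}(-\log p_i)$ and $\Gc(\W)=\tfrac1n\sum_{i\in[n]}(1-p_i)$. For part (i), I would start from the closed form $\nabla\Lc(\W)=\tfrac1n\sum_i(\eb_{y_i}-\s_i)\hb_i^\top$ with $\s_i=\sft{\W\hb_i}$ (as recorded before Lemma~\ref{lem:lemma_main_G}, cf. Lemma~\ref{lem:CE logit loss properties}), apply the triangle inequality for $\none{\cdot}$, and use that for a rank-one matrix $\none{\vct{u}\vct{v}^\top}=\|\vct{u}\|_1\|\vct{v}\|_1$. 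Combining this with $\|\hb_i\|_1\le B$ (Ass.~\ref{ass:data_bound}) and the identity $\|\eb_{y_i}-\s_i\|_1=(1-p_i)+\sum_{c\neq y_i}s_{ic}=2(1-p_i)$ (which uses $\s_i\in\triangle^{k-1}$), summing over $i$ yields $\none{\nabla\Lc(\W)}\le\tfrac1n\sum_i 2B(1-p_i)=2B\,\Gc(\W)$.

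For part (ii) (where I assume $\Lc(\W)>0$, since otherwise all $p_i=1$ and both $\Gc,\Lc$ vanish), the upper bound $\Gc(\W)\le\Lc(\W)$ is just the termwise inequality $1-p\le-\log p$ on $(0,1]$, i.e. $\log p\le p-1$. For the lower bound I would set $a_i:=-\log p_i\ge0$, so $1-p_i=1-e^{-a_i}$, and use the second-order estimate $1-e^{-a}\ge a-\tfrac{a^2}{2}$ valid for all $a\ge0$ (checked by noting the difference and its first two derivatives are nonnegative at $0$). Then
\[
n\Gc(\W)=\sum_i(1-e^{-a_i})\ \ge\ \sum_i a_i-\tfrac12\sum_i a_i^2\ \ge\ \sum_i a_i-\tfrac12\Big(\sum_i a_i\Big)^2=n\Lc(\W)-\tfrac{n^2}{2}\Lc(\W)^2,
\]
where the middle step uses $\sum_i a_i^2\le(\sum_i a_i)^2$ for nonnegative $a_i$; dividing by $n\Lc(\W)$ gives $\Gc(\W)/\Lc(\W)\ge 1-\tfrac{n}{2}\Lc(\W)$.

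For part (iii) the key observation is that either hypothesis forces $p_i\ge\tfrac12$ for every $i$: if $\Lc(\W)\le\tfrac{\log2}{n}$ then $-\log p_i\le\sum_j(-\log p_j)=n\Lc(\W)\le\log2$, while if $\Gc(\W)\le\tfrac1{2n}$ then $1-p_i\le\sum_j(1-p_j)=n\Gc(\W)\le\tfrac12$. On $[\tfrac12,1]$ one has the termwise inequality $-\log p\le 2(1-p)$, which follows since $g(p):=2(1-p)+\log p$ satisfies $g(1)=0$ and $g'(p)=\tfrac1p-2\le0$ on $[\tfrac12,1]$, hence $g\ge0$ there. Summing over $i$ gives $\Lc(\W)\le 2\Gc(\W)$ in both cases.

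I do not expect a genuine obstacle here; the whole content is selecting the right scalar bounds. The step that most needs care is the lower bound in (ii), where the quadratic-in-$\Lc$ slack must be produced by chaining $1-e^{-a}\ge a-\tfrac{a^2}{2}$ with $\sum a_i^2\le(\sum a_i)^2$ rather than a single termwise bound; and in (iii) the minor insight is that the two superficially different hypotheses both collapse to the single condition $p_i\ge\tfrac12$, after which one termwise inequality closes both cases (so the $\log 2<1$ numerology is never actually needed).
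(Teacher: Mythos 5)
Your proposal is correct and follows essentially the same route as the paper: part (i) via the triangle inequality, $\none{\vct{u}\vct{v}^\top}=\|\vct{u}\|_1\|\vct{v}\|_1$, and $\|\eb_{y_i}-\s_i\|_1=2(1-s_{iy_i})$; part (ii) via $\log p\le p-1$ and the chain $1-e^{-a}\ge a-\tfrac{a^2}{2}$ with $\sum_i a_i^2\le(\sum_i a_i)^2$, exactly as in the paper's Lemma \ref{lem:G and L}. The only (harmless) divergence is in (iii), where you reduce both hypotheses to the termwise bound $-\log p\le 2(1-p)$ on $[\tfrac12,1]$, while the paper handles the $\Lc\le\frac{\log 2}{n}$ case directly through the ratio bound of part (ii) and uses a termwise inequality (valid on $[0.2032,1]$) only for the $\Gc\le\frac{1}{2n}$ case.
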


Lemma \ref{lem:properties_of_G_L_main} (i) extends Lemma \ref{lem:lemma_main_G} by establishing a sandwich relationship between $\Gc(\W)$ and the gradient's dual norm. The lemma's statements (ii) and (iii) show that $\Gc(\W)$ can substitute for the loss - it lower bounds $\Lc(\W)$ and serves as an upper bound when either $\Lc(\W)$ or $\Gc(\W)$ is sufficiently small. Specifically, the ratio ${\Gc(\W)}\big/{\Lc(\W)}$ converges to 1 as the loss decreases, with the convergence rate depending on the rate of loss decrease. 
{The key property (ii) may seem algebraically complex, but it turns out (details in Lemma \ref{lem:G and L} in App. \ref{sec: G and L}) that both sides of the sandwich relationship follow from the elementary fact that $\forall x>0: 1-x \leq e^{-x}\leq 1-x + x^2/2$.} 

\tocless\section{Implicit Bias of Normalized Steepest Descent} \label{sec:nsd_main}

We now leverage our construction of $\Gc(\W)$ to show that the margin of NSD's iterates converges to the data margin defined w.r.t. the same entry-wise or Schatten p-norm that is used to define the algorithm (refer to eqns. \eqref{eq:p-margin} and \eqref{eq:nsd_main} for the definitions of margin and NSD). We only highlight the key steps in the proof and defer details to App. \ref{sec:app_nsd}. 

\paragraph{NSD Descent} We start by showing a descent property. By applying Lemmas \ref{lem:lemma_main_G} and \ref{lem:hessian_bound_main} to lower and upper bound the first and second order terms in Eq.  \eqref{eq:loss taylor main} yields
\begin{align*}
    \Lc(\W_{t+1}) &\leq \Lc(\W_t) - \gamma \eta_t \Gc(\W_t) + 2 \eta_t^2 B^2 \Gc(\W_{t}) \sup_{\zeta \in [0,1]} \frac{\Gc(\W_t - \zeta \eta_t \Deltab_t)}{\Gc(\W_{t})}.
\end{align*}
Algebraic manipulations of the definition of $\Gc(\W)$ and the relationships in \eqref{eq:A_rela} allow us to bound the ratio in the right hand side. 
\begin{lemma} [Ratio of $\Gc(\W)$] \label{lem:G_ratio_main}
    For any $\psi \in [0,1]$, we have the following: $\frac{\Gc(\W - \psi \eta \Deltab)}{\Gc(\W)} \leq e^{2 B \eta \psi \ninf{\Deltab}} \leq e^{2 B \eta \psi \norm{\Deltab}}$ (note that the second inequality is by \eqref{eq:A_rela}).
\end{lemma}
From this and 
 $\norm{\Deltab_t} \leq 1$ for NSD, 
we obtain
\begin{align}
    \Lc(\W_{t+1}) &\leq \Lc(\W_{t}) - \gamma \eta_t (1 - \alpha_{s_1}  \eta_t )\Gc(\W_t), \label{eq:sign_descent_main} 
\end{align}
where $\alpha_{s_1} = 2 B^2 e^{2B\eta_0} / \gamma$. Given a decay learning rate of the form $\eta_t = \Theta(\frac{1}{t^a})$, we can conclude that the loss starts to monotonically decrease after some time. 

\paragraph{NSD Unnormalized Margin} We now use  the descent property in  \eqref{eq:sign_descent_main} to  lower bound the unnormalized margin. An intermediate result towards this is recognizing that sufficiently small loss $\Lc (\W) \leq \frac{\log 2}{n}$ guarantees $\W$ separates the data (Lemma \ref{lem:sep} in App. \ref{sec: G and L}). The descent property ensures that NSD iterates will eventually achieve this loss threshold, thereby guaranteeing separability.  The main result of this section, shows that eventually the iterates achieve separability with a substantial (unnormalized) margin. 
\begin{lemma}[NSD Unnormalized Margin] \label{lem:sign_unnormalized_margin_main}
    Assume there exists $\tilde{t}$ such that $\Lc(\W_t) \leq \frac{\log 2}{n}, \forall t > \tilde{t}$. Then, it holds that for all $ t\geq \tilde{t}$ ($\alpha_{s_2} = 2B^2 e^{2B\eta_0}$)
    \begin{align}
        \min_{i \in [n], c \neq y_i} (\eb_{y_i} - \eb_c)^T \W_t \hb_i \geq \gamma \sum_{s=\tilde{t}}^{t-1} \eta_s \frac{\Gc(\W_s)}{\Lc(\W_s)} - \alpha_{s_2} \sum_{s=\tilde{t}}^{t-1} \eta_s^2.\label{eq:unnormalized_signgd_main}
    \end{align} 
\end{lemma}
\paragraph{NSD Margin Convergence} Proceeding from Eq. \eqref{eq:unnormalized_signgd_main} requires showing the convergence of the ratio $\frac{\Gc(\W)}{\Lc(\W)}$. The two key ingredients are given in Lemma \ref{lem:properties_of_G_L_main} (ii) and (iii). Lemma \ref{lem:properties_of_G_L_main} (ii) suggests that it is sufficient to study the convergence of $\Lc(\W)$, which is captured in \eqref{eq:sign_descent_main}. However, to obtain an explicit rate via \eqref{eq:sign_descent_main}, we need to rewrite $\Gc(\W_t)$ in terms of $\Lc(\W_t)$. This is where Lemma \ref{lem:properties_of_G_L_main} (iii)  helps. Putting them together, we arrive at the following theorem (see Thm. \ref{thm:nsd} and Cor. \ref{cor:nsd} for details). 
\begin{theorem}
\label{thm:nsd_main}
Suppose that Ass. \ref{ass:sep}, \ref{ass:learning_rate_1}, and \ref{ass:data_bound} hold. 
Set learning rate $\eta_t = \Theta(\frac{1}{t^{1/2}})$. The following holds for the margin gap of NSD's iterates 
\begin{align*}
\gamma - \frac{ \min_{i \in [n], c \neq y_i} (\eb_{y_i} - \eb_c)^T \W_t \hb_i } {\norm{\W_t}} \leq  \mathcal{O}(\frac{\log t + n}{t^{1/2}}).  
\end{align*}
\end{theorem}

\begin{remark}
    For margin convergence rates of NSD, \citet{nacson2019convergence} showed a rate of $\mathcal{O}(\frac{\log t}{t^{1/2}})$ in the binary setting, limited to the entry-wise p-norms and the exponential loss. Compared to this, our results hold for the more practical setting of multiclass data and CE loss. To the best of our knowledge, this is the first non-asymptotic result on the implicit bias of spectral-GD for linear multiclass separable data, and it holds for other p-norms as well. Upon completion of this work, we became aware of an update on the arXiv version of \citet{tsilivis2024flavors}, which includes an extension of their previous results to steepest descent w.r.t. the spectral norm. In comparison to ours, their gradient-flow analysis applies to homogeneous neural networks with the restriction of infinitesimal step-sizes. Moreover, it does not include normalization nor momentum (like Muon, which we analyze), and the convergence is (asymptotic) to a KKT point of a spectral-norm margin maximization problem.
\end{remark}

\tocless\section{Implicit Bias of Normalized Momentum Steepest Descent} \label{sec:nmd_main}

In this section, we study the implicit bias of NMD algorithms (proof details in App \ref{sec:app_nmd}). Similar to Sec. \ref{sec:nsd_main}, its updates are defined w.r.t. either the entry-wise or the Schatten norm $\norm{\cdot}$. The analysis relies on the relationships in \eqref{eq:A_rela} and we show the same proxy function $\Gc(\W)$ naturally appears. Given that the NMD updates satisfy $\norm{\Deltab} \leq 1$, the second-order term in \eqref{eq:loss taylor main} is bounded in the same way as NSD. The main difference is in bounding the first-order term as shown by the following lemma. 

\begin{lemma} \label{lem:first_order_muon} Let $\Omegab_t := \M_t - \nab_t$. It holds for all $t \geq 0$ that
\begin{align*}
    \langle \nab_t , \W_{t+1} - \W_t \rangle \leq 2 \eta \norm{\Omegab_t}_{*} - \eta \gamma \Gc(\W_t)\,.
\end{align*}
\end{lemma}
Given the relationships in \eqref{eq:A_rela} hold for any $p \geq 1$, we can bound the dual norm of $\norm{\Omegab_t}_{*}$ via its sum norm (i.e. $\norm{\Omegab_t}_{*} \leq \none{\Omegab}$). Given the goal is to bound all the terms in the Taylor expansion \eqref{eq:loss taylor main} via the proxy function $\Gc(\W_t)$, an natural next step is to bound $\none{\Omegab_t}$ using the same proxy function. To do this, we decompose the proxy function \textbf{per-class-wise}, and apply the per-class proxy functions to bound the entries of $\Omegab_t$ associated with their corresponding classes. Concretely, we write the function $\Gc(\W)$ in two equivalent ways:
$\Gc(\W) = \sum_{c \in [k]} \frac{1}{n} \sum_{i \in [n], y_i = c} (1 - s_{iyi}) = \sum_{c \in [k]} \frac{1}{n} \sum_{i \in [n], y_i \neq c} s_{ic}$, which motivate the following definitions of the per-class proxy functions:

 \begin{align*}
 \Gc_c (\W) \coloneqq \nicefrac{1}{n} \sum\nolimits_{i \in [n], y_i = c} (1 - s_{iy_i}), \quad \text{and} \quad 
 \Qc_c(\W) \coloneqq \nicefrac{1}{n} \sum\nolimits_{i \in [n], y_i \neq c}  s_{ic}.
 \end{align*}
 
Next, we bound the entries in each row of $\Omegab_t$ (thus belonging to the same class) via the corresponding proxy functions $\Gc_c(\W_t)$ and $\Qc_c(\W_t)$ to arrive at the following lemma. Its proof utilizes the nice properties of softmax map given in Lemma \ref{lem:unified_helper} in App. \ref{sec: grad and hess}. 
\begin{lemma} \label{lem:first_G_main} Suppose that Ass. \ref{ass:sep}, \ref{ass:learning_rate_1}, \ref{ass:learning_rate_2}, and \ref{ass:data_bound} hold. Let $c \in [k]$ and $j \in [d]$. There exists time $t_0$ such that  for all $t \geq t_0$ and for $\alpha_M := B(1-\beta_1)c_2$: 
\begin{align*}
    |\mathbf{M}_t[c,j] -(1-\beta_{1}^{t+1}) & \nabla \Lc(\W_t)[c,j]|  \leq \alpha_M \eta_t \bigl( \Gc_c(\W_t)+ \Qc_c(\W_t) \bigr)\,.
\end{align*} 
\end{lemma}

Given the result in Lemma \ref{lem:first_G_main}, we can show that $|\Omegab_t[c,j]| \leq \beta_1^{t+1} |\nabla \Lc(\W_t)[c,j]| + \alpha_M \eta_t \Tc_c(\W_t)$, where $\Tc_c(\W_t)$ is defined to be $\Tc_c(\W_t) := \Gc(\W_t) + \Qc(\W_t)$. Then, we sum over indices $c \in [k]$ and $j \in [d]$ and apply $\none{\nab} \leq 2B \cdot \Gc(\W)$ (from Lemma \ref{lem:properties_of_G_L_main} (i)) to obtain:  
\begin{lemma} It holds for all $t \geq 0$ that $\none{\Omegab_t} \leq 2 B \beta_1^{t/2} \Gc(\W_t) + 2 \alpha_M d \eta_t \Gc(\W_t)$.
\end{lemma}

This completes the bound on the first-order term for NMD algorithms via the proxy $\Gc(\W)$. The rest proof follows similar steps as NSD. We note that without the above per-class decomposition, an extra $k$-factor would appear in the second term of the bound on $\none{\Omegab_t}$ (and thus also show up in the final rate). We state the main theorem for NMD algorithms. 

\begin{theorem} Under the setting of Lem. \ref{lem:first_G_main}, the margin gap of NMD with $\eta_t = \Theta(\frac{1}{t^{1/2}})$ is $O(\frac{d \log t + d n}{t^{1/2}})$.
\end{theorem}

\begin{remark} \citet{wang2022does} studied  implicit bias of un-normalized GD with momentum, and showed its iterates  converge asymptotically to the max 2-norm margin solution. In contrast, our rates are non-asymptotic and cover a much wider family of algorithms converging to non-Euclidean geometric margins (w.r.t. entry-wise/Schatten norms). Note the convergence rate of NMD matches that of NSD (Thm. \ref{thm:nsd_main}) up to a factor of $d$. {It could be interesting to remove this dependence in a future work.}
\end{remark}

\paragraph{Implicit Bias of Adam} Finally, observing that Adam \citep{kingma2014adam} (without the stability constant, i.e., eqns \eqref{eq: adam1_app}, \eqref{eq: adam2}, and \eqref{eq: adam3} in App. \ref{sec:sec_adam}) shares the same form of momentum as NMD and the (entry-wise) updates are bounded by some constant as shown in \citet{zhang2024adam} and \citet{xie2024implicit}. Thus, our analysis extends to Adam. Concretely, a similar proof strategy can be adapted once a bound on the second gradient moment via the proxy function is established (Lemma \ref{lem:second_G}). In App. \ref{sec:sec_adam}, we prove a $O(\frac{d\log(t) + nd}{t^{1/3}})$ max-norm margin convergence rate for Adam (details in Thm. \ref{thm:adam} and Cor. \ref{cor:adam}).
\tocless\section{Experiments} \label{sec:main_exp}
\paragraph{Synthetic Experiments} We generate snythetic multiclass separable data as follows: $k=10$ class centers are sampled from a standard normal distribution; within each class, data is sampled from  normal distribution $\mathcal{N}(0, \sigma^2 I),\sigma=0.1$. We set $d = 25$, sample $50$ data points for each class, and ensure that margin is positive (thus data is separable). 
We run different algorithms to minimize CE loss using   $\eta_t = \frac{\eta_0}{t^{a}}$ ($\eta_0=0.1$ for SignGD and NGD; $\eta_0 = 0.05$ for Spectral-GD and Muon), where (based on our theorems) $a$ is set to $\nicefrac{1}{2}$. We apply truncated SVD on the gradient and momentum for Spectral-GD and Muon respectively. Data margins w.r.t. different norms are found via CVXPY \citep{diamond2016cvxpy}. We denote max-margin classifiers defined w.r.t. the 2-norm, the max-norm, and the spectral-norm as $\Vb_2$, $\Vb_{\infty}$, and $\Vb_{\spec}$, respectively. Based on the margin-gap results in Figure \ref{fig:main_margin}, we observe that SignGD, NGD, and Spectral-GD favor max-norm, 2-norm, and spectral-norm margin respectively. Besides this, the behavior of Muon is very similar to that of Spectral-GD (in agreement with our theories). Figure \ref{fig:main_cor} further confirms that the iterates of these algorithms correlate well with the corresponding max margin separators. Experiments on Signum, NMD-GD, and Adam are provided in App. \ref{sec:add_exp} and App. \ref{sec:sec_adam}. 

\paragraph{Two-layer Neural Network} We extend the experiments to the non-linear classification setting using the cross-entropy loss. We sample 100 data points from each of the 10 classes of the MNIST dataset \citep{lecun2002gradient}. The model is a two-layer neural network  with the hidden dimension being 100 (the first and second layer weights are denoted as $\Vb$ and $\Wb$ respectively). The training is done in two ways: (a) Train the first-layer weight with the second-layer weight fixed and (b) Train both the first and second-layer weights. For options (a) and (b), the respective spectral-norm margins of the overall network are defined as 
\begin{align*}
    \text{(a)} \, \gamma_a^{\Vb}:= \min_{i \in [n], c\neq y_i} \frac{(\eb_{y_i} - \eb_c)^T \W \sigma( \Vb \hb_i) } {|||\Vb|||_{\infty}}, \quad \text{(b)} \, \gamma_b^{\Vb, \Wb}:= \min_{i \in [n], c\neq y_i} \frac{(\eb_{y_i} - \eb_c)^T \W \sigma( \Vb \hb_i) } {\max\{|||\Vb|||_{\infty}, |||\Wb|||_{\infty}\}},
\end{align*}
where $\sigma(\cdot)$ is the sigmoid function. In Figure \ref{fig:main_margin_net}, we track the quantities $\gamma_a^{\Vb_t}$ (Figure \ref{fig:one_layer} and \ref{fig:one_layer_mom}) and $\gamma_a^{\Vb_t, \Wb_t}$ (Figure \ref{fig:two_layer} and \ref{fig:two_layer_mom}) as a function of the iteration counter $t$. For both definitions, we observe that the spectral-norm margin of the iterates of Spectral-GD and Muon grow faster than other algorithms. Hence, the observations in the linear settings can also hold in the non-linear settings.        
\begin{figure*}[t!]
    \captionsetup{width=1.\textwidth}
    \centering
    \begin{subfigure}[t]{0.24\textwidth}
        \includegraphics[width=\textwidth]{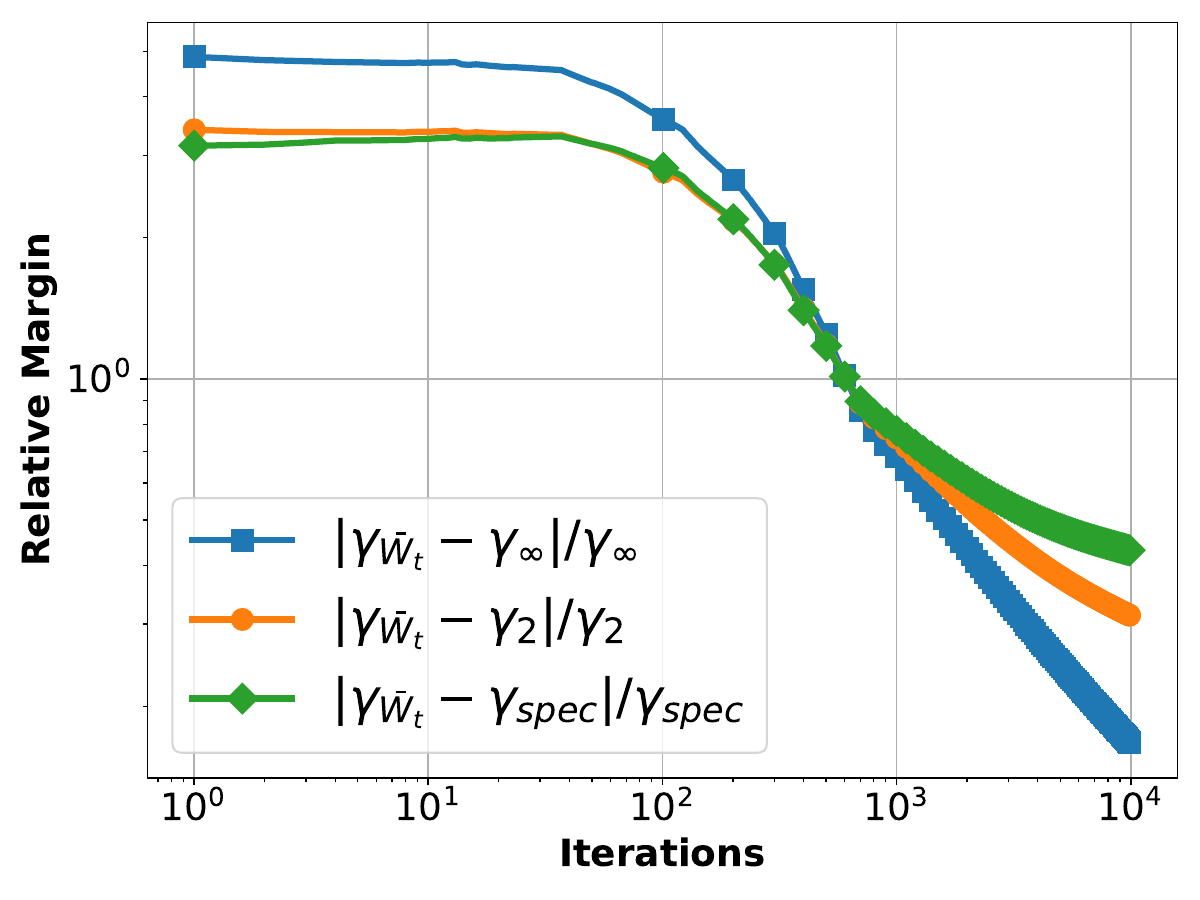}
        \captionsetup{width=1.\textwidth}
        \caption{SignGD}
        \label{fig:rel_sign}
    \end{subfigure}
    \hfill
    \begin{subfigure}[t]{0.24\textwidth}
        \includegraphics[width=\textwidth]{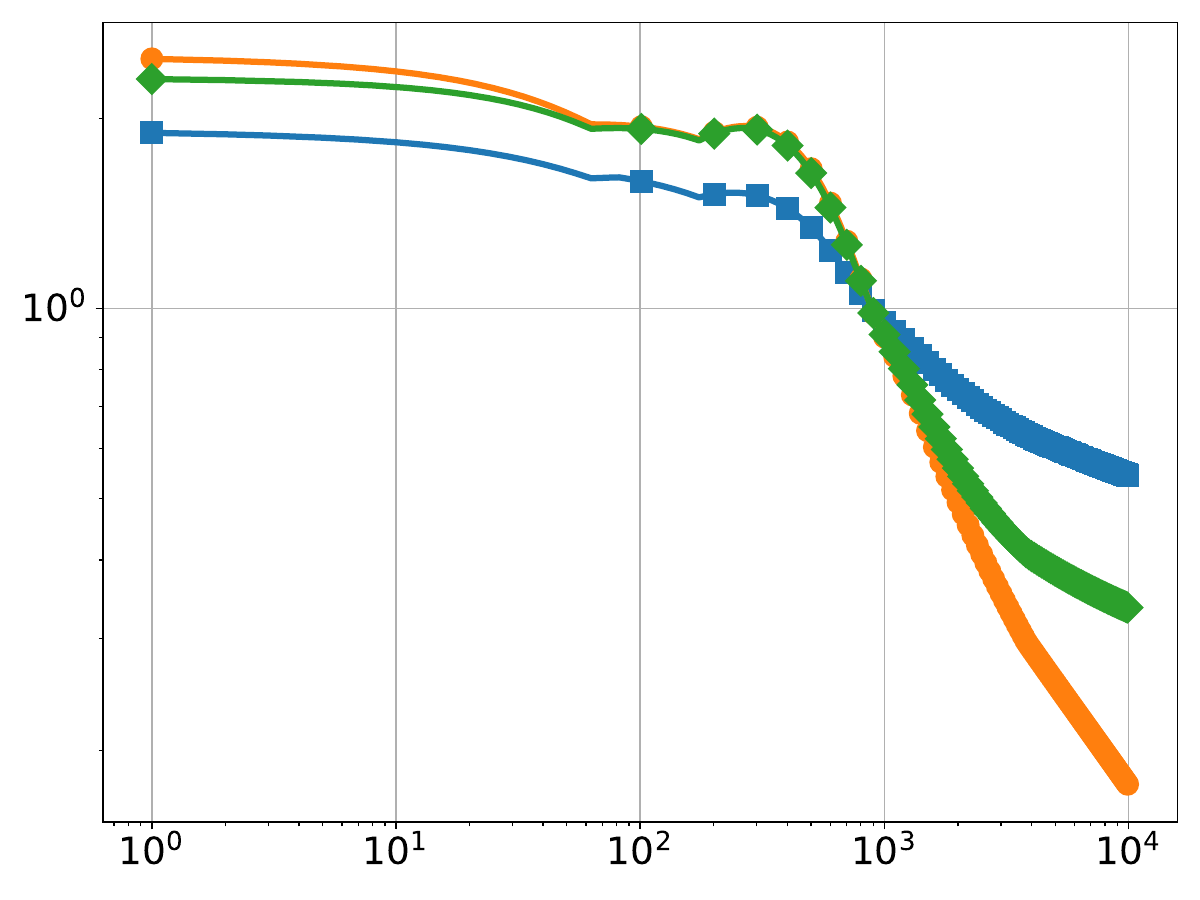}
        \captionsetup{width=1.\textwidth}
        \caption{NGD}
        \label{fig:rel_gd}
    \end{subfigure}
    \hfill
    \begin{subfigure}[t]{0.24\textwidth}
        \includegraphics[width=\textwidth]{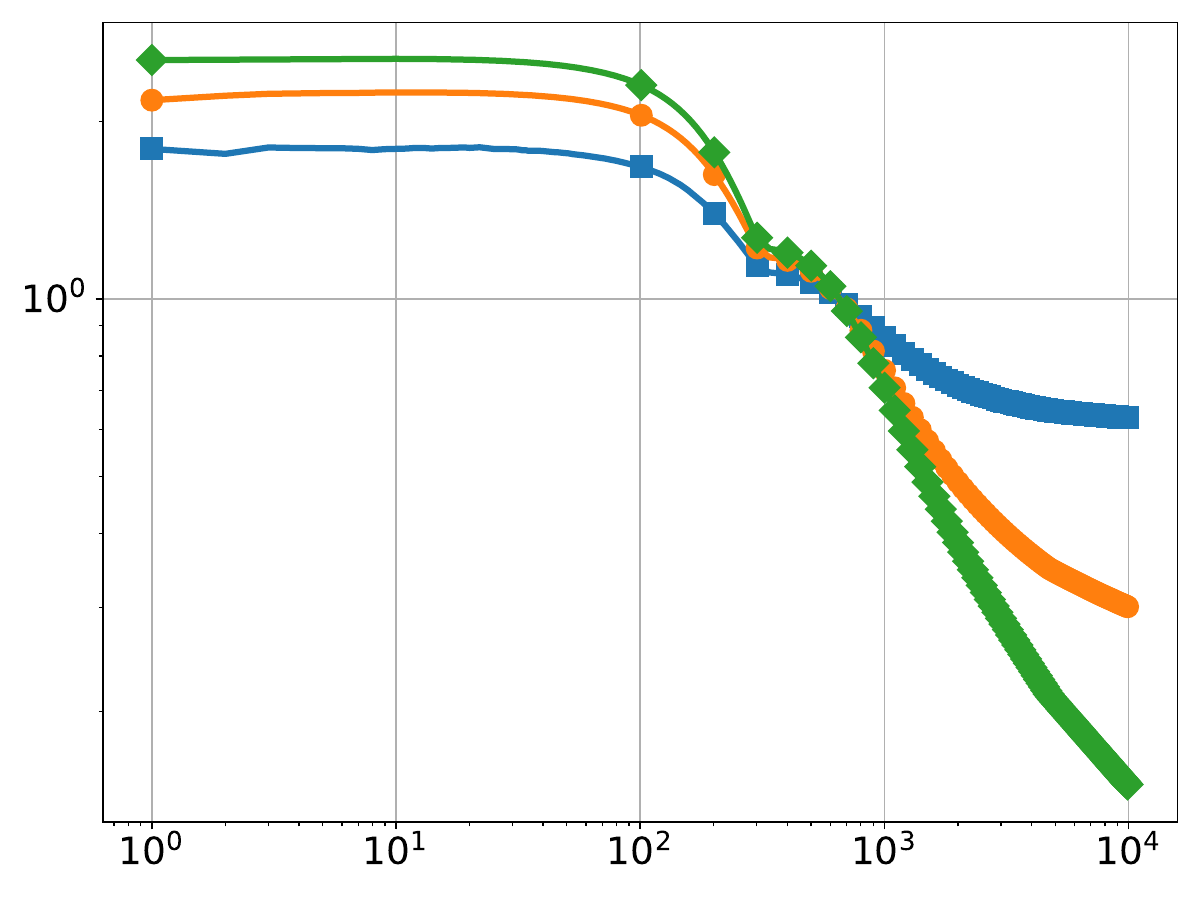}
        \captionsetup{width=1.\textwidth}
        \caption{Spectral-GD}
        \label{fig:rel_spec}
    \end{subfigure}
    \hfill
    \begin{subfigure}[t]{0.24\textwidth}
        \includegraphics[width=\textwidth]{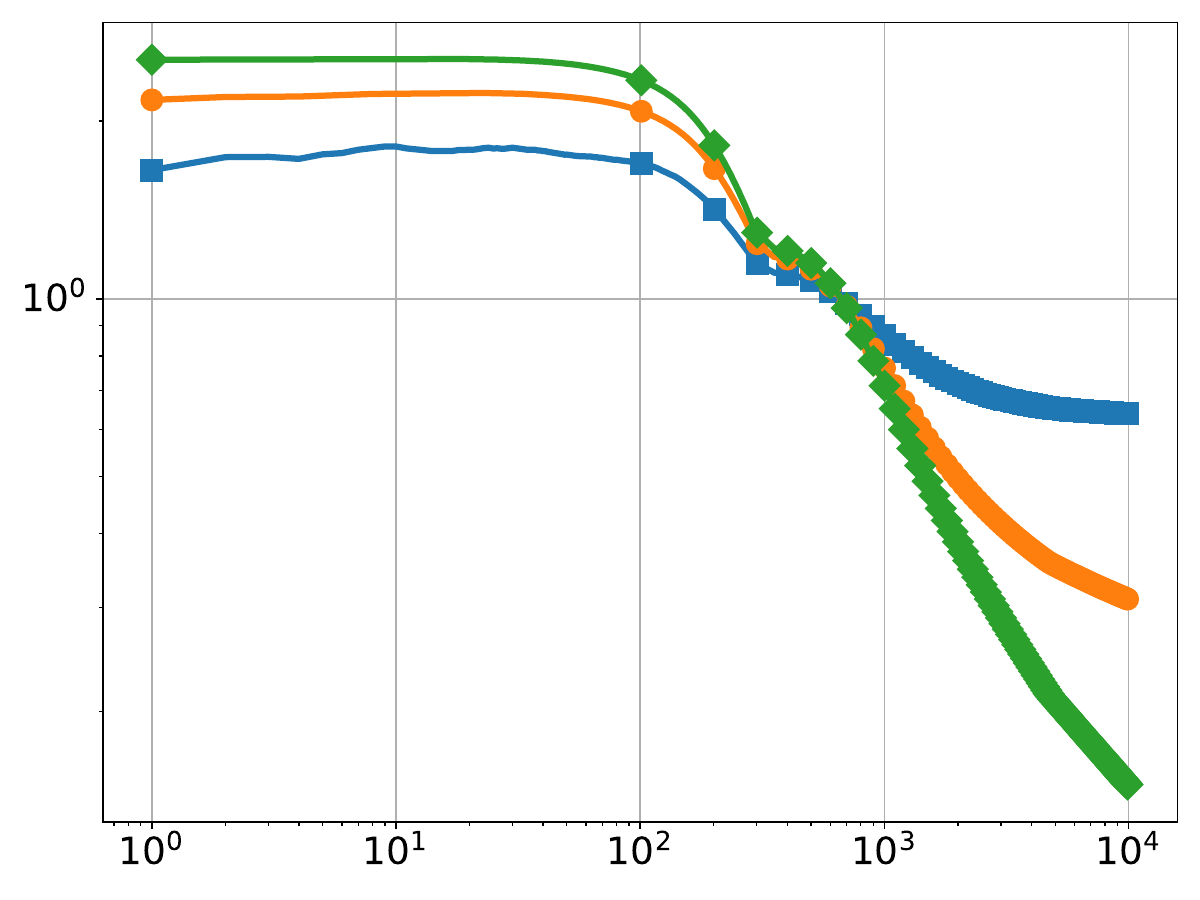}
        \captionsetup{width=1.\textwidth}
        \caption{Muon}
        \label{fig:rel_muon}
    \end{subfigure}
    \caption{\textbf{(a)} 
     We normalize the iterates of SignGD w.r.t. the max-norm (denoted as $\bar{\W}_t$), compute the margin (denoted as $\gamma_{\bar{\W}_t}$), then plot its difference to data margins $\gamma_{\lVert \cdot \rVert_{\infty}}$, $\gamma_{\lVert \cdot \rVert_{2}}$, and $\gamma_{\snorm{\cdot}{\infty}}$ (note that the margin difference is further divided by the corresponding data margin for comparisons). SignGD favors the margin defined w.r.t. the max-norm. (\textbf{b}, \textbf{c}, and \textbf{d}) Same as {(a)} with SignGD (max-norm) replaced by NGD (2-norm), Spectral-GD (spectral-norm), and Muon (spectral-norm), respectively. NGD favors the 2-norm margin, while Spectral-GD and Muon favor the spectral-norm margin.}
    \label{fig:main_margin}
\end{figure*} 

\begin{figure*}[t!]
    \captionsetup{width=1.\textwidth}
    \centering
    \begin{subfigure}[t]{0.24\textwidth}
        \includegraphics[width=\textwidth]{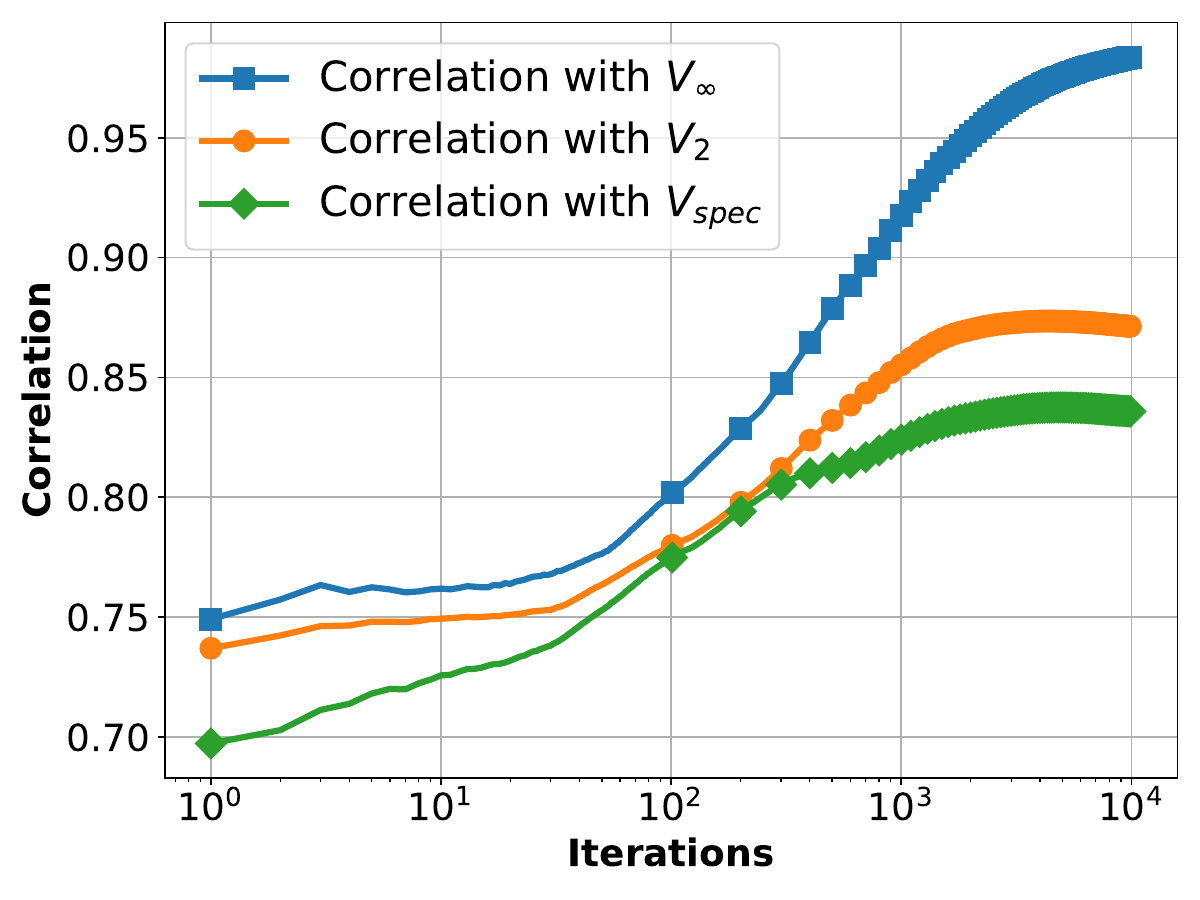}
        \captionsetup{width=1.\textwidth}
        \caption{SignGD}
        \label{fig:sign_cor}
    \end{subfigure}
    \hfill
    \begin{subfigure}[t]{0.24\textwidth}
        \includegraphics[width=\textwidth]{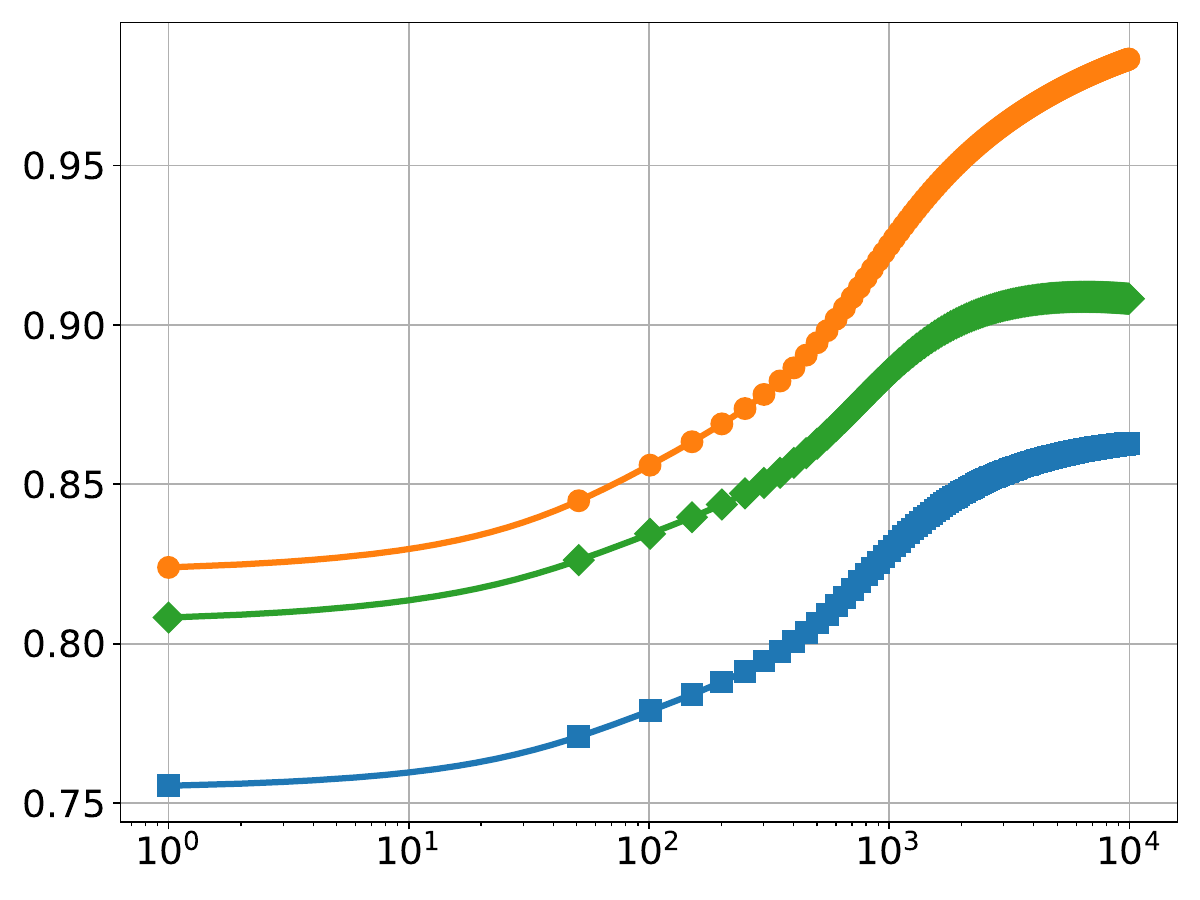}
        \captionsetup{width=1.\textwidth}
        \caption{NGD}
        \label{fig:gd_cor}
    \end{subfigure}
    \hfill
    \begin{subfigure}[t]{0.24\textwidth}
        \includegraphics[width=\textwidth]{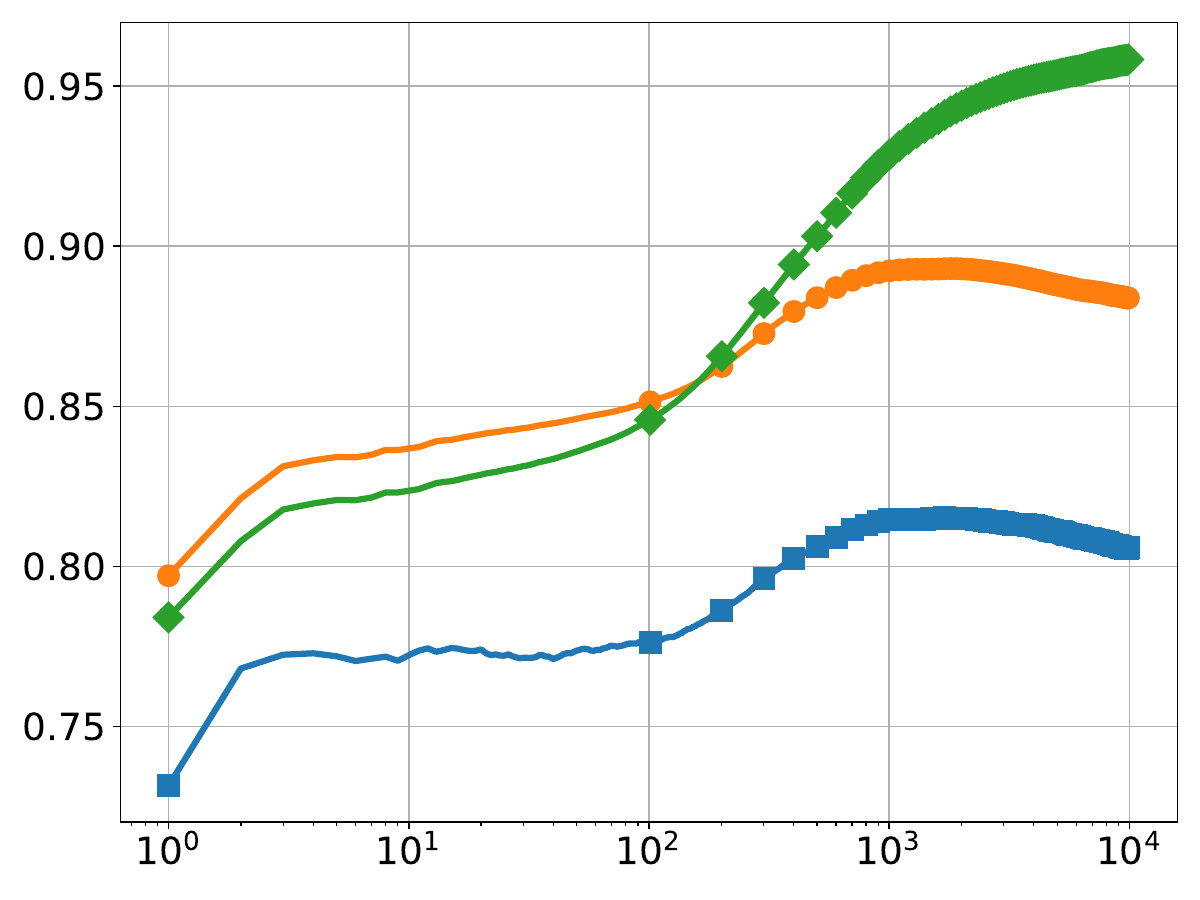}
        \captionsetup{width=1.\textwidth}
        \caption{Spectral-GD}
        \label{fig:spec_cor}
    \end{subfigure}
    \hfill
    \begin{subfigure}[t]{0.24\textwidth}
        \includegraphics[width=\textwidth]{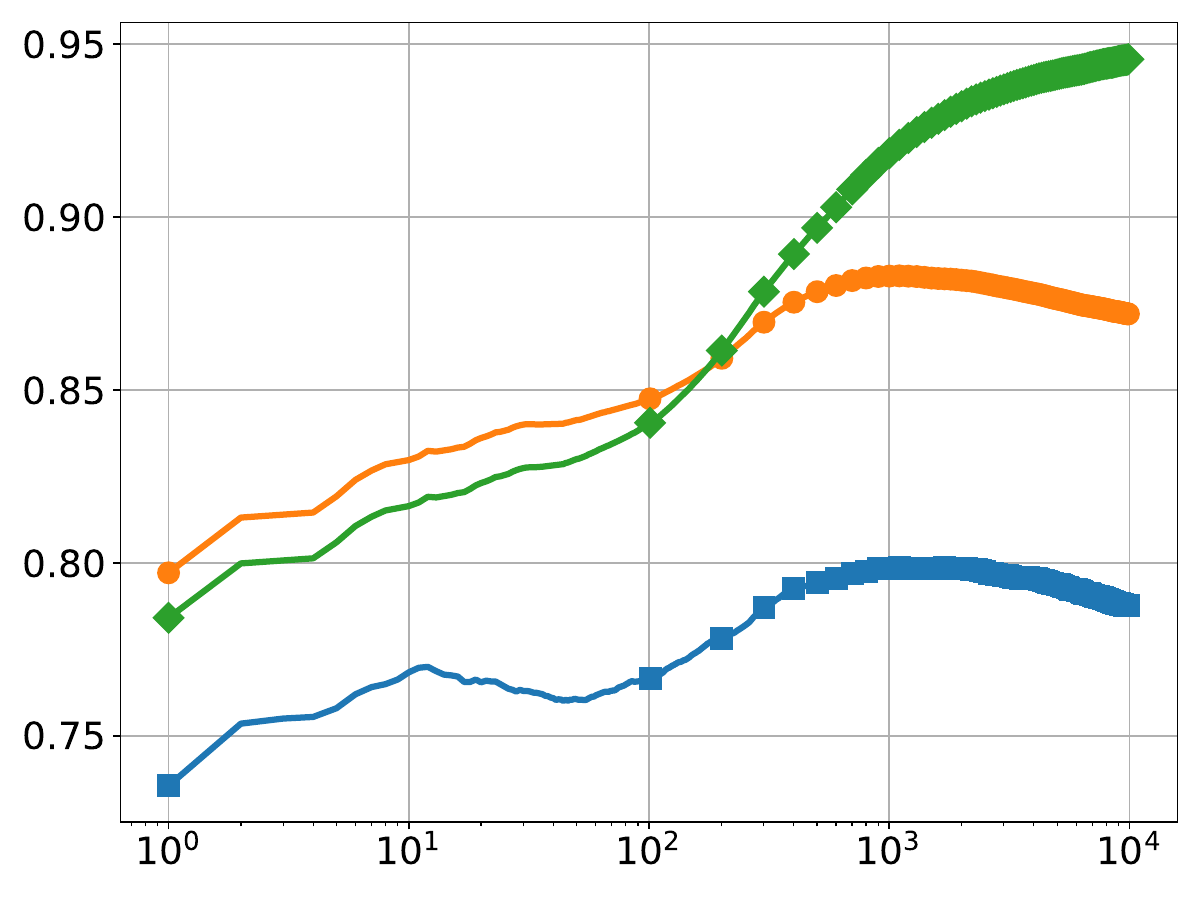}
        \captionsetup{width=1.\textwidth}
        \caption{Muon}
        \label{fig:cor_muon}
    \end{subfigure}
    \caption{\textbf{(a)} Correlations between the iterates of SignGD ($\W_t$) and max margin separators $\Vb_{\infty}$, $\Vb_{2}$,  and $\Vb_{\spec}$ against iterations (correlation defined as $\frac{\langle \W, \Vb \rangle} {\lVert \W \rVert_2 \lVert \Vb \rVert_2}$). (\textbf{b}, \textbf{c}, and \textbf{d}) Same as {(a)} with SignGD replaced by NGD, Spectral-GD, and Muon, respectively. SignGD and NGD correlate well with $\Vb_{\infty}$ and $\Vb_{2}$, respectively, while Spectral-GD and Muon correlate well with $\Vb_{\spec}$.}
    \label{fig:main_cor}
\end{figure*}

\begin{figure*}[t!]
    \captionsetup{width=1.\textwidth}
    \centering
    \begin{subfigure}[t]{0.24\textwidth}
        \includegraphics[width=\textwidth]{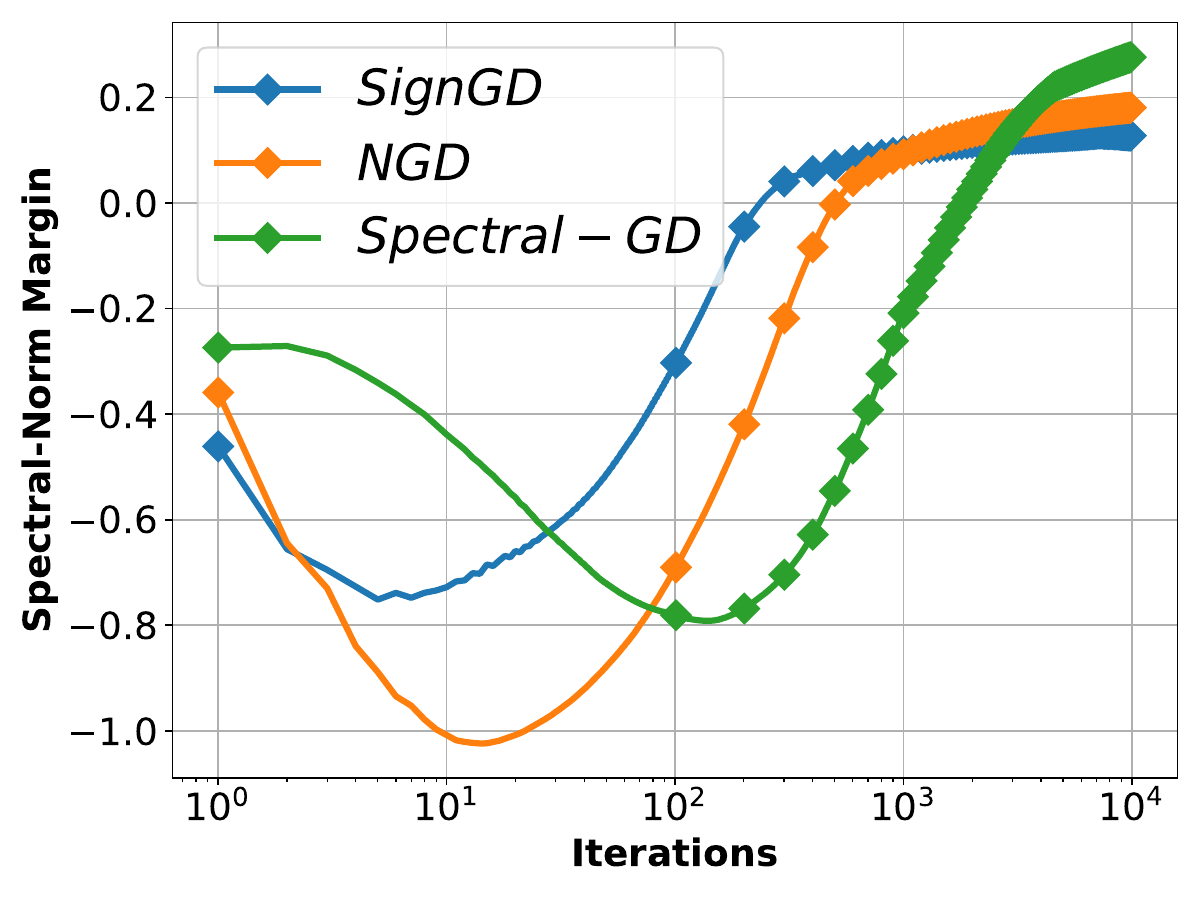}
        \captionsetup{width=1.\textwidth}
        \caption{Single Layer($\gamma_a^{\Vb}$)}
        \label{fig:one_layer}
    \end{subfigure}
    \hfill
    \begin{subfigure}[t]{0.24\textwidth}
        \includegraphics[width=\textwidth]{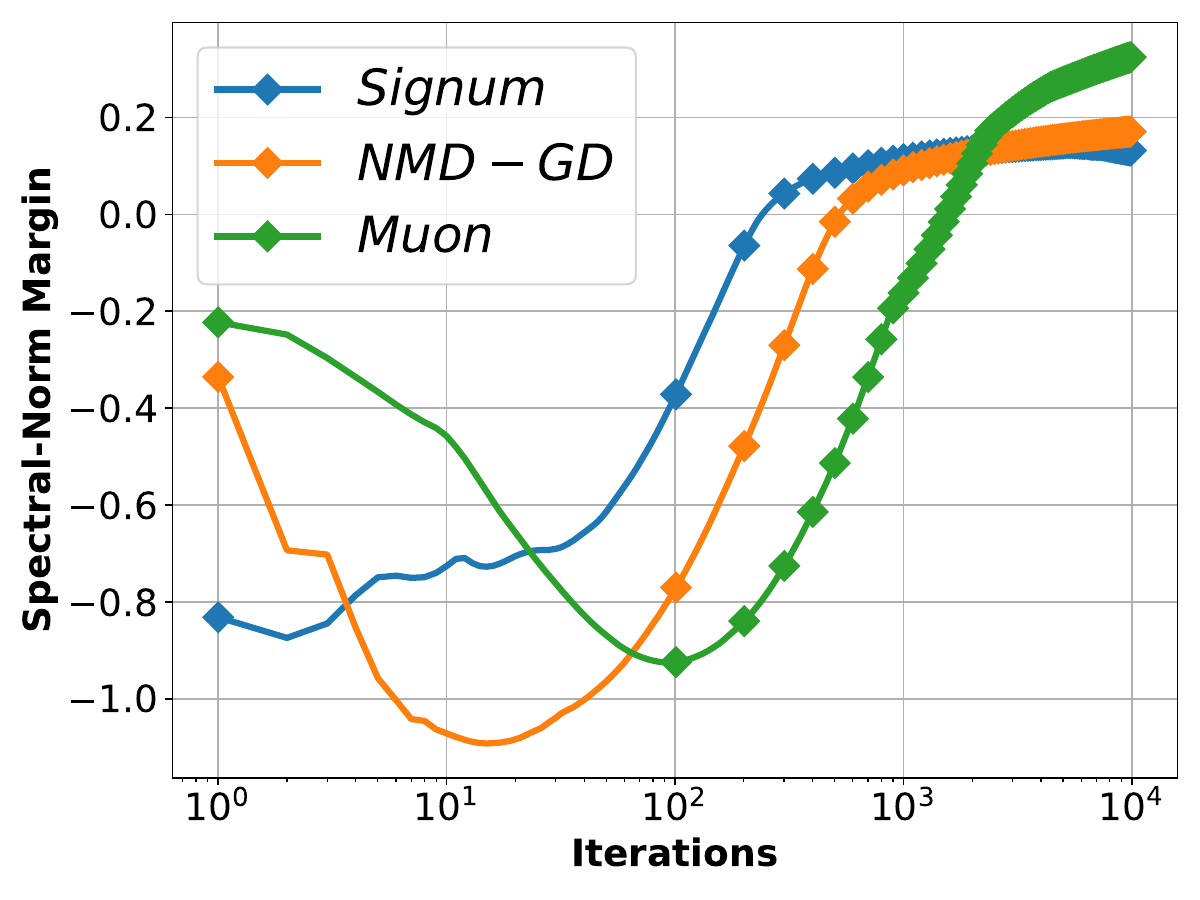}
        \captionsetup{width=1.\textwidth}
        \caption{Single Layer($\gamma_a^{\Vb}$)}
        \label{fig:one_layer_mom}
    \end{subfigure}
    \hfill
    \begin{subfigure}[t]{0.24\textwidth}
        \includegraphics[width=\textwidth]{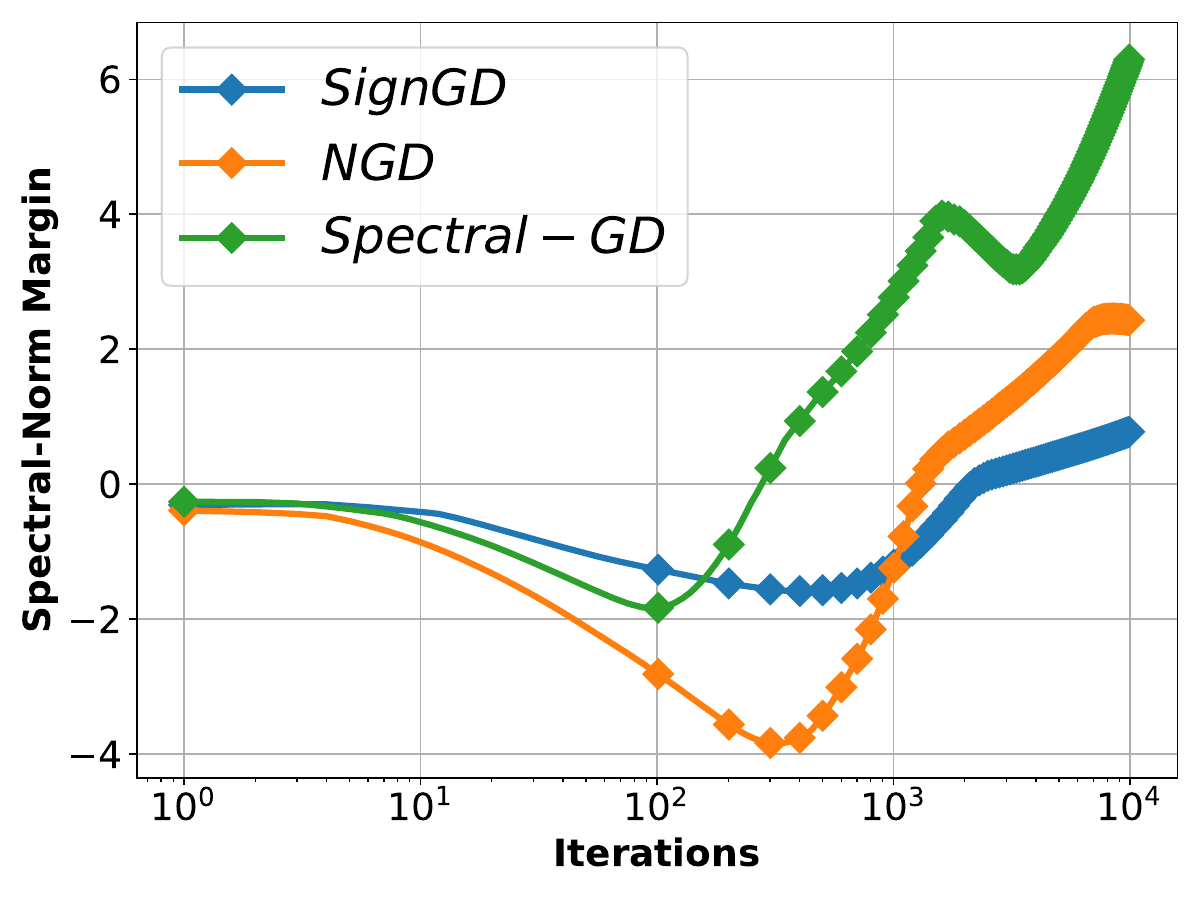}
        \captionsetup{width=1.\textwidth}
        \caption{Joint Training($\gamma_b^{\Vb,\Wb}$)}
        \label{fig:two_layer}
    \end{subfigure}
    \hfill
    \begin{subfigure}[t]{0.24\textwidth}
        \includegraphics[width=\textwidth]{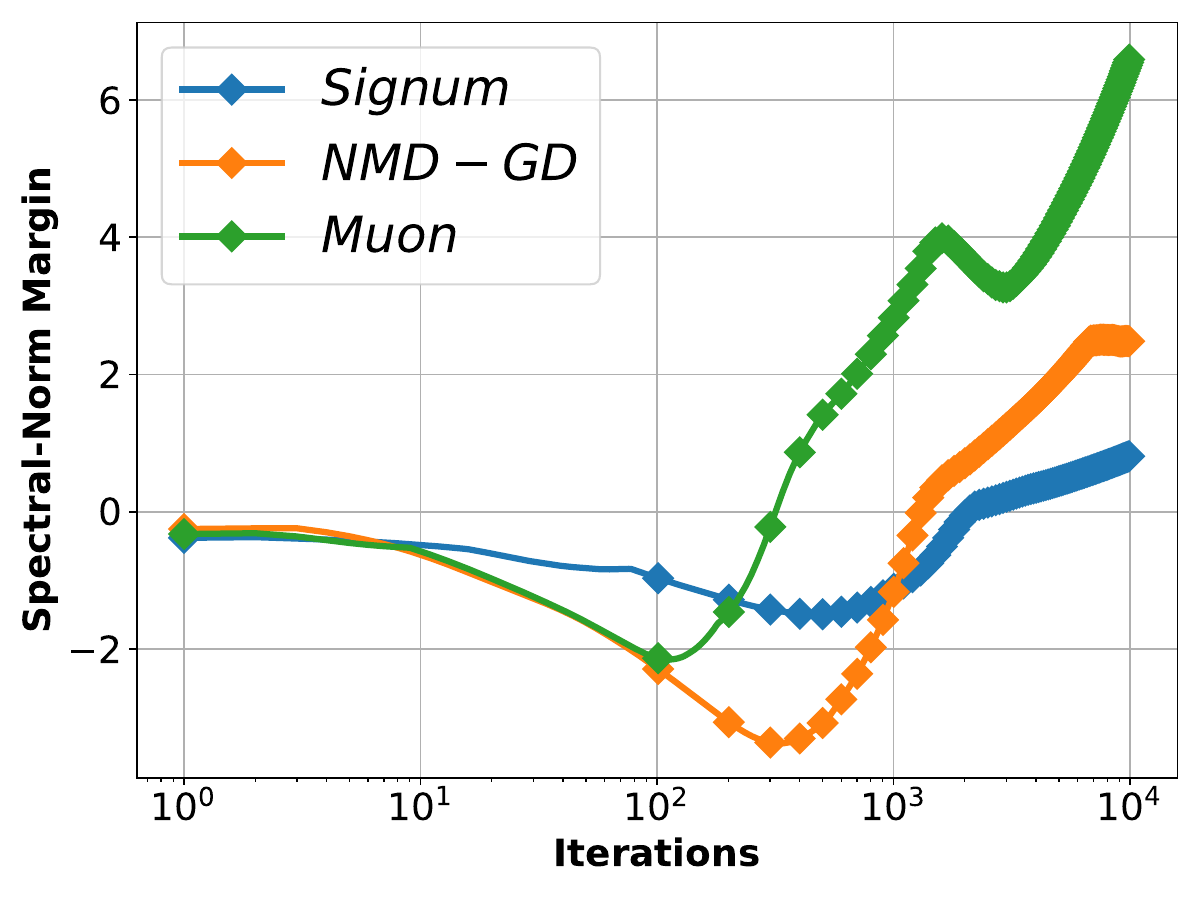}
        \captionsetup{width=1.\textwidth}
        \caption{Joint Training($\gamma_b^{\Vb,\Wb}$)}
        \label{fig:two_layer_mom}
    \end{subfigure}
    \caption{\textbf{(a)} Spectral-norm margin $\gamma_a^{\Vb_t}$ as a function of $t$ for SignGD, NGD, and Spectral-GD. \textbf{(b)} Same as (a) with algorithms replaced by Signum, NMD-GD, and Muon, respectively. \textbf{(c)} Spectral-norm margin $\gamma_b^{\Vb_t, \Wb_t}$ as a function of $t$ for SignGD, NGD, and Spectral-GD. \textbf{(d)} Same as (c) with algorithms replaced by Signum, NMD-GD, and Muon, respectively.}
    \label{fig:main_margin_net}
\end{figure*} 

\tocless\section{Related Works}\label{sec:related work}
Starting with GD, the foundational result by \citet{soudry2018implicit} showed that gradient descent optimization of logistic loss on linearly separable data converges in direction to the $L_2$ max-margin classifier at a rate $O(1/\log(t))$. Contemporaneous work by \citet{ji2019implicit} generalized this by relaxing the data separability requirement. \citet{ji2020gradient} later connected these findings to earlier work on regularization paths of logistic loss minimization \citep{Rosset2003MarginML}, which enabled extensions to other loss functions (e.g., those with polynomial tail decay). More recently, \citet{wu2024implicit} extends these results to the large step size regime with the same $O(1/\log(t))$ rate. The relatively slow convergence rate to the max-margin classifier motivated investigation into adaptive step-sizes. \citet{nacson2019convergence} showed that NGD with decaying step-size $\eta_t=1/\sqrt{t}$ achieves $L_2$-margin convergence at rate $O(1/\sqrt{t})$. This rate was improved to $O(1/t)$ by \citet{ji2021characterizing} using constant step-sizes, and further to $O(1/t^2)$ through a specific momentum formulation \citep{ji2021fast}. Besides linear classifications, implicit bias of GD has been studied for least squares \citep{gunasekar2017implicit,gunasekar2018characterizing,azizan2021stochastic}, homogeneous  \citep{lyu2019gradient,ji2020directional, wu2024large} and non-homogeneous neural networks \citep{cai2025implicit}, as well as matrix factorization \citep{gunasekar2017implicit}; see \citet{vardi2023implicit} for a survey.

All the above mentioned works focus almost exclusively on binary classification. The noticeable gap in analysis of multiclass classification in most existing literature is highlighted by \citet{seli}, and more recently emphasized by \citet{ravi2024implicit}, who extended the implicit bias result of \citet{soudry2018implicit} to multiclass classification for losses with exponential tails, including CE, multiclass exponential, and PairLogLoss. Their approach leverages a framework of \citet{wang2024unified} that allows multiclass losses and separability conditions to be written in margin-based forms similar to binary cases. However, these works only focus on GD with the $L_2$-geometry. In this work, we consider a wide range of algorithms with different geometries for multiclass classification.  

Beyond GD, \citet{gunasekar2018characterizing} and \citet{nacson2019convergence} showed that steepest descent w.r.t. entry-wise p-norms yields updates that in the limit maximize the margin w.r.t the same norm. 
\citet{sun2022mirror,sun2023unified} showed that the iterates of mirror descent  with the potential function chosen as the p-th power of the p-norm converge to the classifier that maximizes the margin w.r.t. the p-norm. In both cases, the convergence rate  is slow at $O(1/\log(t))$. \citet{wang2023faster} further improved the rates for both steepest descent and mirror descent when $p \in (1,2]$. Note that all these results apply only to the exponential loss. More recently, \citet{tsilivis2024flavors} showed that the iterates of steepest descent algorithms converge to a KKT point of a generalized margin maximization problem in homogeneous neural networks.
Moreover, the implicit bias of Adam (with or without the stability constant) has been studied in both linear and non-linear settings. \citet{wang2021implicit} demonstrated the normalized iterates of Adam (with non-negligible stability constant) converge to a KKT point of a $L_2$-margin maximization problem for homogeneous neural networks. \citet{zhang2024implicit} studied the implicit bias of Adam without the stability constant on (linearly) binary separable data. They showed that unlike GD, the Adam's iterates converge to a solution that maximizes the margin w.r.t the $L_{\infty}$-norm. The study of excluding the stability constant is also the focus of another  recent work on the implicit bias of AdamW \citep{xie2024implicit}, where the authors again establish that convergence aligns with the $L_{\infty}$ geometry. 
\tocless\section{Conclusion} \label{sec:main_conc}
We have characterized the margin convergence rates of Spectral-GD and Muon for multiclass linear separable data. Given they are special cases of NSD and NMD w.r.t the spectral norm, the analysis is done on a wider scale by studying NSD/NMD w.r.t any entry-wise or Schatten p-norms. Thus, the rates also hold for optimizers of other geometries, such as the sign-descent (max-norm) or gradient-descent (2-norm) family. We further extend the analysis to Adam using the same framework. Future directions include removing the factor-$d$ from the bound of NMD, obtaining a tighter convergence rate for Adam, and studying other related algorithms such as Shampoo that involves non-diagonal preconditioners. It is also important to extend our results to (multiclass) non-separable settings \cite{ntp} and nonlinear models such as diagonal neural nets \citep{pesme2021implicit}, self-attention mechanisms \cite{tarzanagh2023maxmargin,ataee2023max,tarzanagh2023transformers,deora2024implicit,julistiono2024optimizing} and homogeneous neural nets \citep{lyu2019gradient, tsilivis2024flavors,cai2025implicit}, helping further bridge the gap to deep learning practices. Finally, 
from a complementary statistical perspective, future work could seek identifying specific scenarios where margin maximization with respect to norms other than Frobenius leads to better generalization (extending a long line of prior works, e.g.,  \cite{salehi2019impact,deng2019model,koehler2021uniform,muthukumar2020classification,wang2021benign,donhauser2022fast,taheri2023generalization,varma2024benefits,akhtiamov2024regularized,tsigler2025benign}).

\section*{Acknowledgement}
This work is funded partially by  NSERC Discovery Grants RGPIN-2021-03677 and RGPIN-2022-03669, Alliance GrantALLRP 581098-22, a CIFAR AI Catalyst grant, and the Canada CIFAR AI Chair Program. 

\bibliography{refs} 

\newpage

\newpage
\appendix

\tableofcontents
\addtocontents{toc}{\protect\setcounter{tocdepth}{1}}
\section{Additional Experiments} \label{sec:add_exp}
We present additional experiments on Signum and NMD-GD in this section. Based on Figure \ref{fig:signum}, their margin convergence properties are very similar to those of SignGD and NGD respectively (see Sec. \ref{sec:main_exp} in the main text for the experimental setup).

\begin{figure*}[h!]
    \captionsetup{width=1.\textwidth}
    \centering
    \begin{subfigure}[t]{0.24\textwidth}
        \includegraphics[width=\textwidth]{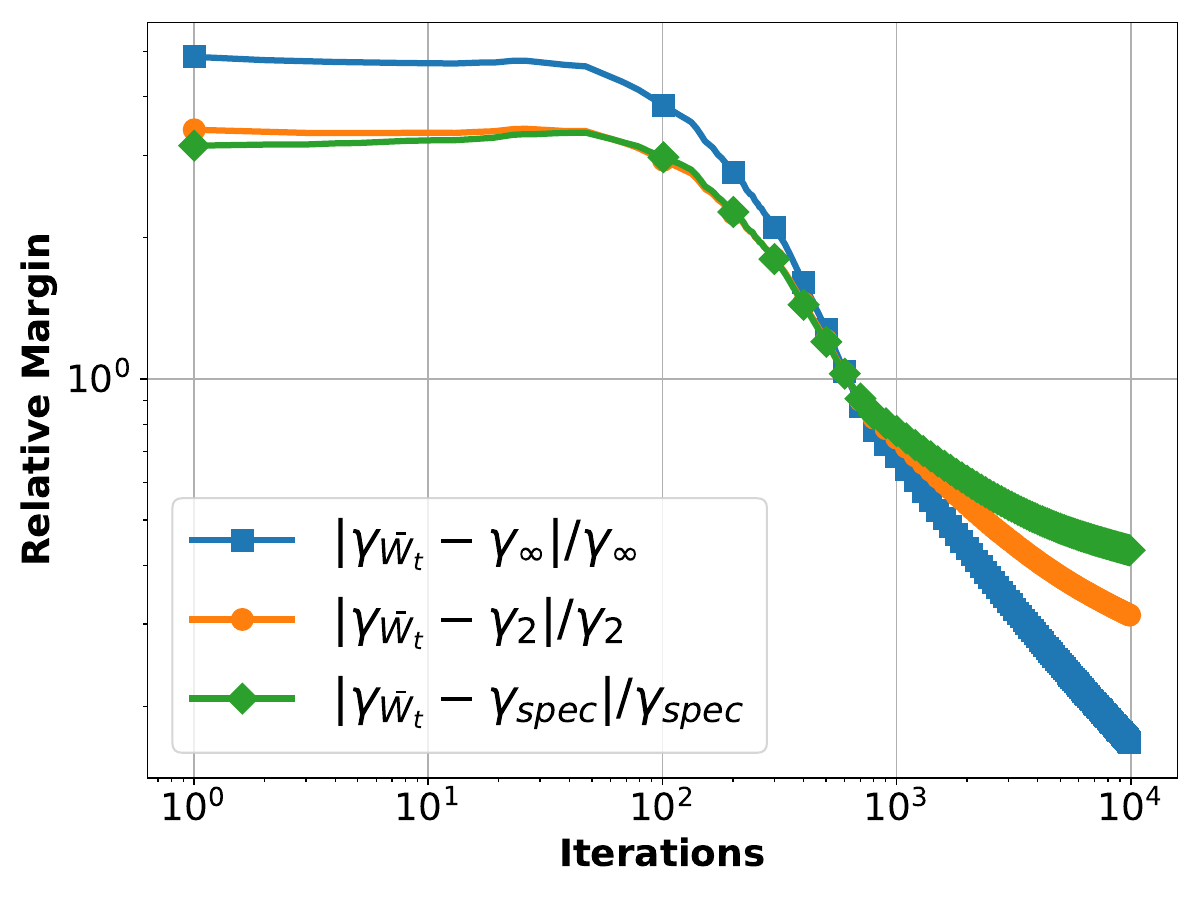}
        \captionsetup{width=1.\textwidth}
        \caption{Signum}
        \label{fig:signum_margin}
    \end{subfigure}
    \hfill
    \begin{subfigure}[t]{0.24\textwidth}
        \includegraphics[width=\textwidth]{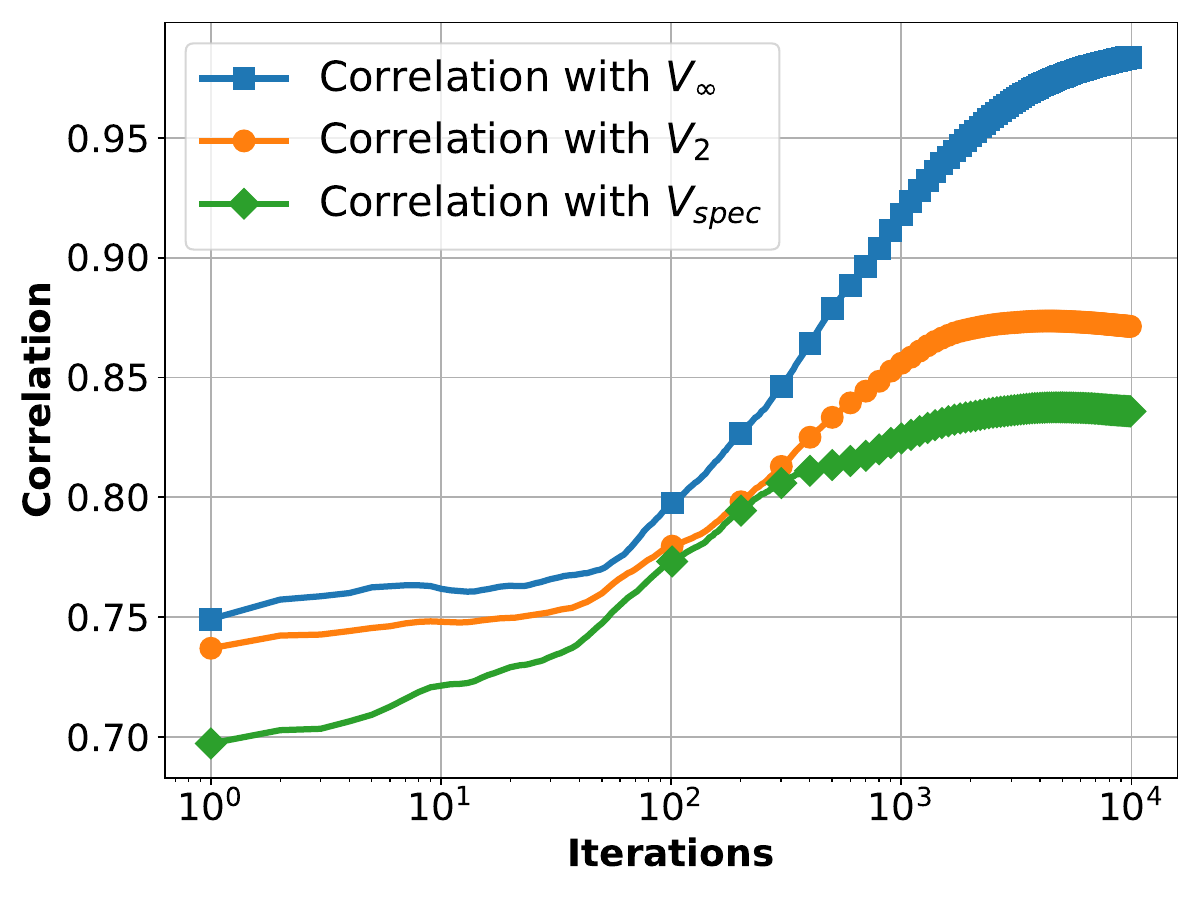}
        \captionsetup{width=1.\textwidth}
        \caption{Signum}
        \label{fig:signum_cor}
    \end{subfigure}
    \hfill
    \begin{subfigure}[t]{0.24\textwidth}
        \includegraphics[width=\textwidth]{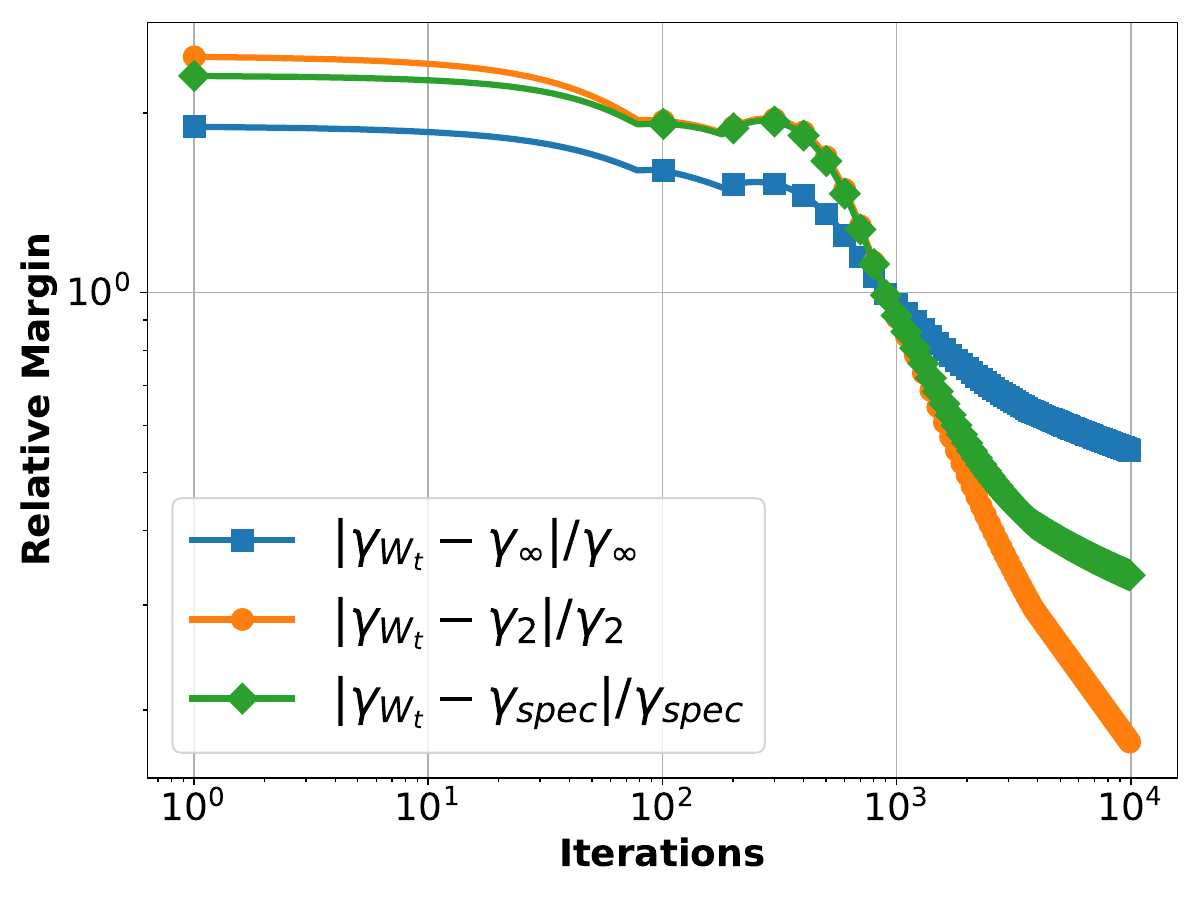}
        \captionsetup{width=1.\textwidth}
        \caption{NMD-GD}
        \label{fig:nmdGD_margin}
    \end{subfigure}
    \hfill
    \begin{subfigure}[t]{0.24\textwidth}
        \includegraphics[width=\textwidth]{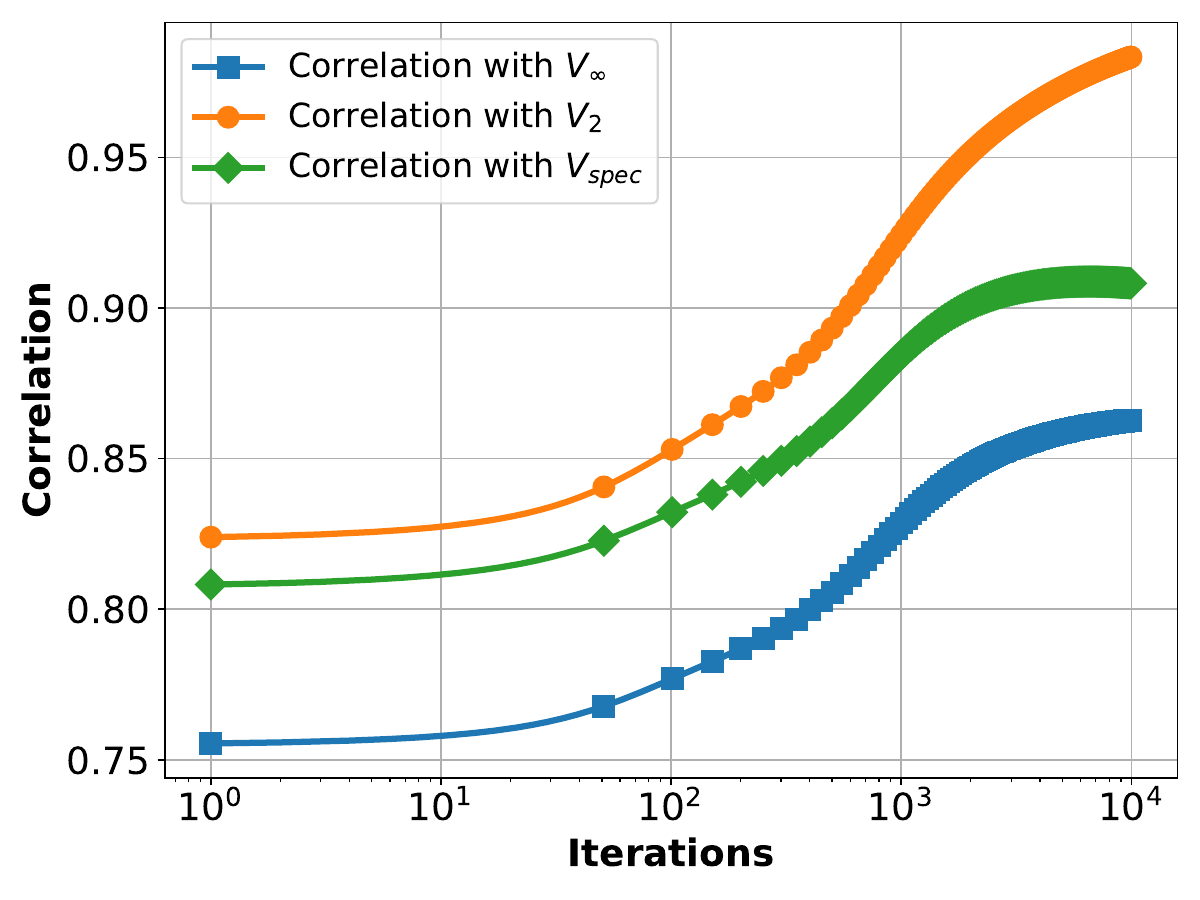}
        \captionsetup{width=1.\textwidth}
        \caption{NMD-GD}
        \label{fig:nmdGD_cor}
    \end{subfigure}
    \caption{Implicit bias of Signum and NMD-GD on multiclass separable data. \textbf{(a)} Relative margin gap of Signum's iterates against iterations. \textbf{(b)} Correlation of Signum's iterates to $\Vb_{\infty}$, $\Vb_{2}$, and $\Vb_{\spec}$ against iterations. See Figure \ref{fig:main_margin} and \ref{fig:main_cor} for the definitions of relative margin and correlation. \textbf{(c)} and \textbf{(d)} Same as (a) and (b) with Signum replaced by NMD-GD.}
    \label{fig:signum}
\end{figure*}

\section{Facts about CE loss and Softmax} \label{sec: grad and hess}
Lemma \ref{lem:CE logit loss properties} is on the gradient of the cross-entropy loss. It will be used for showing the form of $\Gc(\W)$ in \eqref{eq:G defn} lower bounds $\norm{\nabla \Lc(\W)}$ in Lemma \ref{lem:G and gradient}.  
\begin{lemma}[Gradient]\label{lem:CE logit loss properties}
   Let CE loss
   \[\Lc(\W):=-\frac{1}{n}\sum_{i\in[n]}\log\big(\sfti{\yi}{\W\hb_i}\big).\]
   For any $\W$, it holds 
    \begin{itemize}
    \item $\nabla\Lc(\W) = -\frac{1}{n}\sum_{i\in[n]}  \left(\eb_\yi-\s_i\right)\hb_i^\top = -\frac{1}{n}(\Y-\Sb)\Hb^\top$
        \item $\ones_k^\top\nabla\Lc(\W)=0$
        \item For any matrix $\Ab\in\R^{k\times d}$,  
        \begin{align}
            \inp{\A}{-\nabla\Lc(\W)} &= \frac{1}{n}\sum_{i} \left(1-s_{i\yi}\right)\left(\eb_\yi^\top\A\hb_i-\frac{\sum_{c\neq \yi} s_{ic}\, \eb_c^\top\A\hb_i}{\left(1-s_{i\yi}\right)}\right)\nn\\
            &=
            \frac{1}{n}\sum_{i\in[n]} {\sum_{c\neq \yi} s_{ic}\, (\eb_{\yi}-\eb_c)^\top\A\hb_i}\label{eq:CE gradient inp}
            \end{align}
    \end{itemize}
    where we simplify $\Sb:=\sft{\W\Hb} = [\s_1, \ldots, \s_n]\in \R^{k \times n}$.
    The last statement yields
    \begin{align}\label{eq:nabla_ineq_basic}
    \inp{\A}{-\nabla\Lc(\W)} \geq \frac{1}{n}\sum_{i\in[n]} \left(1-s_{i\yi}\right)\,\cdot\,\min_{c\neq \yi}\left(\eb_\yi-\eb_c\right)^\top\A\hb_i.
    \end{align}
\end{lemma}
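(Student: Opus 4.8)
The plan is to prove Lemma~\ref{lem:CE logit loss properties} by direct computation, treating each bullet point in turn, since everything follows from the closed form of the CE gradient and elementary properties of the softmax. First I would compute $\nabla\Lc(\W)$ by differentiating $-\frac1n\sum_i \log \sfti{\yi}{\W\hb_i}$ with respect to $\W$. Writing $\ellb_i = \W\hb_i$ and using the well-known identity $\nabla_{\ellb}\big(-\log\sfti{\yi}{\ellb}\big) = \sft{\ellb} - \eb_{\yi}$ (which itself comes from $\partial_{\ellb[c]}\log\big(\sum_{c'}e^{\ellb[c']}\big) = \sfti{c}{\ellb}$), together with the chain rule $\nabla_{\W}(\cdot) = \nabla_{\ellb}(\cdot)\,\hb_i^\top$, yields $\nabla\Lc(\W) = \frac1n\sum_i (\s_i - \eb_{\yi})\hb_i^\top = -\frac1n\sum_i(\eb_{\yi}-\s_i)\hb_i^\top$, and the matrix form $-\frac1n(\Y-\Sb)\Hb^\top$ follows by stacking columns. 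The second bullet, $\ones_k^\top\nabla\Lc(\W) = 0$, is immediate: $\ones_k^\top(\eb_{\yi}-\s_i) = 1 - \ones_k^\top\s_i = 1 - 1 = 0$ since $\s_i\in\triangle^{k-1}$.

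For the third bullet I would start from $\inp{\A}{-\nabla\Lc(\W)} = \frac1n\sum_i \inp{\A}{(\eb_{\yi}-\s_i)\hb_i^\top} = \frac1n\sum_i (\eb_{\yi}-\s_i)^\top\A\hb_i$. The goal is to rewrite $\eb_{\yi}-\s_i$ as a combination of difference vectors $\eb_{\yi}-\eb_c$. Using $\s_i = \sum_{c\in[k]} s_{ic}\eb_c$ and $\sum_{c\in[k]} s_{ic} = 1$, I can write $\eb_{\yi} - \s_i = \big(\sum_{c} s_{ic}\big)\eb_{\yi} - \sum_c s_{ic}\eb_c = \sum_{c} s_{ic}(\eb_{\yi}-\eb_c) = \sum_{c\neq \yi} s_{ic}(\eb_{\yi}-\eb_c)$, where the $c=\yi$ term drops out. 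Substituting gives exactly Eq.~\eqref{eq:CE gradient inp}. The first displayed equality in the bullet is then just a regrouping: pulling the common factor $(1-s_{i\yi}) = \sum_{c\neq\yi}s_{ic}$ out front, $\sum_{c\neq\yi}s_{ic}(\eb_{\yi}-\eb_c)^\top\A\hb_i = (1-s_{i\yi})\eb_{\yi}^\top\A\hb_i - \sum_{c\neq\yi}s_{ic}\eb_c^\top\A\hb_i$, which matches the claimed expression after dividing and multiplying the second term by $(1-s_{i\yi})$.

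Finally, for Eq.~\eqref{eq:nabla_ineq_basic} I would bound each summand in \eqref{eq:CE gradient inp} from below: since $s_{ic}\geq 0$ for all $c$ and $\sum_{c\neq\yi}s_{ic} = 1-s_{i\yi}$, the inner sum $\sum_{c\neq\yi}s_{ic}(\eb_{\yi}-\eb_c)^\top\A\hb_i$ is a nonnegative-weighted average (up to the total mass $1-s_{i\yi}$) of the quantities $(\eb_{\yi}-\eb_c)^\top\A\hb_i$, hence is at least $(1-s_{i\yi})\min_{c\neq\yi}(\eb_{\yi}-\eb_c)^\top\A\hb_i$. Summing over $i$ gives the inequality. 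I do not anticipate a real obstacle here — the only mild care needed is the algebraic bookkeeping in the regrouping step of the third bullet and being explicit that $\sum_{c\neq\yi}s_{ic} = 1-s_{i\yi}$ is what makes the average-vs-minimum bound go through; everything else is a routine softmax-gradient computation.
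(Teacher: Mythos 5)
Your proposal is correct and follows essentially the same route as the paper: direct differentiation for the gradient, the simplex identity $\ones_k^\top\s_i=1$ for the second bullet, and the rewriting of $\eb_{\yi}-\s_i$ via $\s_i=\sum_c s_{ic}\eb_c$ (equivalently $\eb_{\yi}-\s_i=\sum_{c\neq\yi}s_{ic}(\eb_{\yi}-\eb_c)$) for the inner-product identity, with the final inequality following from $s_{ic}\geq 0$ and $\sum_{c\neq\yi}s_{ic}=1-s_{i\yi}$. No gaps; your write-up is simply a more explicit version of the paper's "direct calculation."
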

\begin{proof}
    First bullet is by direct calculation. Second bullet uses the fact that $\ones^\top(\y_i-\s_i)=1-1=0$ since $\ones^\top\s_i=1$. The third bullet follows by direct calculation and writing $\s_i^\top\A\hb_i=(\sum_{c}s_{ic}\eb_c)^\top\A\hb_i=\sum_{c}s_{ic}\,\eb_c^\top\A\hb_i$. 
\end{proof}
Lemma \ref{lem:hessian} is on the Taylor expansion of the loss. It will be used in showing the descent properties of NSD and NMD.
\begin{lemma}[Hessian] \label{lem:hessian}
    Let perturbation $\Deltab\in\R^{k\times d}$ and denote $\W'=\W+\Deltab$. Then, 
    \begin{align}
        \Lc(\W') &= \Lc(\W) - \frac{1}{n}\sum_{i\in[n]}\inp{(\eb_\yi-\sft{\W\hb_i})\hb_i^\top}{\Deltab} \nn
        \\
        &\quad+ \frac{1}{2n}\sum_{i\in[n]}\hb_i^\top\Deltab^\top\left(\diag{\sft{\W\hb_i}}-\sft{\W\hb_i}\sft{\W\hb_i}^\top\right)\Deltab\,\hb_i+o(\|\Deltab\|^3)\,.
    \end{align}
\end{lemma}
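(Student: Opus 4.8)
The plan is to reduce the identity to a second-order Taylor expansion of the per-sample loss viewed as a function of the logits, and then invoke the standard gradient/Hessian formulas for the log-sum-exp function. Write $\Lc(\W)=\frac1n\sum_{i\in[n]}g_i(\W\hb_i)$ with $g_i:\R^k\to\R$, $g_i(\ellb):=-\log\sfti{y_i}{\ellb}=\operatorname{LSE}(\ellb)-\ellb[y_i]$ and $\operatorname{LSE}(\ellb):=\log\sum_{c\in[k]}e^{\ellb[c]}$. Since $\W'\hb_i=\W\hb_i+\Deltab\hb_i$, it suffices to Taylor-expand each $g_i$ around $\ellb_i:=\W\hb_i$ with increment $\deltab_i:=\Deltab\hb_i$ and then average over $i$.

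A direct computation gives $\partial_j\operatorname{LSE}(\ellb)=\sfti{j}{\ellb}$ and $\partial_i\partial_j\operatorname{LSE}(\ellb)=\delta_{ij}\sfti{j}{\ellb}-\sfti{i}{\ellb}\sfti{j}{\ellb}$, hence $\nabla g_i(\ellb)=\sft{\ellb}-\eb_{y_i}$ and $\nabla^2 g_i(\ellb)=\diag{\sft{\ellb}}-\sft{\ellb}\sft{\ellb}^\top=\sftd{\ellb}$. In particular $g_i\in C^\infty(\R^k)$, and its second- and third-order derivative tensors are uniformly bounded (entries of $\sftd{\cdot}$ lie in $[-1,1]$, and similarly for the third derivative). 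Taylor's theorem for the smooth $g_i$ then yields
\begin{align*}
g_i(\W'\hb_i)=g_i(\W\hb_i)+\inp{\sft{\W\hb_i}-\eb_{y_i}}{\Deltab\hb_i}+\tfrac12(\Deltab\hb_i)^\top\sftd{\W\hb_i}(\Deltab\hb_i)+r_i,
\end{align*}
where $r_i$ is the third-order Taylor remainder; by the uniform bound on the third derivatives together with $\|\Deltab\hb_i\|_\infty\le\ninf{\Deltab}\,\|\hb_i\|_1$ (Ass.~\ref{ass:data_bound}), we get $|r_i|=O(\|\Deltab\|^3)$ uniformly in $i$, matching the $o(\|\Deltab\|^3)$ term in the statement.

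Finally I would convert the Euclidean inner products into the matrix trace inner product used throughout the paper: with $\ub:=\eb_{y_i}-\sft{\W\hb_i}$, $\inp{\sft{\W\hb_i}-\eb_{y_i}}{\Deltab\hb_i}=-\ub^\top\Deltab\hb_i=-\tr(\hb_i\ub^\top\Deltab)=-\inp{\ub\hb_i^\top}{\Deltab}$ (cyclicity of the trace and the convention $\inp{\A}{\Bb}=\tr(\A^\top\Bb)$), while the quadratic term is already in the advertised form $\hb_i^\top\Deltab^\top\sftd{\W\hb_i}\Deltab\hb_i$. Averaging over $i\in[n]$ and absorbing $\frac1n\sum_i r_i=O(\|\Deltab\|^3)$ into the remainder gives exactly the claimed expansion.

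I do not anticipate a real obstacle: the core is the textbook Hessian-of-log-sum-exp computation. The only points requiring mild care are the sign/transpose bookkeeping when turning $\inp{\ub}{\Deltab\hb_i}$ into $\inp{\ub\hb_i^\top}{\Deltab}$, and the remainder — the statement's $o(\|\Deltab\|^3)$ should be read as the third-order Taylor remainder, which is $O(\|\Deltab\|^3)$ by smoothness of $\operatorname{LSE}$ and the data bound.
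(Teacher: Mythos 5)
Your proposal is correct and follows essentially the same route as the paper: write the per-sample CE loss as a function of the logits, compute its gradient $\sft{\ellbb}-\eb_{y_i}$ and Hessian $\sftd{\ellbb}$, Taylor-expand around $\ellbb_i=\W\hb_i$ with increment $\Deltab\hb_i$, and average over samples. The only (harmless) difference is your explicit bookkeeping of the remainder and of the vector-to-trace inner-product conversion, which the paper leaves implicit.
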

\begin{proof}
    Define function $\ell_y:\R^{k}\rightarrow\R$ parameterized by $y\in[k]$ as follows:
    \[
     \ell_y(\ellbb):=-\log(\sfti{y}{\ellbb})\,.
    \]
    From Lemma \ref{lem:CE logit loss properties},
    \[
    \nabla\ell_y(\ellbb)=-(\eb_y-\sft{\ellbb})\,.
    \]
    Thus,
    \[
\nabla^2\ell_y(\ellbb)=\nabla\sft{\ellbb}=\diag{\sft{\ellbb}}-\sft{\ellbb}\sft{\ellbb}^\top
    \]
    Combining these the second-order taylor expansion of $\ell_y$ writes as follows for any $\ellbb,\deltab\in\R^k$:
    \begin{align*}
        \ell_y(\ellbb+\deltab) = \ell_y(\ellbb) - (\eb_y-\sft{\ellbb})^\top\deltab + \frac{1}{2}\deltab^\top\left(\diag{\sft{\ellbb}}-\sft{\ellbb}\sft{\ellbb}^\top\right)\deltab+o(\|\deltab\|^3)\,.
    \end{align*}
    To evaluate this with respect to a change on the classifier parameters, set $\ellbb=\W\hb$ and $\deltab=\Deltab\hb$ for $\Deltab\in\R^{k\times d}$. Denoting $\W'=\W+\Deltab$, we then have 
    \begin{align*}
        \ell_y(\W') = \ell_y(\W) - \inp{(\eb_y-\sft{\ellbb})\hb^\top}{\Deltab} + \frac{1}{2}\hb^\top\Deltab^\top\left(\diag{\sft{\ellbb}}-\sft{\ellbb}\sft{\ellbb}^\top\right)\Deltab\hb+o(\|\Deltab\|^3)\,.
    \end{align*}
    This shows the desired since $n\Lc(\W):=\sum_{i\in[n]}\ell_\yi(\W\hb_i)$\, and we can further obtain
    \begin{align}
        \ell_y(\W') = \ell_y(\W) - \inp{(\eb_y-\sft{\ellbb})\hb^\top}{\Deltab} + \frac{1}{2}\hb^\top\Deltab^\top\left(\diag{\sft{\ellbb'}}-\sft{\ellbb'}\sft{\ellbb'}^\top\right)\Deltab\hb \label{eq:taylor_eq1}, 
    \end{align}
    where $\ellbb' = \ellbb + \zeta \deltab$ for some $\zeta \in [0,1]$. 
\end{proof}

We prove the relationships in \eqref{eq:A_rela}, which are useful for unifying the analysis of entry-wise and Schatten norms. 

\begin{lemma}\label{lem:snorm dominate}
For any matrix $\A \in \mathbb{R}^{m \times n}$ and any entry-wise or Schatten p-norm $\norm{\cdot}$ with $p \geq 1$, it holds that 
\[
\ninf{\A} \leq \norm{\A} \leq \none{\A}\,.
\]
\end{lemma}
\begin{proof}
    The entry-wise p-norm case is trivial. Here, we focus the Schatten p-norm case. Note that $\snorm{\A}{2}$ coincides with the entrywise $2$-norm $\|\A\|_2$, but in general Schatten norms are different from entry-wise norms. On the other hand, Schatten norms preserve the ordering of norms. Specifically, por any $p \geq 1$, it holds:
\begin{equation}
\snorm{\A}{\infty}=\sigma_1 \leq \snorm{\A}{p} = \left(\sum_{i=1}^{r} \sigma_i^p\right)^{1/p} \leq \sum_{i=1}^{r} \sigma_i = \snorm{\A}{1}\,.
\end{equation}

It is also well-known that 
\begin{align}\label{eq:schatten lower}
\snorm{\A}{\infty}=\max_{\|\ub\|_2=\|\vb\|_2=1}\ub^\top\A\vb \geq \max_{i,j}|\A[i,j]| = \ninf{\A}\,
\end{align}
where the inequality follows by selecting $\ub=\sign{\A[i',j']}\cdot\eb_{i'}$ and $\vb=\eb_{j'}$ for $(i',j')$ such that $|\A[i',j']|=\ninf{\A}$ and $\eb_{i'}, \eb_{j'}$ corresponding basis vectors.

Using this together with duality, it also holds that
\begin{align}\label{eq:schatten upper bound}
    \snorm{\A}{1}\leq \none{\A}\,.
\end{align}
This follows from the following sequnece of inequalities
\begin{align}
    \snorm{\A}{1} = \max_{\snorm{\Bb}{\infty}\leq 1}\inp{\A}{\Bb} \leq \none{\A}\cdot \max_{\snorm{\Bb}{\infty}\leq 1}\ninf{\Bb} \leq \none{\A}\cdot \max_{\snorm{\Bb}{\infty}\leq 1}\snorm{\Bb}{\infty} \leq \none{\A}\,,
\end{align}
where the first inequality follows from generalized Cauchy-Scwhartz and the second inequality by \eqref{eq:schatten lower}. 
\end{proof}

Lemma \ref{lem:hessian_bound} is used in bounding the second order term in the Taylor expansion of $\Lc(\W)$.

\begin{lemma} \label{lem:hessian_bound}
     For any entry-wise or Schatten p-norm $\norm{\cdot}$ with $p\geq1$, any $\s\in\Delta^{k-1}$ in the $k$-dimensional simplex, any index $c\in[k]$, and $\vb \in\R^k$, it holds that
     \[
     \vb^\top \left(\diag{\s}-\s\s^\top\right)\vb \leq 4 (1-s_c) \norm{\vb \vb^T}. 
     \]
\end{lemma}
\begin{proof} See main text.
\end{proof}

Lemma \ref{lem:trivial softmax} is used in the proof of Lemma \ref{lem:hessian_bound}. 
\begin{lemma}\label{lem:trivial softmax}
    For any $\s\in\Delta^{k-1}$ in the $k$-dimensional simplex and any index $c\in[k]$ it holds that
    \[
    \sum_{c'}s_{c'}(1-s_{c'}) \leq 2(1-s_c)\,.
    \]
\end{lemma}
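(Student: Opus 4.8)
The plan is to show the pointwise inequality $\sum_{c'} s_{c'}(1-s_{c'}) \leq 2(1-s_c)$ by splitting the left-hand sum into the $c'=c$ term and the rest. First I would write
\[
\sum_{c'\in[k]} s_{c'}(1-s_{c'}) = s_c(1-s_c) + \sum_{c'\neq c} s_{c'}(1-s_{c'}).
\]
For the first piece, trivially $s_c(1-s_c)\leq 1-s_c$ since $s_c\in[0,1]$. For the second piece, I would bound $1-s_{c'}\leq 1$ for every $c'$, so that $\sum_{c'\neq c} s_{c'}(1-s_{c'}) \leq \sum_{c'\neq c} s_{c'} = 1-s_c$, using $\s\in\Delta^{k-1}$. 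Adding the two bounds gives $\sum_{c'} s_{c'}(1-s_{c'}) \leq (1-s_c) + (1-s_c) = 2(1-s_c)$, which is exactly the claim.

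There is essentially no obstacle here — it is an elementary inequality on the simplex, and the only ``trick'' is not to try to be clever: bounding each factor $1-s_{c'}$ crudely by $1$ in the off-diagonal sum is already enough, and the diagonal term is handled separately to retain the $(1-s_c)$ factor. One could alternatively note $\sum_{c'} s_{c'}(1-s_{c'}) \leq \sum_{c'} s_{c'} - s_c^2 \cdot 0$, but the split above is the cleanest route and needs no case analysis on whether $s_c$ is large or small.
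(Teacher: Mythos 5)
Your proof is correct. The paper proves the same inequality by a slightly different (equally elementary) manipulation: it rewrites the left side as $1-\sum_{c'}s_{c'}^2$ and rearranges the claim into $\sum_{c'\neq c}s_{c'}^2+(1-s_c)^2\geq 0$, i.e.\ nonnegativity of a sum of squares, whereas you bound the diagonal term and the off-diagonal sum separately using $s_{c'}\in[0,1]$ and $\sum_{c'\neq c}s_{c'}=1-s_c$. Both arguments are complete and rely on exactly the same facts about the simplex, so there is no gap; your termwise split is, if anything, marginally more transparent.
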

\begin{proof}
With a bit of algebra and using $\sum_{c'\neq c}s_{c'}=1-s_c$ the claim becomes equivalent to 
\[
    \sum_{c'\neq c}s_{c'}^2 + s_c^2-2s_c + 1 \geq 0.
    \]
    Since this holds true, the lemma holds.
\end{proof}

\begin{lemma} \label{lem: hessian_bound_ready}
     For any $\s\in\Delta^{k-1}$ in the $k$-dimensional simplex, any index $c\in[k]$, any $\Deltab\in\R^{k\times d}$, and any $\hb\in\R^d$, it holds:
     \[
     \hb^\top\Deltab^\top\left(\diag{\s}-\s\s^\top\right)\Deltab\hb \leq 4B^2 \norm{\Deltab}^2\, (1-s_c) \,.
     \]
\end{lemma}

\begin{proof}
    We let $\vb:=\Deltab\hb$. For any Schatten p-norm, we have 
    \begin{align*}
        \snorm{\vb\vb^\top}{} =  \|\vb\|_2^2 \leq  \snorm{\Deltab}{\infty}^2 \|\hb\|_2^2 \leq \snorm{\Deltab}{}^2 \|\hb\|_2^2 \leq B^2 \snorm{\Deltab}{}^2 \,.
    \end{align*}
    For any entry-wise p-norm, we have
    \begin{align*}
        \|\Deltab\hb\|^p = \|\Deltab\hb\|^p = \sum_{j} |\eb_j^\top\Deltab\hb|^p \leq \sum_{j} \|\eb_j^\top\Deltab\|_p^p\|\hb\|^p = \|\hb\|_{*}^p \sum_{ij} |\Deltab[i,j]|^p = \|\hb\|_{*}^p \|\Deltab\|^p\,.
    \end{align*}
    This implies 
    \begin{align*}
        \norm{\vb \vb^T} = \norm{\vb}^2 = \|\Deltab\hb\|^2 \leq \|\Deltab\|^2 \|\hb\|_\star^2 \leq B^2 \|\Deltab\|^2.
    \end{align*}
    Combine these results and apply Lemma \ref{lem:hessian_bound}, we obtain the desired. 
\end{proof}

The following lemma summarizes the properties of the softmax map that will be used in the proof of Lemma \ref{lem:first_G_main_app} and \ref{lem:second_G}.

\begin{lemma} \label{lem:unified_helper} 
    For any $\vb, \vb', \qb, \qb' \in \R^k$ and $c \in [k]$, the following inequalities hold:
    \begin{enumerate}[label=(\roman*)]
        \item $|\frac{\sfti{c}{\vb'}}{\sfti{c}{\vb}} - 1| \leq e^{2\lVert \vb -\vb'\rVert_{\infty}} - 1$
        \item $|\frac{1 - \sfti{c}{\vb'}}{1 - \sfti{c}{\vb}} - 1| \leq e^{2\lVert \vb - \vb' \rVert_{\infty}} - 1$
        \item $|\frac{\sfti{c}{\vb'} \sfti{c}{\qb'}}{\sfti{c}{\vb} \sfti{c}{\qb}} - 1| \leq e^{2(\lVert \vb' - \vb \rVert_{\infty} + \lVert \qb' - \qb \rVert_{\infty})} - 1$
        \item $|\frac{\sfti{c}{\vb'} (1 - \sfti{c}{\qb'})}{\sfti{c}{\vb} (1 - \sfti{c}{\qb})} - 1| \leq e^{2(\lVert \vb' - \vb \rVert_{\infty} + \lVert \qb' - \qb \rVert_{\infty})} - 1$
        \item $|\frac{(1 - \sfti{c}{\vb'}) (1 - \sfti{c}{\qb'})}{(1 -\sfti{c}{\vb}) (1 - \sfti{c}{\qb})} - 1| \leq e^{2(\lVert \vb' - \vb \rVert_{\infty} + \lVert \qb' - \qb \rVert_{\infty})} - 1$
    \end{enumerate}
\end{lemma}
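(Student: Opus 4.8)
The plan is to reduce all five inequalities to a single elementary mechanism. I first claim the following sub-fact: if $r>0$ satisfies $e^{-x}\le r\le e^{x}$ for some $x\ge0$, then $|r-1|\le e^{x}-1$. Indeed $r-1\le e^{x}-1$ is immediate, and $1-r\le 1-e^{-x}\le e^{x}-1$ because $e^{x}+e^{-x}\ge 2$. So it suffices to prove a \emph{two-sided} multiplicative bound $e^{-x}\le r\le e^{x}$ for each ratio $r$ in (i)--(v), where $x$ will be $2\lVert\vb-\vb'\rVert_\infty$ in (i)--(ii) and $2(\lVert\vb'-\vb\rVert_\infty+\lVert\qb'-\qb\rVert_\infty)$ in (iii)--(v); the asserted $|\cdot-1|$ bounds then follow from the sub-fact.

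For (i), clearing the softmax denominators gives $\frac{\sfti{c}{\vb'}}{\sfti{c}{\vb}}=\frac{\sum_{j}e^{v'_c+v_j}}{\sum_{j}e^{v_c+v'_j}}$. Each summand ratio obeys $\frac{e^{v'_c+v_j}}{e^{v_c+v'_j}}=e^{(v'_c-v_c)+(v_j-v'_j)}\le e^{|v'_c-v_c|+|v_j-v'_j|}\le e^{2\lVert\vb-\vb'\rVert_\infty}$, hence the whole ratio is $\le e^{2\lVert\vb-\vb'\rVert_\infty}$; exchanging the roles of $\vb$ and $\vb'$ yields the matching lower bound $e^{-2\lVert\vb-\vb'\rVert_\infty}$. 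For (ii), writing $1-\sfti{c}{\vb}=\frac{\sum_{j\neq c}e^{v_j}}{\sum_{j}e^{v_j}}$ gives $\frac{1-\sfti{c}{\vb'}}{1-\sfti{c}{\vb}}=\frac{\sum_{j\neq c}\sum_i e^{v'_j+v_i}}{\sum_{j\neq c}\sum_i e^{v_j+v'_i}}$, and the identical termwise estimate applies — this is exactly the computation already carried out in the proof of Lemma~\ref{lem:G_ratio}, which I would simply cite.

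For (iii)--(v), I factor the ratio into a product of two ratios each handled by (i) or (ii). For instance, $\frac{\sfti{c}{\vb'}\sfti{c}{\qb'}}{\sfti{c}{\vb}\sfti{c}{\qb}}=\frac{\sfti{c}{\vb'}}{\sfti{c}{\vb}}\cdot\frac{\sfti{c}{\qb'}}{\sfti{c}{\qb}}$ lies in $[e^{-2(\lVert\vb-\vb'\rVert_\infty+\lVert\qb-\qb'\rVert_\infty)},\,e^{2(\lVert\vb-\vb'\rVert_\infty+\lVert\qb-\qb'\rVert_\infty)}]$ since exponents add under multiplication; applying the sub-fact with $x=2(\lVert\vb-\vb'\rVert_\infty+\lVert\qb-\qb'\rVert_\infty)$ gives (iii). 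Statement (iv) is the same with the second factor replaced by its $1-\sfti{c}{\cdot}$ analogue from (ii), and (v) with both factors of that type.

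There is no real obstacle here; everything is elementary. The only points that need a little care are the passage from the two-sided multiplicative bound to the claimed $|\cdot-1|$ bound (the inequality $1-e^{-x}\le e^{x}-1$) and the bookkeeping that the exponents add when two ratios are multiplied. I would state the sub-fact once and reuse it verbatim for all five parts, so the proof is short.
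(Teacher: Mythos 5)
Your proof is correct, but it is organized differently from the paper's. The paper proves each of the five parts directly: it clears denominators to write each ratio as $\frac{\sum a_i}{\sum b_i}$ with $a_i,b_i$ products of exponentials (for (iii)--(v) this means explicit triple and quadruple sums), and then bounds $\bigl|\frac{\sum a_i}{\sum b_i}-1\bigr|\leq \frac{\sum|a_i-b_i|}{\sum b_i}$ termwise using $|e^{A}-e^{B}|\leq (e^{|A-B|}-1)e^{B}$, so the one-sided estimate on $|r-1|$ comes out in a single pass. You instead establish two-sided multiplicative bounds $e^{-x}\leq r\leq e^{x}$ for (i)--(ii) via the same termwise exponential estimate (plus the mediant-type fact that $a_i\leq M b_i$ for all $i$ implies $\sum a_i\leq M\sum b_i$, and symmetry in $\vb,\vb'$ for the lower bound), convert to the claimed form through the elementary inequality $1-e^{-x}\leq e^{x}-1$, and then get (iii)--(v) by factoring each ratio as a product of a type-(i) and/or type-(ii) ratio so the exponents simply add. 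The underlying mechanism is identical, but your modular reduction of (iii)--(v) to (i)--(ii) avoids the paper's quadruple-sum bookkeeping, at the small cost of the two-sided-bound detour and the extra sub-fact; both are complete and rigorous (note only that positivity of the denominators, needed for your sub-fact, holds since softmax entries lie strictly in $(0,1)$ for $k\geq 2$).
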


\begin{proof}
    We prove each inequality:

    (i) First, observe that
    \begin{align*}
        |\frac{\sfti{c}{\vb'}}{\sfti{c}{\vb}} - 1| &= |\frac{e^{v'_c}}{e^{v_c}} \frac{\sum_{i \in [k]} e^{v_i}}{\sum_{i \in [k]} e^{v'_i}} - 1| \\
        &= |\frac{\sum_{i \in [k]} e^{v'_c + v_i} - \sum_{i \in [k]} e^{v_c + v'_i}}{\sum_{i \in [k]} e^{v_c + v'_i}}| \\
        &\leq \frac{\sum_{i \in [k]} |e^{v'_c + v_i} - e^{v_c + v'_i}|}{\sum_{i \in [k]} e^{v_c + v'_i}}
    \end{align*}
    For any $i \in [k]$, we have $\frac{|e^{v'_c + v_i} - e^{v_c + v'_i}|}{e^{v_c + v'_i}} = | e^{v'_c - v_c + v_i - v'_i} - 1| \leq e^{|v'_c - v_c + v_i - v'_i|} - 1 \leq e^{2\lVert \vb -\vb'\rVert_{\infty}} - 1$. This implies $\sum_{i \in [k]}|e^{v'_c + v_i} - e^{v_c + v'_i}| \leq \bigl( e^{2\lVert \vb -\vb'\rVert_{\infty}} - 1\bigr) \sum_{i \in [k]} e^{v_c + v'_i}$, from which we obtain the desired inequality.

    (ii) For the second inequality:
    \begin{align*}
        |\frac{1 - \sfti{c}{\vb'}}{1 - \sfti{c}{\vb}} - 1| &= |\frac{1 - \frac{e^{v'_c}}{\sum_{i \in [k]} e^{v'_i}}}{ 1 - \frac{e^{v_c}}{\sum_{i \in [k]} e^{v_i}}} - 1| \\
        &= |\frac{(\sum_{j \in [k], j \neq c} e^{v'_j}) (\sum_{i \in [k]} e^{v_i})}{(\sum_{j \in [k], j \neq c} e^{v_j}) (\sum_{i \in [k]} e^{v'_i})} - 1| \\
        &= |\frac{\sum_{j \in [k], j \neq c} \sum_{i \in [k]} \bigl[ e^{v'_j + v_i} - e^{v_j + v'_i}\bigl]}{\sum_{j \in [k], j \neq c} \sum_{i \in [k]} e^{v_j + v'_i}}| \\
        &\leq \frac{ \sum_{j \in [k], j \neq c} \sum_{i \in [k]} |e^{v'_j + v_i} - e^{v_j + v'_i}|}{ \sum_{j \in [k], j \neq c} \sum_{i \in [k]} e^{v_j + v'_i}}
    \end{align*}
    For any $j \in [k]$, $j \neq c$, and $i \in [k]$, we have $\frac{|e^{v_j' + v_i} - e^{v_j + v_i'}|}{e^{v_j + v_i'}} \leq e^{2 \lVert \vb - \vb' \rVert_{\infty}} - 1$. This implies that $\sum_{j \in [k], j \neq c} \sum_{i \in [k]} |e^{v_j' + v_i} - e^{v_j + v_i'}| \leq (e^{2\lVert \vb - \vb' \rVert_{\infty}} - 1) \sum_{j \in [k], j \neq c} \sum_{i \in [k]} e^{v_j + v_i'}$, from which the result follows.

    (iii) For the third inequality:
    \begin{align*}
         |\frac{\sfti{c}{\vb'} \sfti{c}{\qb'}}{\sfti{c}{\vb} \sfti{c}{\qb}} - 1| &= | \frac{ \frac{e^{v'_c}}{\sum_{i \in [k]} e^{v'_i}} \frac{e^{q'_c}}{\sum_{i \in [k]}e^{q'_i}} }{ \frac{e^{v_c}}{\sum_{i \in [k]} e^{v_i}} \frac{e^{q_c}}{\sum_{i \in [k]}e^{q_i}} } - 1| \\
        &= |\frac{ \frac{e^{v'_c}}{\sum_{i \in [k]} e^{v'_i}} \frac{e^{q'_c}}{\sum_{i \in [k]}e^{q'_i}} }{ \frac{e^{v_c}}{\sum_{i \in [k]} e^{v_i}} \frac{e^{q_c}}{\sum_{i \in [k]}e^{q'_i}} } - \frac{ \frac{e^{v_c}}{\sum_{i \in [k]} e^{v'_i}} \frac{e^{q_c}}{\sum_{i \in [k]}e^{q'_i}} }{ \frac{e^{v_c}}{\sum_{i \in [k]} e^{v'_i}} \frac{e^{q_c}}{\sum_{i \in [k]}e^{q'_i}} }| \\
        &= |\frac{e^{v'_c}e^{q'_c} \sum_{i \in [k]} e^{v_i} \sum_{j \in [k]} e^{q_j}} {e^{v_c}e^{q_c} \sum_{i \in [k]} e^{v'_i} \sum_{j \in [k]} e^{q'_j}} - \frac{e^{v_c}e^{q_c} \sum_{i \in [k]} e^{v'_i} \sum_{j \in [k]} e^{q'_j}}{e^{v_c}e^{q_c} \sum_{i \in [k]} e^{v'_i} \sum_{j \in [k]} e^{q'_j}}| \\
        &= | \frac{\sum_{i \in [k]} \sum_{j \in [k]} \bigl[ e^{v'_c + v_i + q'_c + q_j} - e^{v_c + v'_i + q_c + q'_j} \bigr] }{ \sum_{i \in [k]} \sum_{j \in [k]} e^{v_c + v'_i + q_c + q'_j}}| \\
        &\leq \frac{\sum_{i \in [k]} \sum_{j \in [k]} |e^{v'_c + v_i + q'_c + q_j} - e^{v_c + v'_i + q_c + q'_j}|} {\sum_{i \in [k]} \sum_{j \in [k]} e^{v_c + v'_i + q_c + q'_j}}
    \end{align*}
    For any $i \in [k]$ and $j \in [k]$, $\frac{|e^{v'_c + v_i + q'_c + q_j} - e^{v_c + v'_i + q_c + q'_j}|}{e^{v_c + v'_i + q_c + q'_j}} = |e^{v'_c - v_c + v_i - v'_i + q'_c - q_c + q_j - q'_j} - 1| \leq e^{|v'_c - v_c| + |v_i - v'_i| + |q'_c - q_c| + |q_j - q'_j|} -  1 \leq  e^{2(\lVert \vb' - \vb \rVert_{\infty} + \lVert \qb' - \qb \rVert_{\infty})} - 1$. Then, rearranging and summing over $i$ and $j$ leads to the result.

    (iv) For the fourth inequality:
    \begin{align*}
         |\frac{\sfti{c}{\vb'} (1 - \sfti{c}{\qb'})}{\sfti{c}{\vb} (1 - \sfti{c}{\qb})} - 1| &= |\frac{ \frac{e^{v'_c}}{\sum_{s \in [k]}e^{v'_s}} (1- \frac{e^{q'_c}}{\sum_{t \in [k]}e^{q'_t}})}{\frac{e^{v_c}}{\sum_{s \in [k]}e^{v_s}} (1- \frac{e^{q_c}}{\sum_{t \in [k]}e^{q_t}})} - 1| \\
         &= |\frac{ \frac{e^{v'_c}}{\sum_{s \in [k]}e^{v'_s}}\frac{ \sum_{i \in [k], i \neq c }e^{q'_i}}{\sum_{t \in [k]}e^{q'_t}}}{\frac{e^{v_c}}{\sum_{s \in [k]}e^{v_s}} \frac{\sum_{i \in [k], i \neq c }e^{q_t}}{\sum_{t \in [k]}e^{q_t}}} - 1| \\
         &= | \frac{\sum_{i \in [k], i \neq c} \sum_{t \in [k]} \sum_{s \in [k]} e^{v'_c + q'_i + v_s + q_t}}{ \sum_{i \in [k], i \neq c} \sum_{t \in [k]} \sum_{s \in [k]} e^{v_c + q_i + v'_s + q'_t}} - 1| \\
         &\leq \frac{\sum_{i \in [k], i \neq c} \sum_{t \in [k]} \sum_{s \in [k]} |e^{v'_c + q'_i + v_s + q_t} - e^{v_c + q_i + v'_s + q'_t}|}{\sum_{i \in [k], i \neq c} \sum_{t \in [k]} \sum_{s \in [k]} e^{v_c + q_i + v'_s + q'_t}}
    \end{align*}
    For each $i \in [k], i \neq c$, $s \in [k]$, and $t \in [k]$, we obtain $\frac{|e^{v'_c + q'_i + v_s + q_t} - e^{v_c + q_i + v'_s + q'_t}|}{e^{v_c + q_i + v'_s + q'_t}} \leq e^{2(\lVert \vb' - \vb \rVert_{\infty} + \lVert \qb' - \qb \rVert_{\infty})} - 1$. Then, rearranging and summing over $i$, $s$, and $t$ leads to the result.

    (v)   Finally, for the fifth inequality:
    \begin{align*}
         |\frac{(1 - \sfti{c}{\vb'}) (1 - \sfti{c}{\qb'})}{(1 -\sfti{c}{\vb}) (1 - \sfti{c}{\qb})} - 1| &= |\frac{ (1-\frac{e^{v'_c}}{\sum_{s \in [k]}e^{v'_s}}) (1- \frac{e^{q'_c}}{\sum_{t \in [k]}e^{q'_t}})}{ (1 - \frac{e^{v_c}}{\sum_{s \in [k]}e^{v_s}}) (1- \frac{e^{q_c}}{\sum_{t \in [k]}e^{q_t}})} - 1| \\
         &= |\frac{ \frac{ \sum_{j \in [k], j \neq c} e^{v'_j}}{\sum_{s \in [k]}e^{v'_s}}\frac{ \sum_{i \in [k], i \neq c }e^{q'_i}}{\sum_{t \in [k]}e^{q'_t}}}{\frac{\sum_{j \in [k], j \neq c}  e^{v_j}}{\sum_{s \in [k]}e^{v_s}} \frac{\sum_{i \in [k], i \neq c }e^{q_i}}{\sum_{t \in [k]}e^{q_t}}} - 1| \\
         &= | \frac{\sum_{j \in [k], j \neq c} \sum_{i \in [k], i \neq c} \sum_{t \in [k]} \sum_{s \in [k]} e^{v'_j + q'_i + v_s + q_t}}{ \sum_{j \in [k], j \neq c} \sum_{i \in [k], i \neq c} \sum_{t \in [k]} \sum_{s \in [k]} e^{v_j + q_i + v'_s + q'_t}} - 1| \\
         &\leq \frac{\sum_{j \in [k], j \neq c} \sum_{i \in [k], i \neq c} \sum_{t \in [k]} \sum_{s \in [k]} |e^{v'_j + q'_i + v_s + q_t} - e^{v_j + q_i + v'_s + q'_t}|}{\sum_{j \in [k], j \neq c} \sum_{i \in [k], i \neq c} \sum_{t \in [k]} \sum_{s \in [k]} e^{v_j + q_i + v'_s + q'_t}}.
    \end{align*}
    For each $j \in [k]$ ($j \neq c$), $i \in [k]$ ($i \neq c$), $s \in [k]$, and $t \in [k]$, we have 
    \begin{align*}
        \frac{|e^{v'_j + q'_i + v_s + q_t} - e^{v_j + q_i + v'_s + q'_t}|}{e^{v_j + q_i + v'_s + q'_t}} &= |e^{v'_j - v_j + q'_i - q_i + v_s - v'_s + q_t - q'_t} - 1| \\
        &\leq e^{|v'_j - v_j| + |q'_i - q_i| + |v_s - v'_s| + |q_t - q'_t|} - 1 \\
        &\leq e^{2(\lVert \vb' - \vb \rVert_{\infty} + \lVert \qb' - \qb \rVert_{\infty})} - 1
    \end{align*}
    Then, rearranging and summing over $j$, $i$, $s$, and $t$ leads to the result.
\end{proof}

\section{Lemmas on Loss and Proxy Function} 
\label{sec: G and L}

Lemma \ref{lem:G and gradient} shows that $\Gc(\W)$ upper and lower bound the dual norm of the loss gradient.

\begin{lemma}[$\Gc(\W)$ as proxy to the loss-gradient norm]
\label{lem:G and gradient}
Under Assumption \ref{ass:data_bound}. For any $\W \in \R^{k \times d}$, it holds that
    \[
    2B \cdot \Gc(\W) \geq \norm{\nabla\Lc(\W)}_{*} \geq \gamma\cdot \Gc(\W)\,.
    \]
\end{lemma}

\begin{proof} First, we prove the lower bound. By duality and direct application of \eqref{eq:nabla_ineq_basic}
\begin{align*}
    \norm{\nabla \Lc(\W)}_{*} & = \max_{\norm{\A} \leq 1} \langle \A, -\nabla \Lc(\W)\rangle \\
    &\geq \max_{\norm{\A} \leq 1} \frac{1}{n} \sum_{i \in [n]} (1 - s_{iy_i}) \min_{c \neq y_i} (\e_{y_i} - \e_c)^T \A \hb_i \\
    &\geq \frac{1}{n} \sum_{i \in [n]} (1 - s_{iy_i}) \cdot  \max_{\norm{\A} \leq 1}\min_{i \in [n],c \neq y_i} (\e_{y_i} - \e_c)^T \A \hb_i.
\end{align*}
Second, for the upper bound, it holds by triangle inequality and relationships \eqref{eq:A_rela} that 
\[
\norm{\nabla \Lc(\W)}_{*} \leq \none{\nabla \Lc(\W)} \leq \frac{1}{n}\sum_{i\in[n]}\none{\nabla \ell_i(\W)}\,,
\]
where $\ell_i(\W)=-\log(\sfti{y_i}{\W\hb_i})$. Recall that
\[
\nabla\ell_i(\W) = -(\eb_y-\sfti{y_i}{\W\hb_i})\hb_i^\top,
\]
and, for two vectors $\vb,\ub$: $\none{\ub\vb^\top}=\|\ub\|_1\|\vb\|_1$. Combining these and noting that \[\|\eb_{y_i}-\sfti{y_i}{\W\hb_i}\|_1=2(1-s_{y_i})\] together with using the assumption $\|\hb_i\|\leq B$ yields the advertised upper bound.
\end{proof}

Built upon Lemma \ref{lem:G and gradient}, we obtain a simple bound on the loss difference at two points. 
\begin{lemma} \label{lem:l_fast_bound}
    For any $\W, \W_0 \in \R^{k \times d}$, suppose that $\Lc(\W)$ is convex, we have
    \begin{align*}
        |\Lc(\W) - \Lc(\W_0)| \leq 2B \norm{\W - \W_0}. 
    \end{align*}
\end{lemma}
\begin{proof} By convexity of $\Lc$, we have
    \begin{align*}
        \Lc(\W_0) - \Lc(\W) \leq \langle \nabla \Lc(\W_0), \W_0 - \W \rangle \leq \norm{\nabla \Lc(\W_0)}_{*}  \norm{\W_0 - \W} \leq 2B \norm{\W_0 - \W} \,,
    \end{align*}
    where the last inequality is by Lemma \ref{lem:G and gradient}. Similarly, we can also show that $\Lc(\W) - \Lc(\W_0) \leq  2B \norm{\W_0 - \W}$.
\end{proof}

Lemma \ref{lem:G and L} shows the close relationships between $\Gc(\W)$ and $\Lc(\W)$. The proxy $\Gc(\W)$ not only lower bounds $\Lc(\W)$, but also upper bounds $\Lc(\W)$ up to a factor depending on $\Lc(\W)$. Moreover, the rate of convergence $\frac{\Gc(\W)}{\Lc(\W)}$ depends on the rate of decrease in the loss.

\begin{lemma}[$\Gc(\W)$ as proxy to the loss]\label{lem:G and L}
    Let $\W \in \R^{k \times d}$, we have
    \begin{enumerate}[label=(\roman*)]
    \item $1\geq \frac{\Gc(\W)}{\Lc(\W)} \geq 1-\frac{n\Lc(\W)}{2} $
    \item  Suppose that $\W$ satisfies $\Lc (\W) \leq \frac{\log 2}{n}$ or $\Gc(\W) \leq \frac{1}{2n}$, then $\Lc(\W) \leq 2 \Gc(\W).$
    \end{enumerate}
\end{lemma}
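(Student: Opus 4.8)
The plan is to reduce everything to the single scalar quantities $p_i := \sfti{\yi}{\W\hb_i}\in(0,1]$, for which $\Lc(\W)=\frac{1}{n}\sum_{i\in[n]}\log(1/p_i)$ and $\Gc(\W)=\frac{1}{n}\sum_{i\in[n]}(1-p_i)$ are both averages of nonnegative terms. Throughout I may assume $\Lc(\W)>0$, since otherwise $p_i=1$ for all $i$ and both $\Lc(\W)$ and $\Gc(\W)$ vanish, making (i) vacuous. For part (i) I would invoke the elementary sandwich $1-x\leq e^{-x}\leq 1-x+x^2/2$ (valid for $x\geq 0$) with the substitution $x=\log(1/p_i)\geq 0$, so $e^{-x}=p_i$. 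The left inequality reads $1-p_i\leq \log(1/p_i)$ termwise, and averaging gives $\Gc(\W)\leq\Lc(\W)$, the upper bound. The right inequality reads $1-p_i\geq\log(1/p_i)-\tfrac12\log^2(1/p_i)$ termwise, and averaging gives $\Gc(\W)\geq\Lc(\W)-\frac{1}{2n}\sum_{i\in[n]}\log^2(1/p_i)$.

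To finish (i) I would bound $\sum_{i\in[n]}\log^2(1/p_i)\leq\big(\sum_{i\in[n]}\log(1/p_i)\big)^2=n^2\Lc(\W)^2$, which holds because the summands $\log(1/p_i)$ are nonnegative, so every cross term in the expansion of the square is nonnegative. Substituting yields $\Gc(\W)\geq\Lc(\W)-\tfrac{n}{2}\Lc(\W)^2$, and dividing by $\Lc(\W)>0$ gives $\Gc(\W)/\Lc(\W)\geq 1-n\Lc(\W)/2$, completing (i).

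For part (ii), the plan is to first show that either hypothesis forces $p_i\geq\tfrac12$ for every $i$: if $\Lc(\W)\leq\frac{\log 2}{n}$ then $\log(1/p_i)\leq\sum_{j\in[n]}\log(1/p_j)=n\Lc(\W)\leq\log 2$, hence $p_i\geq\tfrac12$; and if $\Gc(\W)\leq\frac{1}{2n}$ then $1-p_i\leq\sum_{j\in[n]}(1-p_j)=n\Gc(\W)\leq\tfrac12$, again $p_i\geq\tfrac12$. Then I would use the chain $\log(1/p_i)\leq\frac{1-p_i}{p_i}\leq 2(1-p_i)$, where the first step is $\log x\leq x-1$ at $x=1/p_i$ and the second uses $p_i\geq\tfrac12$. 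Averaging over $i$ gives $\Lc(\W)\leq 2\Gc(\W)$.

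Everything here is elementary and I do not foresee a genuine obstacle. The only points needing a little care are the two passages from termwise inequalities to statements about the averages in (i) — especially $\sum_i a_i^2\leq(\sum_i a_i)^2$, which crucially requires $a_i=\log(1/p_i)\geq 0$ — and bookkeeping for the degenerate case $\Lc(\W)=0$ where the ratio in (i) is undefined. Choosing the right scalar substitution in each part ($x=\log(1/p_i)$ for (i), $x=1/p_i$ for the auxiliary bound in (ii)) is what keeps the argument transparent.
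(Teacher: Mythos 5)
Your proof is correct and follows essentially the same route as the paper: part (i) uses the same substitution $x=\log(1/s_i)$ with the sandwich $1-x\leq e^{-x}\leq 1-x+x^2/2$ and the bound $\sum_i \log^2(1/s_i)\leq\big(\sum_i\log(1/s_i)\big)^2$. For (ii) your treatment is a minor (and slightly cleaner) variant: the paper handles the $\Lc(\W)\leq\frac{\log 2}{n}$ case directly via part (i) and the $\Gc(\W)\leq\frac{1}{2n}$ case via the numerical fact that $\log(1/x)\leq 2(1-x)$ on $[0.2032,1]$, whereas you deduce $s_i\geq\tfrac12$ under either hypothesis and then use $\log(1/s_i)\leq\frac{1-s_i}{s_i}\leq 2(1-s_i)$, which avoids the numerical constant but is the same idea.
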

\begin{proof}
    (i) Denote for simplicity $s_i:=s_{i\yi} =\sfti{\yi}{\W\hb_i}$, thus
    $\Lc(\W)=\frac{1}{n}\sum_{i\in[n]}\log(1/s_i)$ and $\Gc(W)=\frac{1}{n}\sum_{i\in[n]}(1-s_i)$.
    For the upper bound, simply use the fact that $e^{x-1}\geq x,$ forall $x\in[0,1],$
    thus $\log(1/s_i)\geq 1-s_i$ for all $i\in[n]$. 
    
    The lower bound can be proved using the exact same arguments in the proof of \citet[Lemma C.7]{zhang2024implicit} for 
    the binary case. For completeness, we provide an alternative elementary proof. It suffices to prove for $n=1$ that for $s\in(0,1)$:
\begin{align}\label{eq:GL lb proof n=1}
1-s \geq \log(1/s) - \frac{1}{2}\log^2(1/s).
\end{align}
The general case follows by summing over $s=s_i$ and using $\sum_{i\in[n]}\log^2(1/s_i)\leq \left(\sum_{i\in[n]}\log(1/s_i)\right)^2$ since $\log(1/s_i)>0$. For \eqref{eq:GL lb proof n=1}, let $x=\log(1/s)>0$. The inequality becomes $e^{-x}\leq 1-x+x^2/2$, which holds for $x>0$ by the second-order Taylor expansion of $e^{-x}$ around $0$.
    
    (ii) The sufficiency of $\Lc(\W) \leq \frac{\log 2}{n}$ (to guarantee that $\Lc(\W) \leq 2\Gc(\W)$) follows from (i) and $\Lc(\W) \leq \frac{\log 2}{n} \leq \frac{1}{n}$. The inequality $\log(\frac{1}{x}) \leq 2(1-x)$ holds when $x \in [0.2032,1]$. This translates to the following sufficient condition on $s_{i y_i}$
    \begin{align*}
        s_{i} = \frac{e^{\ellb_i[y_i]}}{\sum_{c \in [k]} e^{\ellb_i[c]}} = \frac{1}{1+ \sum_{c \in [k], c \neq y_i} e^{\ellb_i[c] - \ellb_i[y_i]}} \geq 0.2032. 
    \end{align*}
    Under the assumption $\Gc(\W) \leq \frac{1}{2n}$, we have $1 - s_{i} \leq \sum_{i \in [n]} (1 - s_{i}) = n \Gc(\W) \leq \frac{1}{2}$, from which we obtain $s_{i} \geq \frac{1}{2} \geq 0.2032$ for all $i \in [n]$. 
\end{proof}

Lemma  \ref{lem:sep} shows that the data becomes separable when the loss is small. It is used in deriving the lower bound on the un-normalized margin. 
\begin{lemma} [Low $\Lc(\W)$ implies separability] \label{lem:sep}
    Suppose that there exists $\W \in \R^{k\times d}$ such that $\Lc (\W) \leq \frac{\log 2}{n}$, then we have
    \begin{align}
        (\eb_{y_i} - \e_c)^T \W \hb_i \geq 0, \quad \text{for all $i \in [n]$ and for all $c \in [k]$ such that $c \neq y_i$}. \label{eq:sep}
    \end{align}
\end{lemma}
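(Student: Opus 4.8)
The plan is to exploit the non-negativity of the individual summands in the cross-entropy loss written in its logsumexp form (second line of \eqref{eq:loss}): $\Lc(\W) = \frac{1}{n}\sum_{i\in[n]}\log\big(1+\sum_{c\neq y_i}\exp(-(\eb_{y_i}-\eb_c)^\top\W\hb_i)\big)$. Since every term $\log\big(1+\sum_{c\neq y_i}\exp(-(\eb_{y_i}-\eb_c)^\top\W\hb_i)\big)$ is itself $\geq 0$, fixing any index $i\in[n]$ and dropping the other $n-1$ non-negative terms gives
\[
\log\Big(1+\sum\nolimits_{c\neq y_i}\exp(-(\eb_{y_i}-\eb_c)^\top\W\hb_i)\Big) \;\leq\; n\,\Lc(\W)\;\leq\;\log 2,
\]
using the hypothesis $\Lc(\W)\leq \frac{\log 2}{n}$ in the last step.

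Exponentiating and subtracting $1$ yields $\sum_{c\neq y_i}\exp(-(\eb_{y_i}-\eb_c)^\top\W\hb_i)\leq 1$. Now fix any particular $c\neq y_i$; since all the summands are strictly positive, the single term $\exp(-(\eb_{y_i}-\eb_c)^\top\W\hb_i)$ is bounded above by the whole sum, hence $\exp(-(\eb_{y_i}-\eb_c)^\top\W\hb_i)\leq 1$. Taking logarithms (the log is increasing) gives $-(\eb_{y_i}-\eb_c)^\top\W\hb_i\leq 0$, i.e. $(\eb_{y_i}-\eb_c)^\top\W\hb_i\geq 0$. Since $i$ and $c\neq y_i$ were arbitrary, this establishes \eqref{eq:sep}.

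There is no real obstacle here: the argument is a two-line chain of elementary monotonicity/positivity steps, and the only mild subtlety is to notice that the threshold $\log 2$ is exactly what makes the post-exponentiation bound equal to $1$, which in turn is exactly what forces each exponential to be $\leq 1$. (One could equivalently phrase it via the softmax: the hypothesis forces $s_{iy_i}\geq 1/2$ for every $i$, which by $s_{iy_i}=1/(1+\sum_{c\neq y_i}e^{-(\eb_{y_i}-\eb_c)^\top\W\hb_i)})$ again gives $\sum_{c\neq y_i}e^{-(\eb_{y_i}-\eb_c)^\top\W\hb_i}\leq 1$ and hence each exponent is nonpositive — but the logsumexp route is the cleanest.)
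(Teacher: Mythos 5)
Your proposal is correct and follows essentially the same route as the paper: rewrite the per-sample CE term in logsumexp form, use non-negativity of the other summands to get $\log(1+\sum_{c\neq y_i}e^{-(\eb_{y_i}-\eb_c)^\top\W\hb_i})\leq n\Lc(\W)\leq\log 2$, exponentiate to bound the sum by $1$, and bound each individual exponential by the sum. The only cosmetic difference is that the paper passes through $\max_{c\neq y_i}$ while you fix an arbitrary $c\neq y_i$, which is the same step.
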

\begin{proof} We rewrite the loss into the form:
\begin{align*}
    \Lc(\W) = - \frac{1}{n} \sum_{i \in [n]} \log(\frac{e^{\ellb_{i}[y_i]}}{\sum_{c \in [k]} e^{\ellb_i[c]}}) = \frac{1}{n} \sum_{i \in [n]} \log(1 + \sum_{c\neq y_i} e^{-(\ellb_i[y_i] - \ellb_i[c])}).
\end{align*}
Fix any $i\in [n]$, by the assumption that $\Lc (\W) \leq \frac{\log 2}{n}$, we have the following:
\begin{align*}
    \log(1 + \sum_{c\neq y_i} e^{-(\ellb_i[y_i] - \ellb_i[c])} )\leq n \Lc(\W) \leq \log(2).
\end{align*}
This implies:
\begin{align*}
    e^{- \min_{c\neq y_i} (\ellb_i[y_i] - \ellb_i[c])} = \max_{c \neq y_i} e^{-(\ellb_i[y_i] - \ellb_i[c]) \leq }  \leq \sum_{c\neq y_i} e^{-(\ellb_i[y_i] - \ellb_i[c])} \leq 1.
\end{align*}
After taking $\log$ on both sides, we obtain the following: $\ellb_i[y_i] - \ellb_i[c] = (\eb_{y_i} - \e_c)^T \W \hb_i \geq 0$ for any $c \in [k]$ such that $c \neq y_i$. 
\end{proof}

Lemma \ref{lem:G_ratio} shows that the ratio of $\Gc(\W)$ at two points can be bounded by exponentiating the max-norm of their differences. It is used in handling the second order term in the Taylor expansion of the loss. 
\begin{lemma} [Ratio of $\Gc(\W)$] \label{lem:G_ratio}
    For any $\psi \in [0,1]$, we have the following:
    \begin{align*}
        \frac{\Gc(\W - \psi \eta \Deltab)}{\Gc(\W)} \leq e^{2 B \psi \eta \ninf{\Deltab}} \leq e^{2 B \psi \eta \norm{\Deltab}}.
    \end{align*}
\end{lemma}
\begin{proof} Note that the second inequality is by relationships \eqref{eq:A_rela}. Here, we only prove the first inequality. By the definition of $\Gc(\W)$, we have:
    \begin{align*}
        \frac{\Gc(\W - \psi \eta \Deltab)}{\Gc(\W)} = \frac{\sum_{i \in [n]} \bigl( 1 - \sfti{y_i}{(\W - \psi \eta \Deltab)\hb_i} \bigr) }{\sum_{i \in [n]} \bigl( 1 - \sfti{y_i}{\W\hb_i} \bigr)}. 
    \end{align*}
    For any $c \in [k]$ and $\vb, \vb' \in \R^{k}$, we have:
    \begin{align*}
        \frac{1 - \sfti{c}{\vb'}}{1 - \sfti{c}{\vb}} &= \frac{1 - \frac{e^{v'_c}}{\sum_{i \in [k]} e^{v'_i}}}{ 1 - \frac{e^{v_c}}{\sum_{i \in [k]} e^{v_i}}} \\
        &= \frac{\frac{\sum_{j \in [k], j \neq c} e^{v'_j}}{\sum_{i \in [k]} e^{v'_i}}}{ \frac{\sum_{j \in [k], j \neq c} e^{v_j}}{\sum_{i \in [k]} e^{v_i}}} \\
        &= \frac{\sum_{j \in [k], j \neq c} \sum_{i \in [k]} e^{v'_j + v_i}}{\sum_{j \in [k], j \neq c} \sum_{i \in [k]} e^{v_j + v'_i}} \\
        &\leq e^{2 \lVert \vb - \vb' \rVert_{\infty}}.
    \end{align*}

    The last inequality is because $\frac{e^{v'_j + v_i}}{e^{v_j + v'_i}} \leq e^{|v'_j - v_j| + |v_i - v'_i|} \leq e^{2 \lVert \vb - \vb' \rVert_{\infty}}$, which implies that $\sum_{j \in [k], j \neq c} \sum_{i \in [k]} e^{v'_j + v_i} \leq e^{2 \lVert \vb - \vb' \rVert_{\infty}} \sum_{j \in [k], j \neq c} \sum_{i \in [k]} e^{v_j + v'_i}$. Next, we specialize this result to $\vb' = (\W - \psi \eta \Deltab) \hb_i$, $\vb = \W \hb_i$, and $c = y_i$ for any $i \in [n]$ to obtain:
    \begin{align*}
        \frac{1 - \sfti{y_i}{(\W - \psi \eta \Deltab)\hb_i} \bigr)}{1 - \sfti{y_i}{\W\hb_i}} \leq e^{2 \eta \psi \lVert \Deltab \hb_i \rVert_{\infty}} \leq e^{2 B \eta \psi  \ninf{\Deltab}}.
    \end{align*}
    Then, we rearrange and sum over $i \in [n]$ to obtain: $ \sum_{i \in [n]} \bigl( 1 - \sfti{y_i}{(\W - \psi \eta \Deltab)\hb_i} \bigr) \leq e^{2 B \eta \psi  \ninf{\Deltab}} \sum_{i \in [n]} \bigl( 1 - \sfti{y_i}{\W\hb_i} \bigr)$, from which the desired inequality follows. The second inequality in the lemma statement follows from the relationship \eqref{eq:A_rela}.
\end{proof}

\section{Implicit Bias of Normalized Steepest Descent}
\label{sec:app_nsd}

\paragraph{Proof Overview} 
We consider a decay learning rate schedule of the form $\eta_t = \Theta(\frac{1}{t^a})$ where $a \in (0,1]$. The first step is to show that the \textbf{loss monotonically decreases} after certain time and the rate depends on $\Gc(\W)$. To obtain this, we apply Lemma \ref{lem:G and gradient} and Lemma \ref{lem:hessian_bound} to upper bound the first-order and second-order terms in the Taylor expansion of the loss \eqref{eq:taylor}, respectively. 
Next, we use the decrease in loss to derive a lower bound on the unnormalized margin which involves the ratio $\frac{\Gc(\W)}{\Lc(\W)}$. A crucial step involved is to find a time $\bar{t}_2$ such that separability \eqref{eq:sep} holds for all $t \geq \bar{t}_2$, and the existence of $\bar{t}_2$ is guaranteed by loss monotonicity such that the condition $\Lc(\W_t) \leq \frac{\log 2}{n}$ will be satisfied for sufficitently large t's.

Then, we argue that the ratio $\frac{\Gc(\W_t)}{\Lc(\W_t)}$ converges to $1$ exponentially fast (recalling that $1 \geq \frac{\Gc(\W_t)}{\Lc(\W_t)} \geq 1 - \frac{n \Lc(\W_t)}{2}$) by showing the loss $\Lc(\W_t)$ decreases exponentially fast. We first choose a time $t_{1}$ after $t_0$ (recall that $t_0$ is the time that satisfies Assumption \ref{ass:learning_rate_2}) such that  $\Lc(\W_{t+1}) \leq \Lc(\W_t) - \frac{\eta_t \gamma}{2} \Gc(\W_t)$ for all $t \geq t_{1}$. Next, we lower bound $G(\W_t)$ using $\Lc(\W_t)$. By Lemma \ref{lem:G and L}, there are two sufficient conditions (namely, $\Lc(\W_t) \leq \frac{\log 2}{n} \eqqcolon \tilde{\Lc}$ or $ \G(\W_t) \leq \frac{1}{2n}$) that guarantee $\Lc(\W_t) \leq 2 \Gc(\W_t)$. We choose a time $t_{2}$ (after $t_{1}$) that is sufficiently large such that there exists $t^* \in [t_{1}, t_{2}]$ for which we have $\Gc(\W_{t^*}) \leq \frac{\tilde{\Lc}}{2} \leq \frac{1}{2n}$. This not only guarantees that $\Lc(\W_{t^*}) \leq 2 \Gc(\W_{t^*})$ at time $t^*$, but also (crucially due to monotonicity) implies that $\Lc(\W_t) \leq \Lc(\W_{t^*}) \leq 2 \Gc(\W_{t^*}) \leq \frac{\log 2}{n}$ for all $t \geq t_{2}$. Thus, we observe that the other sufficient condition $\Lc(\W_t) \leq \frac{\log 2}{n}$ is satisfied, from which we conclude that $\Lc(\W_t) \leq 2 \Gc(\W_t)$ for all $t \geq t_{2}$. We remark that the choice of $t_{2}$ depends on $\Lc(\W_{t_{1}})$ (whose magnitude is bounded using Lemma \ref{lem:l_fast_bound}), and $t_2$ can be used as $\bar{t}_2$ above. To recap, $t_{1}$ is the time (after $t_0$) after which the successive loss decrease is lower bounded by the product $\eta_t \gamma \Gc(\W_t)$; $t_{2}$ (after $t_{1}$) is the time after which $\Lc(\W_t) \leq \frac{\log 2}{n}$ (thus, both $\Lc(\W_t) \leq 2 \Gc(\W_t)$ and separability condition \eqref{eq:sep} hold for all $t \geq t_{2}$).

In this following, we break the proof of implicit bias of NSD into several parts following previous arguments. Lemma \ref{lem:nsd_descent} shows the descent properties of NSD. It is used in Lemma \ref{lem:nsd_unnormalized_margin} to lower bound the un-normalized margin, and in the proof of Theorem \ref{thm:nsd} to show the convergence of $\frac{\Gc(\W_t)}{\Lc(\W_t)}$.

\begin{lemma} [NSD Descent] \label{lem:nsd_descent}
Under the same setting as Theorem \ref{thm:nsd}, it holds for all $t\geq 0$,
\begin{align*}
    \Lc(\W_{t+1}) &\leq \Lc(\W_{t}) - \gamma \eta_t (1 - \alpha_{s_1}  \eta_t )\Gc(\W_t), 
\end{align*}
where $\alpha_{s_1}$ is some constant that depends on $B$ and $\gamma$.
\end{lemma}
\begin{proof}
By Lemma \ref{lem:hessian}, we let $\W' = \W_{t+1}$, $\W = \W_t$, $\tilde{\Deltab}_t = \W_{t+1} - \W_t$, and define $\W_{t,t+1,\zeta} := \W_t + \zeta (\W_{t+1} - \W_t)$. We choose $\zeta^*$ such that $\W_{t,t+1,\zeta^*}$ satisfies \eqref{eq:taylor_eq1}, we have:   
    \begin{align}
        \Lc(\W_{t+1}) &= \Lc(\W_t) + \underbrace{\inp{\nabla \Lc(\W_{t})}{\tilde{\Deltab}_t}}_{\spadesuit_t} \nn
        \nn \\
        &\quad+ \frac{1}{2n}\sum_{i\in[n]} \underbrace{\hb_i^\top\tilde{\Deltab}_t^\top\left(\diag{\sft{\W_{t,t+1,\gamma}\hb_i}}-\sft{\W_{t,t+1,\zeta^*}\hb_i}\sft{\W_{t,t+1,\zeta^*}\hb_i}^\top\right)\tilde{\Deltab}_t\,\hb_i}_{\clubsuit_t}\,. \label{eq:taylor}
    \end{align}
    For the $\spadesuit_t$ term, we have by Lemma \ref{lem:G and gradient}: 
    \begin{align*}
        \spadesuit_t = -\eta_t \norm{\nabla \Lc(\W_t)}_{*} \leq -\eta_t \gamma G(\W_t).    
    \end{align*}
    For the $\clubsuit_t$ term, we let $\vb = \tilde{\Deltab}_t \hb_i$ and $\s = \sft{\W_{t,t+1,\zeta^*} \hb_i}$, and apply Lemma \ref{lem: hessian_bound_ready} to obtain 
    \begin{align*}
    \clubsuit_t \leq 4 \| \tilde{\Deltab}_t \|^2 \| \hb_i \|_{*}^2 (1 - \sfti{y_i}{\W_{t,t+1,\zeta^*} \hb_i}) \leq 4 \eta_t^2 B^2 (1 - \sfti{y_i}{\W_{t,t+1,\zeta^*} \hb_i}),
    \end{align*}
    where in the second inequality we have used $\norm{\tilde{\Deltab}_t} \leq \eta_t$ and $\norm{\hb_i}_{*} \leq \norm{\hb_i}_{1} \leq 1$. Putting these two pieces together, we obtain        
    \begin{align}
        \Lc(\W_{t+1}) &\leq \Lc(\W_t)  - \gamma \eta_t \Gc(\W_t) + 2 \eta_t^2 B^2 \frac{1}{n} \sum_{i \in [n]} (1 - \sfti{y_i}{\W_{t,t+1,\zeta^*} \hb_i}) \nn \\
        &= \Lc(\W_t) - \gamma \eta_t \Gc(\W_t) +  2 \eta_t^2  B^2 \Gc(\W_{t,t+1,\zeta^*}) \nn \\
        &\leq \Lc(\W_t) - \gamma \eta_t \Gc(\W_t) +  2 \eta_t^2  B^2 \sup_{\zeta \in [0,1]} \Gc(\W_{t,t+1,\zeta}) \nn \\
        &= \Lc(\W_t) - \gamma \eta_t \Gc(\W_t) +  2 \eta_t^2 B^2 \Gc(\W_{t}) \sup_{\zeta \in [0,1]} \frac{\Gc(\W_t + \zeta \tilde{\Deltab}_t)}{\Gc(\W_{t})} \nn \\
        &\stackrel{(a)}{\leq} \Lc(\W_t) - \gamma \eta_t \Gc(\W_t) +  2 \eta_t^2 B^2 \Gc(\W_{t}) \sup_{\zeta \in [0,1]} e^{2B \zeta \norm{\tilde{\Deltab}_t}} \nn \\
        &\stackrel{(b)}{\leq} \Lc(\W_t) - \gamma \eta_t \Gc(\W_t) +  2 \eta_t^2 B^2 e^{2B\eta_0} \Gc(\W_{t}), \label{eq: sign_descent_eq1}
    \end{align}
    where (a) is by Lemma \ref{lem:G_ratio} and (b) is by $\norm{\tilde{\Deltab}_t} \leq \eta_t$. Letting $\alpha_{s_1} = \frac{2 B^2 e^{2B\eta_0}}{\gamma}$, Eq. \eqref{eq: sign_descent_eq1} simplifies to: 
    \begin{align*}
        \Lc(\W_{t+1}) &\leq \Lc(\W_{t}) - \gamma \eta_t (1 - \alpha_{s_1}  \eta_t )\Gc(\W_t), 
    \end{align*}
    from which we observe that the loss starts to monotonically decrease after $\eta_t$ satisfies $\eta_t \leq \frac{1}{\alpha_{s_1}}$ for a decreasing learning rate schedule.
\end{proof}

For a decaying learning rate schedule, Lemma \ref{lem:nsd_descent} implies that the loss monotonically decreases after a certain time. Thus, we know that the assumption of Lemma \ref{lem:nsd_unnormalized_margin} can be satisfied. In the proof of Theorem  \ref{thm:nsd}, we will specify a concrete form of $\tilde{t}$ in Lemma \ref{lem:nsd_unnormalized_margin}. 
\begin{lemma}[NSD Unnormalized Margin] \label{lem:nsd_unnormalized_margin}
    Suppose that there exist $\tilde{t}$ such that $\Lc(\W_t) \leq \frac{\log 2}{n}$ for all $t > \tilde{t}$, then we have 
    \begin{align*}
        \min_{i \in [n], c \neq y_i} (\eb_{y_i} - \eb_c)^T \W_t \hb_i \geq \gamma \sum_{s=\tilde{t}}^{t-1} \eta_s \frac{\Gc(\W_s)}{\Lc(\W_s)} - \alpha_{s_2} \sum_{s=\tilde{t}}^{t-1} \eta_s^2,
    \end{align*}
    where $\alpha_{s_2}$ is some constant that depends on $B$.
\end{lemma} 
\begin{proof}
    We let $\alpha_{s_2} = 2B e^{2B\eta_0}$, then from \eqref{eq: sign_descent_eq1}, we have for $t > \tilde{t}$: 
    \begin{align}
        \Lc(\W_{t+1}) &\leq \Lc(\W_t) - \gamma \eta_t \Gc(\W_t) + \alpha_{s_2} \eta_t^2 \Gc(\W_t) \nn \\
        &= \Lc(\W_t) \bigl( 1 - \gamma \eta_t \frac{\Gc(\W_t)}{\Lc(\W_t)} + \alpha_{s_2} \eta_t^2 \frac{\Gc(\W_t)}{\Lc(\W_t)} \bigr) \nn \\
        &\leq \Lc(\W_t) \exp \bigl( - \gamma \eta_t \frac{\Gc(\W_t)}{\Lc(\W_t)} + \alpha_{s_2} \eta_t^2 \frac{\Gc(\W_t)}{\Lc(\W_t)} \bigr) \nn \\
        &\leq \Lc(\W_{\tilde{t}}) \exp \bigl( - \gamma \sum_{s=\tilde{t}}^{t} \eta_s \frac{\Gc(\W_s)}{\Lc(\W_s)} + \alpha_{s_2} \sum_{s=\tilde{t}}^{t} \eta_s^2 \bigr). \nn \\ 
        &\leq \frac{\log 2}{n} \exp \bigl( - \gamma \sum_{s=\tilde{t}}^{t} \eta_s \frac{\Gc(\W_s)}{\Lc(\W_s)} + \alpha_{s_2} \sum_{s=\tilde{t}}^{t} \eta_s^2 \bigr),
        \label{eq:sign_unnormalized_margin_eq1}
    \end{align}
    where the penultimate inequality uses Lemma \ref{lem:G and L}, and the last inequality uses the assumption that $\Lc(\W_t) \leq \frac{\log 2}{n}$ for all $t \geq \tilde{t}$. Then, we have for all $t > \tilde{t}$: 
    \begin{align*}
        e^{-\min_{i \in [n], c \neq y_i} (\eb_{y_i} - \eb_c)^T \W_t \hb_i} &= \max_{i \in [n]} e^{-\min_{c \neq y_i} (\eb_{y_i} - \eb_c)^T \W_t \hb_i} \\
        &\stackrel{(a)}{\leq} \max_{i \in [n]} \frac{1}{\log 2}  \log \bigl( 1 + e^{-\min_{c \neq y_i} (\eb_{y_i} - \eb_c)^T \W_t \hb_i} \bigr) \\
        &\leq \max_{i \in [n]} \frac{1}{\log 2}  \log(1 + \sum_{c \neq y_i} e^{-(\eb_{y_i} - \eb_c)^T \W_t \hb_i}) \leq \frac{n \Lc(\W_t)}{\log 2} \\
        &\stackrel{(b)}{\leq} \exp \bigl( - \gamma \sum_{s=\tilde{t}}^{t-1} \eta_s \frac{\Gc(\W_s)}{\Lc(\W_s)} + \alpha_{s_2} \sum_{s=\tilde{t}}^{t-1} \eta_s^2 \bigr).
    \end{align*}
    (a) is by the following: the assumption $\Lc(\W_t) \leq \frac{\log 2}{n}$ implies that $\min_{c \neq y_i} (\eb_{y_i} - \eb_c)^T \W_t \hb_i \geq 0$ for all $i \in [n]$ by Lemma \ref{lem:sep}. We also know the inequality $\frac{\log(1 + e^{-z})}{e^{-z}} \geq \log 2$ holds for any $z \geq 0$. Then, for any $i \in [n]$, we can set $z = \min_{c \neq y_i} (\eb_{y_i} - \eb_c)^T \W_t \hb_i$ to obtain the desired inequality; and (b) is by \eqref{eq:sign_unnormalized_margin_eq1}. Finally, taking $\log$ on both sides leads to the result. 
\end{proof}

Next Lemma upper bounds the p-norm of NSD's iterates using learning rates. It is used in the proof of Theorem \ref{thm:nsd}. 
\begin{lemma} [NSD $\norm{\W_t}$] \label{lem:nsd_Wt} For NSD, we have for any $t > 0$ that
\begin{align*}
    \norm{\W_t} \leq \norm{\W_0} + \sum_{s=0}^{t-1} \eta_s.
\end{align*}    
\end{lemma}
\begin{proof}
    By the NSD update rule \eqref{eq:nsd_main}, we have
    \begin{align*}
        \W_{t+1} = \W_0 - \sum_{s=0}^t \eta_s \Deltab_s.  
    \end{align*}
    This leads to $\norm{\W_t} \leq \norm{\W_0} + \sum_{s=0}^{t-1} \eta_s$\, given $\Deltab_s \leq 1$ for all $s \geq 0$.
\end{proof}

The main step in the proof of Theorem \ref{thm:nsd} is to determine the time that satisfies the assumption in Lemma \ref{lem:nsd_unnormalized_margin} and show the convergence of $\frac{\Gc(\W_t)}{\Lc(\W_t)}$. Then, Lemma \ref{lem:nsd_unnormalized_margin} and Lemma \ref{lem:nsd_Wt} will be combined to obtain the final result.

\begin{theorem} \label{thm:nsd} Suppose that Assumption \ref{ass:sep}, \ref{ass:learning_rate_1}, and \ref{ass:data_bound} hold, then there exists $t_{s_2} = t_{s_2}(n, \gamma, B, \W_0)$ such that NSD achieves the following for all $t > t_{s_2}$
\begin{align*}
     \left|\frac{\min_{i \in [n], c \neq y_i} (\e_{y_i} - \e_c)^T \W_t \hb_i}{\norm{\W_t}} - \gamma\right| &\leq \mathcal{O}\Bigg(\frac{\sum_{s=t_{s_2}}^{t-1} \eta_s e^{-\frac{\gamma}{4} \sum_{\tau = t_{s_2}}^{s-1}\eta_{\tau}} + \sum_{s=0}^{t_{s_2}-1}\eta_s + \sum_{s = t_{s_2}}^{t-1}\eta_s^2}{\sum_{s=0}^{t-1} \eta_s}\Bigg).
\end{align*}
\end{theorem}

\begin{proof}
\textbf{Determination of $t_{s_1}$.} In Lemma \ref{lem:nsd_descent} we choose $t_{s_1}$ such that $\eta_{t} \leq \frac{1}{2 \alpha_{s_1}}$ for all $t \geq t_{s_1}$.  Considering $\eta_t = \Theta(\frac{1}{t^a})$ (where $a \in (0,1]$), we set $t_{s_1} = (2 \alpha_{s_1})^{\frac{1}{a}} = (\frac{4 B^2 e^{2 B \eta_0}}{\gamma})^{\frac{1}{a}}$. Then, we have for all $t \geq t_{s_1}$
\begin{align}
    \Lc(\W_{t+1}) \leq \Lc(\W_t) - \frac{\eta_t \gamma}{2} \Gc(\W_t). \label{eq:signgd_main_eq1} 
\end{align}
Rearranging this equation and using non-negativity of the loss we obtain $\gamma \sum_{s = t_{s_1}}^t \eta_s \Gc(\W_s) \leq 2\Lc(\W_{t_{s_1}})$. \\
\textbf{Determination of $t_{s_2}$.} By Lemma \ref{lem:l_fast_bound}, we can bound $\Lc(\W_{t_{s_1}})$ as follows
\begin{align*}
    |\Lc(\W_{t_{s_1}}) - \Lc(\W_0)| \leq 2 B \norm{\W_{t_{s_1}} - \W_0} \leq 2 B \sum_{s = 0}^{t_{s_1}-1} \eta_s \norm{\Deltab_s}
    \leq 2 B \sum_{s = 0}^{t_{s_1}-1} \eta_s,
\end{align*}
where the last inequality is by $\norm{\Deltab_s} \leq 1$ for all $s \geq 0$. Combining this with the result above and letting $\tilde{\Lc} \coloneqq \frac{\log 2}{n}$, we obtain
\begin{align*}
    \Gc(\W_{t^*}) = \min_{s \in [t_{s_1}, t_{s_2}]} \Gc(\W_s) \leq \frac{ 2 \Lc(\W_0) + 4B \sum_{s = 0}^{t_{s_1}-1} \eta_s}{\gamma \sum_{s=t_{s_1}}^{t_{s_2}} \eta_s} \leq \frac{\tilde{\Lc}}{2} \leq \frac{1}{2n}, 
\end{align*}
from which we derive the sufficient condition on $t_{s_2}$ to be $\sum_{s=t_{s_1}}^{t_{s_2}} \eta_s \geq \frac{4\Lc(\W_0) + 8 B \sum_{s = 0}^{t_{s_1}-1} \eta_s}{\gamma \tilde{\Lc}}$.\\ 
\textbf{Convergence of $\frac{\Gc(\W_t)}{\Lc(\W_t)}$} Given $\Gc(\W_{t^*}) \leq \frac{\tilde{\Lc}}{2} \leq \frac{1}{2n}$, we obtain that $\Lc(\W_t) \leq \Lc(\W_{t^*}) \leq 2 \Gc(\W_{t^*}) \leq \tilde{L}$ for all $t \geq t_{s_2}$, where the first and second inequalities are due to monotonicity in the risk and Lemma \ref{lem:G and L}, respectively. Thus, the other sufficient condition $\Lc(\W_t) \leq \frac{\log 2}{n}$ in Lemma \ref{lem:G and L} is satisfied, from which we conclude that $\Lc(\W_t) \leq 2 \Gc(\W_t)$ for all $t \geq t_{s_2}$.   
Substituting this into \eqref{eq:signgd_main_eq1}, we obtain for all $t > t_{s_2}$ 
\begin{align*}
    \Lc(\W_{t}) \leq (1 - \frac{\gamma \eta_{t-1}}{4}) \Lc(\W_{t-1}) \leq \Lc(\W_{t_{s_2}}) e^{-\frac{\gamma}{4} \sum_{s = t_{s_2}}^{t-1} \eta_s} \leq \tilde{\Lc} e^{-\frac{\gamma}{4} \sum_{s = t_{s_2}}^{t-1} \eta_s}
\end{align*}
Then, by Lemma \ref{lem:G and L}, we obtain
\begin{align}
    \frac{\Gc(\W_t)}{\Lc(\W_t)} \geq 1 - \frac{n \Lc(\W_t)}{2} \geq 1 - \frac{n\tilde{\Lc} e^{-\frac{\gamma}{4} \sum_{s = t_{s_2}}^{t-1}\eta_s}}{2} \geq 1 - e^{-\frac{\gamma}{4} \sum_{s = t_{s_2}}^{t-1} \eta_s}. \label{eq:signgd_main_eq2} 
\end{align}\\
\textbf{Margin Convergence} Finally, we combine Lemma \ref{lem:nsd_unnormalized_margin}, Lemma \ref{lem:nsd_Wt}, and \eqref{eq:signgd_main_eq2} to obtain 
\begin{align*}
    |\frac{\min_{i \in [n], c \neq y_i} (\e_{y_i} - \e_c)^T \W_t \hb_i}{\norm{\W_t}} - \gamma| &\leq \frac{\gamma \bigl(\norm{\W_0} + \sum_{s=t_{s_2}}^{t-1} \eta_s e^{-\frac{\gamma}{4} \sum_{\tau = t_{s_2}}^{s-1}\eta_{\tau}} + \sum_{s=0}^{t_{s_2}-1}\eta_s \bigr) + \alpha_{s_2} \sum_{s = t_{s_2}}^{t-1}\eta_s^2}{\norm{\W_0} + \sum_{s=0}^{t-1} \eta_s} \\
    &\leq \mathcal{O}(\frac{\sum_{s=t_{s_2}}^{t-1} \eta_s e^{-\frac{\gamma}{4} \sum_{\tau = t_{s_2}}^{s-1}\eta_{\tau}} + \sum_{s=0}^{t_{s_2}-1}\eta_s + \sum_{s = t_{s_2}}^{t-1}\eta_s^2}{\sum_{s=0}^{t-1} \eta_s})
\end{align*}
\end{proof}

Next, we explicitly upper bound $t_{s_2}$ in Theorem \ref{thm:nsd} to derive the margin convergence rates of NSD.

\begin{corollary} \label{cor:nsd}
    Consider learning rate schedule of the form $\eta_t = \Theta(\frac{1}{t^a})$ where $a \in (0,1]$, under the same setting as Theorem \ref{thm:nsd}, then we have for SignGD 
    \[ 
   |\frac{\min_{i \in [n], c \neq y_i} (\e_{y_i} - \e_c)^T \W_t \hb_i}{\norm{\W_t}} - \gamma| = \left\{
    \begin{array}{ll}
           \mathcal{O} (\frac{t^{1-2a}+n}{t^{1-a}}) &  \text{if} \quad a < \frac{1}{2}\\
            \mathcal{O} (\frac{\log t + n}{t^{1/2}}) &  \text{if}\quad a=\frac{1}{2} \\
            \mathcal{O} (\frac{n}{t^{1-a}}) &  \text{if}\quad \frac{1}{2} < a<1 \\
            \mathcal{O} (\frac{n}{\log t}) &  \text{if} \quad a=1
    \end{array} 
    \right. 
    \]
\end{corollary}

\begin{proof}
    Recall that $t_{s_1} = (\frac{4 B^2 e^{2 B \eta_0}}{\gamma})^{\frac{1}{a}} =: C_{s_1}$, and the condition on $t_{s_2}$ is 
    $\sum_{s=t_{s_1}}^{t_{s_2}} \eta_s \geq \frac{4\Lc(\W_0) + 8 B \sum_{s = 0}^{t_{s_1}-1} \eta_s}{\gamma \tilde{\Lc}}$, where $\tilde{L} = \frac{\log 2}{n}$. We can apply integral approximations to the terms that involve sums of learning rates to obtain 
    \begin{align*}
        t_{s_2} \leq C_{s_2} n^{\frac{1}{1-a}} t_{s_1} + C_{s_3} n^{\frac{1}{1-a}} \Lc(\W_0)^{\frac{1}{1-a}}.
    \end{align*}
    Given $t_{s_1}$ is some constant, this further implies that
    \begin{align*}
        \sum_{s=0}^{t_{s_2}-1} \eta_s = \mathcal{O}(t_{s_2}^{1-a} ) = \mathcal{O}(n + n \Lc(\W_0)).
    \end{align*}
    Next, we focus on the term $\sum_{s=t_{s_2}}^{t-1} \eta_s^2$. For $a > \frac{1}{2}$, this term can be bounded by some constant. For $a < \frac{1}{2}$, we have $\sum_{s=t_{s_2}}^{t-1} \eta_s^2 = \mathcal{O}(t^{1 - 2a})$, and it evaluates to $\mathcal{O}(\log t)$ for $a = \frac{1}{2}$. Finally, we have that $\sum_{s=0}^{t-1} \eta_s = \mathcal{O}(t^{1-a})$ for $a < 1$ and $\sum_{s=0}^{t-1} \eta_s = \mathcal{O}(\log t)$ for $a = 1$. The term $\sum_{s=t_{s_2}}^{t-1} \eta_s e^{-\frac{\gamma}{4} \sum_{\tau = t_{s_2}}^{s-1}\eta_{\tau}}$ is bounded by some constant as shown in \citet[Corollary 4.7]{zhang2024implicit}.
\end{proof}

\section{Implicit Bias of Normalized Momentum Steepest Descent} \label{sec:app_nmd}
Recall that $\norm{\cdot}$ refer to either entry-wise or Schatten p-norm with its dual norm denoted as $\dual{\cdot}$.  

\begin{lemma} \label{lem:first_order_muon} Consider the following $\Wd := \W - \eta \Deltab$, where $\Deltab \in \R^{k \times d}$ is defined in \eqref{eq:update_muon}. Let $\M \in \R^{k \times d}$ be any matrix. It holds:
\begin{align*}
    \langle \nabla \Lc(\W) , \Wd - \W\rangle \leq 2 \eta \none{\Omegab} - \eta \gamma \Gc(\W),
\end{align*}
where $\Omegab$ is defined to be $\Omegab := \M - \nabla \Lc(\W)$. 
\end{lemma}

\begin{proof}
We define $\Omegab := \M - \nabla \Lc(\W)$ to obtain
    \begin{align*}
        \langle \nabla \Lc(\W) , \Wd - \W\rangle &= \langle \nabla \Lc(\W) - \M, \Wd - \W \rangle + \langle \M, \Wd - \W\rangle \\
        &= -\eta \langle \nabla \Lc(\W) - \M, \Delta \rangle - \eta \langle \M, \Delta \rangle \\
        &\stackrel{(a)}{\leq}  \eta \lVert \nabla \Lc(\W) - \M \rVert_{*} \lVert \Delta \rVert - \eta \lVert \M \rVert_{*} \\
        &\stackrel{(b)}{\leq} \eta \lVert \M - \nabla \Lc(\W) \rVert_{*} - \eta \lVert \M - \nabla \Lc(\W) + \nabla \Lc(\W) \rVert_{*} \\
        &\stackrel{(c)}{\leq} \eta \none{\Omegab} - \eta \lVert \Omegab - (-\nabla \Lc(\W)) \rVert_{*} \\
        &\stackrel{(d)}{\leq} \eta \none{\Omegab} - \eta(\lVert \nabla \Lc(\W) \rVert_{*} - \lVert \Omegab \rVert_{*}) \\
        &= 2 \eta \none{\Omegab} - \eta \lVert \nabla \Lc(\W) \rVert_{*} \\
        &\stackrel{(e)}{\leq} 2 \eta \none{\Omegab} - \eta \gamma \Gc(\W),
    \end{align*}
    where (a) is by Cauchy Schwarz inequality and $\langle \M, \Deltab \rangle = \norm{\M}_{*}$, (b) is by $\norm{\Deltab} \leq 1$, (c) is via Lemma \ref{lem:snorm dominate}, (d) is by reverse triangle inequality, and (e) is via Lemma \ref{lem:G and gradient}. 
\end{proof}

The following Lemma bounds the entries of the momentum ($\M_t$) of NMD in terms of the product of $\eta_t$ with the sume of $\Gc_c(\W_t)$ and $\Qc_c(\W_t)$.  

\begin{lemma} \label{lem:first_G_main_app} Suppose that Ass. \ref{ass:sep}, \ref{ass:learning_rate_1}, \ref{ass:learning_rate_2}, and \ref{ass:data_bound} hold. Let $c \in [k]$ and $j \in [d]$. There exists time $t_0$ such that  for all $t \geq t_0$: 
\begin{align*}
    |\mathbf{M}_t[c,j] -(1-\beta_{1}^{t+1}) & \nabla \Lc(\W_t)[c,j]|  \leq \alpha_M \eta_t \bigl( \Gc_c(\W_t)+ \Qc_c(\W_t) \bigr),
\end{align*}
where $\alpha_M := B(1-\beta_1)c_2$. 
\end{lemma}

\begin{proof}
For any fixed $c \in [k]$ and $j \in [d]$,
    \begin{align}
        |\mathbf{M}_t[c,j] - (1-\beta_{1}^{t+1})\nabla \Lc(\W_t)[c,j]| &= |\sum_{\tau=0}^{t} (1-\beta_1) \beta_1^{\tau} \bigl(\nabla \Lc (\W_{t-\tau})[c,j] - \nabla \Lc (\W_{t}) [c,j]\bigr)| \nn \\
        &\leq \sum_{\tau=0}^{t} (1-\beta_1) \beta_1^{\tau} \underbrace{|\nabla \Lc (\W_{t -\tau})[c,j] - \nabla \Lc (\W_{t}) [c,j]|}_{\clubsuit}. \label{eq: first_G_eq1}
    \end{align}
We first notice that for any  $\W\in\R^{k\times d}$, we have $\nabla \Lc (\W)[c,j] = \e_c^T \nabla \Lc (\W) \e_j = -\frac{1}{n} \sum_{i \in [n]} \e_c^T \bigl( \e_{y_i} - \sft{\W \hb_i} \bigr) \hb_i^T \e_j = -\frac{1}{n} \sum_{i \in [n]} \e_c^T \bigl( \e_{y_i} - \sft{\W \hb_i} \bigr) h_{ij}$. 
Then, the gradient difference term becomes 
    \begin{align*}
        \clubsuit &= |-\frac{1}{n} \sum_{i \in [n]} \e_c^T \bigl( \e_{y_i} - \sft{\W_{t - \tau} \hb_i} \bigr) h_{ij} + \frac{1}{n} \sum_{i \in [n]} \e_c^T \bigl( \e_{y_i} - \sft{\W_t \hb_i} \bigr) h_{ij}| \\ 
        &= |\frac{1}{n}\sum_{i \in [n]} \e_c^T\bigl(\sft{\W_{t-\tau}\hb_i} - \sft{\W_{t}\hb_i}\bigr)h_{ij}| \\
        &= |\frac{1}{n} \sum_{i \in [n]} \bigl( \sfti{c}{\W_{t - \tau}\hb_i} - \sfti{c}{\W_{t}\hb_i} \bigr) h_{ij}| \\
        &\leq B \frac{1}{n} \sum_{i \in [n]} |\sfti{c}{\W_{t - \tau}\hb_i} - \sfti{c}{\W_{t}\hb_i}| \\
        &= B \underbrace{\frac{1}{n} \sum_{i \in [n], y_i \neq c} |\sfti{c}{\W_{t - \tau}\hb_i} - \sfti{c}{\W_{t}\hb_i}|}_{\clubsuit_1} + B \underbrace{\frac{1}{n} \sum_{i \in [n], y_i = c} |\sfti{c}{\W_{t  - \tau}\hb_i} - \sfti{c}{\W_{t}\hb_i}|}_{\clubsuit_2}
    \end{align*}
Next, we link the $\clubsuit_1$ and $\clubsuit_2$ terms with $\Gc(\W)$. Starting with the first term, we obtain: 
\begin{align*}
    \clubsuit_1 &= \frac{1}{n} \sum_{i \in [n], y_i \neq c} \sfti{c}{\W_{t}\hb_i} |\frac{\sfti{c}{\W_{t - \tau}\hb_i}}{\sfti{c}{\W_{t}\hb_i}} - 1| \\
    & \stackrel{(a)}{\leq} \frac{1}{n} \sum_{i \in [n], y_i \neq c} \sfti{c}{\W_{t}\hb_i} (e^{2 \lVert (\W_{t - \tau} - \W_t) \hb_i \rVert_{\infty}} - 1)\\
    & \stackrel{(b)}{\leq} \frac{1}{n} \sum_{i \in [n], y_i \neq c} \sfti{c}{\W_{t}\hb_i} (e^{2 B  \ninf{\W_{t - \tau} - \W_t}} - 1) \\
    & \stackrel{(c)}{\leq} \bigl( e^{2 B \sum_{s=1}^{\tau} \eta_{t-s} \ninf{\Deltab_{t-s}}} - 1 \bigr) \bigl( \frac{1}{n} \sum_{i \in [n], y_i \neq c} \sfti{c}{\W_{t}\hb_i}  \bigr) \\ 
    & \stackrel{(d)}{\leq} \bigl( e^{2 B \sum_{s=1}^{\tau} \eta_{t-s} } - 1 \bigr) \Qc_c(\W_t),
\end{align*}
where (a) is by Lemma \ref{lem:unified_helper}, (b) is by $\lVert \hb_i \rVert_{1} \leq B$ for all $i \in [n]$, (c) is by \eqref{eq:nsd_main} and triangle inequality, and $(d)$ is by $\ninf{\Deltab_{t-s}} \leq \norm{\Deltab_{t-s}} \leq 1 $ (for any entry-wise or Schatten p-norm) and the definition of $\Gc(\W_t)$. 
For the second term, we obtain: 
\begin{align*}
    \clubsuit_2 &= \frac{1}{n} \sum_{i \in [n], y_i = c} |\sfti{c}{\W_{t - \tau}\hb_i} - \sfti{c}{\W_{t}\hb_i}| \\
    &=  \frac{1}{n} \sum_{i \in [n], y_i = c} |\sfti{y_i}{\W_{t - \tau}\hb_i} - 1 + 1 - \sfti{y_i}{\W_{t}\hb_i}| \\
    &=  \frac{1}{n} \sum_{i \in [n], y_i = c} \bigl(1 - \sfti{y_i}{\W_{t}\hb_i}\bigr) | \frac{ \sfti{y_i}{\W_{t - \tau}\hb_i} - 1}{1 - \sfti{y_i}{\W_{t}\hb_i}} + 1| \\
    &=  \frac{1}{n} \sum_{i \in [n], y_i = c} \bigl(1 - \sfti{y_i}{\W_{t}\hb_i}\bigr) | \frac{ 1 - \sfti{y_i}{\W_{t - \tau}\hb_i}}{1 - \sfti{y_i}{\W_{t}\hb_i}} - 1| \\
    & \stackrel{(e)}{\leq} \frac{1}{n} \sum_{i \in [n], y_i = c} \bigl(1 - \sfti{y_i}{\W_{t}\hb_i}\bigr) (e^{2 \lVert (\W_{t - \tau} - \W_t) \hb_i \rVert_{\infty}} - 1) \\
    & \stackrel{(f)}{\leq} \bigl( e^{2 B \sum_{s=1}^{\tau} \eta_{t-s} } - 1 \bigr) \Gc_c(\W_t),
\end{align*}
where (e) is by Lemma \ref{lem:unified_helper}, and (f) is by the same approach taken for $\clubsuit_1$. Based on the upper bounds for $\clubsuit_1$ and $\clubsuit_2$, we obtain the following: $\clubsuit \leq 2B \bigl( e^{2 \alpha B \sum_{s=1}^{\tau} \eta_{t-s} } - 1 \bigr) (\Gc_c(\W_t) + \Qc_c(\W_t))$. Then, we substitute this into \eqref{eq: first_G_eq1} to obtain: 
\begin{align*}
     |\mathbf{M}_t[c,j] - (1-\beta_{1}^{t+1})\nabla \Lc(\W_t)[c,j]| &\leq B (1-\beta_1) (\Gc_c(\W_t) + \Qc_c(\W_t)) \sum_{\tau = 0}^t \beta_1^{\tau} \bigl( e^{2 B \sum_{s=1}^{\tau} \eta_{t-s} } - 1 \bigr) \\
     & \stackrel{(g)}{\leq} B(1-\beta_1)c_2 \eta_t (\Gc_c(\W_t) + \Qc_c(\W_t)),
\end{align*}
where (g) is by the Assumption \ref{ass:learning_rate_2}. 
\end{proof}

\begin{lemma} Let $\Omegab_t = \M_t - \nabla \Lc(\W_t)$, where $\M_t$ is defined in \eqref{eq: adam1}. Then, it holds
    \begin{align*}
        \none{\Omegab_t} \leq 2 B \beta_1^{t/2} \Gc(\W_t) + 2 \alpha_M d \eta_t \Gc(\W_t),
    \end{align*}
    where $\alpha_M := B(1-\beta_1)c_2$.
\end{lemma}

\begin{proof}
    For simplicity, we drop the subscripts $t$.
    Denote $\Tc_c(\W) \coloneqq \Gc_c(\W) + \Qc_c(\W)$. Then, by Lemma \ref{lem:first_G_main_app}, we have for any $c \in [k]$ and $j \in [d]$: 
    \begin{align*}
        \M[c,j] &= (1 - \beta_1^{t+1}) \nabla \Lc(\W)[c,j] + \alpha_M \eta \Tc_c(\W) \epsilon_{m,c,j} \\
        &= \nabla \Lc(\W)[c,j] - \beta_1^{t+1} \nabla \Lc(\W)[c,j] + \alpha_M \eta \Tc_c(\W) \epsilon_{m,c,j},
    \end{align*}
    where $\alpha_M := B(1-\beta_1)c_2$ and $\epsilon_{m,c,j}$ is some constant s.t. $|\epsilon_{m,c,j}| \leq 1$. Recall that $\Omegab := \M - \nabla \Lc(\W)$, then we have
    \begin{align*}
        |\Omegab[c,j]| &= | \M[c,j] - \nabla \Lc(\W)[c,j]| \\
        &= | - \beta_1^{t+1} \nabla \Lc(\W)[c,j] + \alpha_M \eta \Tc_c(\W) \epsilon_{m,c,j}| \\
        &\leq \beta_1^{t+1} |\nabla \Lc(\W)[c,j]| + \alpha_M \eta \Tc_c(\W).   
    \end{align*}
    This implies the following: 
    \begin{align*}
        \none{\Omegab} = \sum_{c,j}|\Omegab[c,j]| &\leq \beta_1^{t+1} \sum_{c,j} |\nabla \Lc(\W)[c,j]| + \alpha_M \eta \sum_{c,j}\Tc_c(\W) \\
        &= \beta_1^{t+1} \none{\nabla \Lc(\W)} + 2 \alpha_M d \eta \Gc(\W) \\
        &\leq 2 B \beta_1^{t/2} \Gc(\W) + 2 \alpha_M d \eta \Gc(\W), 
    \end{align*}
    where in the last inequality we have used Lemma  \ref{lem:G and gradient}.  
\end{proof}

\begin{lemma}
    \label{lem:NMD_unnormalized_margin}
    Suppose that there exist $\tilde{t}$ such that $\Lc(\W_t) \leq \frac{\log 2}{n}$ for all $t > \tilde{t}$, then we have 
    \begin{align*}
        \min_{i \in [n], c \neq y_i} (\eb_{y_i} - \eb_c)^T \W_t \hb_i \geq \gamma \sum_{s=\tilde{t}}^{t-1} \eta_s \frac{\Gc(\W_s)}{\Lc(\W_s)} - a_2 \sum_{} \eta_s^2 - Q
    \end{align*}
    where $a_2 = (4\alpha_M  + 2 B^2 e^{2B \eta_0})d$ and $Q =  4 B \eta_0 \frac{1}{1-\beta_1^{1/2}}$.
\end{lemma} 

\begin{proof}
    We follow a similar approach as Lemma \ref{lem:nsd_descent} to show the descent of NMD. Specifically, we apply Lemma \ref{lem:first_order_muon} to bound the first-order term. For the Hessian term, we apply Lemma \ref{lem: hessian_bound_ready} and Lemma \ref{lem:G_ratio} similar to NSD. Then, we can obtain the following
    \begin{align*}
        \Lc(\W_{t+1}) &\leq \Lc(\W_t) - \eta_t \gamma \Gc(\W_t) + 2 \eta_t \none{\Omegab_t} + 2 \eta_t^2 B^2 e^{2B \eta_0} \Gc(\W_t) \\
        &\stackrel{(a)}{\leq} \Lc(\W_t) - \eta_t \gamma \Gc(\W_t) + 4 B \beta_1^{t/2} \eta_t \Gc(\W_t) + 4 \alpha_{M} \eta_t^2 d \Gc(\W_t) + 2 \eta_t^2 B^2 e^{2 B \eta_0} \Gc(\W_t) \\
        &\stackrel{(b)}{\leq} \Lc(\W_t) - \eta_t \gamma \Gc(\W_t) + a_1 \beta_1^{t/2} \eta_t \Gc(\W_t) + a_2 \eta_t^2 d \Gc(\W_t) \\
        &\leq \Lc(\W_{\tilde{t}}) \exp \bigl( -\gamma \sum_{s=\tilde{t}}^t \eta_s \frac{\Gc(\W_s)}{\Lc(\W_s)} + a_1 \sum_{s = \tilde{t}}^{t} \beta_1^{s/2} \eta_s + a_2 d \sum_{s = \tilde{t}}^t \eta_s^2 \bigr) \\
        &\stackrel{(c)}{\leq} \frac{\log 2}{n} \exp \bigl( -\gamma \sum_{s=\tilde{t}}^t \eta_s \frac{\Gc(\W_s)}{\Lc(\W_s)} + a_2 d \sum_{s = \tilde{t}}^t \eta_s^2 + Q\bigr),
    \end{align*}
    where (a) is by Lemma \ref{lem:first_order_muon}. In (b), we have defined $a_1 := 4B$ and $a_2 = (4\alpha_M  + 2 B^2 e^{2B \eta_0})d$. In (c), we have used the assumption and defined $Q :=  a_1 \eta_0 \frac{1}{1-\beta_1^{1/2}} \geq a_1 \sum_{s=\tilde{t}}^t \beta_1^{s/2} \eta_s$. The rest of the proof follows the same steps as Lemma \ref{lem:nsd_unnormalized_margin}.  
\end{proof}

\begin{theorem} Suppose that Ass. \ref{ass:sep}, \ref{ass:learning_rate_1}, \ref{ass:learning_rate_2}, and \ref{ass:data_bound} hold. Set learning rate $\eta_t = \Theta(\frac{1}{t^{1/2}})$. The margin gap of NMD's iterates satisfy
\begin{align*}
    \gamma - \frac{ \min_{i \in [n], c \neq y_i} (\eb_{y_i} - \eb_c)^T \W_t \hb_i }{\norm{\W_t}} \leq O(\frac{d \log t + d n}{t^{1/2}}). 
\end{align*}
\end{theorem}

\begin{proof}
    Given the updates of NMD are normalized (i.e., $\lVert \Deltab \rVert \leq 1$), we can obtain the following via Lemma \ref{lem:NMD_unnormalized_margin}:
    \begin{align*}
        \gamma - \frac{\min_{i \in [n], c \neq y_i} (\eb_{y_i} - \eb_c)^T \W_t \hb_i}{\norm{\W_t}} &\leq \frac{\gamma (\norm{W_0} + \sum_{s=0}^{t_2-1} \eta_s + \sum_{s=t_2}^{t-1} \eta_s e^{\frac{\gamma}{4} \sum_{\tau = t_2}^{s-1} \eta_{\tau}}) + a_2 d \sum_{s = t_2 }^{t-1} \eta_s^2 + Q}{\norm{\W_0} + \sum_{s=0}^{t-1} \eta_s} \\
        &\leq O(\frac{\sum_{s=t_2}^{t-1} \eta_s e^{-\frac{\gamma}{4} \sum_{\tau = t_2}^{s-1} \eta_{\tau}} + \sum_{s=0}^{t_2-1} \eta_s + d \sum_{s = t_2 }^{t-1} \eta_s^2}{\sum_{s=0}^{t-1} \eta_s}).
    \end{align*}
    Then, we follow the same approach as Corollary \ref{cor:nsd} for a decreasing learning rate of the form $\eta_t = \Theta(\frac{1}{t^a})$. Specifically, we have $t_1 = \Theta(d^{1/a})$ and $t_2 \leq C_1 n^{\frac{1}{1-a}} t_1 + C_2 n^{\frac{1}{1 - a}} L(\W_0)^{\frac{1}{1-a}}$. This leads to 
    \begin{align*}
        \sum_{s=0}^{t_2 - 1} \eta_s = O(t_2^{1-a}) = n t_1^{1-a} + n L(\W_0) + d \log(t).
    \end{align*}
    Thus, we have the margin gap upper bounded by $O(\frac{nd + d \log(t)}{t^{1/2}})$. 
\end{proof}

\section{Other multiclass loss functions} \label{sec:app_other_losses}

\subsection{Exponential Loss}

The multiclass exponential loss is given as
\[
\Lcexp(\W):=\frac{1}{n}\sum_{i\in[n]}\sum_{c\neq y_i}\exp\left(-(\eb_{y_i}-\eb_c)^\top\W\hb_i\right)\,.
\]
The gradient of $\Lc_{\exp}(\W)$ is 
\begin{align*}
    \nabla \Lc_{\exp}(\W) = \frac{1}{n} \sum_{i \in [n]} \sum_{c \neq y_i} -\exp(-(\e_{y_i} - \e_c)^T \W \hb_i) (\e_{y_i} - \e_c) \hb_i^T. 
\end{align*}
Thus, for any matrix $\A \in \R^{k \times d}$, we have
\begin{align*}
\inp{\Ab}{-\nabla\Lc_{\exp}(\W)} = \frac{1}{n}\sum\nolimits_{i\in[n]}\sum\nolimits_{c\neq y_i} \exp\left(-(\eb_{y_i}-\eb_c)^\top\W\hb_i\right) \cdot\left(\eb_{y_i}-\eb_c\right)^\top\Ab\hb_i\,.
\end{align*}
This motivates us to define $\Gc(\W)$ as 
\begin{align*}
    \Gc_{\exp}(\W) = \frac{1}{n} \sum_{i\in[n]}\sum_{c\neq y_i} \exp\left(-(\eb_{y_i}-\eb_c)^\top\W\hb_i\right), 
\end{align*}
from which we recognize that $\Gc_{\exp}(\W) = \Lc_{\exp} (\W)$. Then, the proof follows similar steps as the CE loss. 
\subsection{PairLogLoss}

The PairLogLoss loss \cite{wang2021rank4class} is given as
\[
\Lcpll(\W):=\frac{1}{n}\sum_{i\in[n]}\sum_{c\neq y_i}\log\left(1+\exp\left(-(\eb_{y_i}-\eb_c)^\top\W\hb_i\right)\right)\,.
\]
Note that $\Lc=\frac{1}{n}\sum_{i\in[n]}\sum_{c\neq y_i}f\left((\eb_{y_i}-\eb_c)^\top\W\hb_i\right)$ where $f(t):=\log(1+e^{-t})$ denotes the logistic loss. Therefore, the Taylor expansion of PLL writes:
\begin{align}
\Lcpll(\W+\Deltab)&=\Lc(\W) + \frac{1}{n}\sum_{i\in[n]}\sum_{c\neq y_i}f'\left((\eb_{y_i}-\eb_c)^\top\W\hb_i\right)\cdot(\eb_{y_i}-\eb_c)^\top\Deltab\hb_i\nn  \\
&\qquad +\frac{1}{n}\sum_{i\in[n]}\sum_{c\neq y_i}f''\left((\eb_{y_i}-\eb_c)^\top\W\hb_i\right)\cdot \hb_i^\top\Deltab^\top(\eb_{y_i}-\eb_c)(\eb_{y_i}-\eb_c)^\top\Deltab\hb_i + o\left(\|\Deltab\|^3\right)\,.\label{eq:pll taylor}
\end{align}

From the above, the gradient of the PLL loss is:
\begin{align}
\nabla\Lcpll(\W) &= \frac{1}{n}\sum_{i\in[n]}\sum_{c\neq y_i}f'\left((\eb_{y_i}-\eb_c)^\top\W\hb_i\right)\cdot\left(\eb_{y_i}-\eb_c\right)\hb_i^\top\nn\\
&= \frac{1}{n}\sum_{i\in[n]}\sum_{c\neq y_i}\frac{-\exp\left(-(\eb_{y_i}-\eb_c)^\top\W\hb_i\right)}{1+\exp\left(-(\eb_{y_i}-\eb_c)^\top\W\hb_i\right)}\left(\eb_{y_i}-\eb_c\right)\hb_i^\top
\label{eq:pll gradient}
\end{align}

Thus, for any matrix $\Ab\in\R^{k\times d}$,
\begin{align}\label{eq:PLL inp}
\inp{\Ab}{-\nabla\Lcpll(\W)} = \frac{1}{n}\sum_{i\in[n]}\sum_{c\neq y_i}|f'\left((\eb_{y_i}-\eb_c)^\top\W\hb_i\right)|\cdot\left(\eb_{y_i}-\eb_c\right)^\top\Ab\hb_i\,.
\end{align}

This motivates us to define
\begin{align}
    \Gcpll(\W) = \frac{1}{n}\sum_{i\in[n]}\sum_{c\neq y_i}\left|f'\left(-(\eb_{y_i}-\eb_c)^\top\W\hb_i\right)\right| = \frac{1}{n}\sum_{i\in[n]}\sum_{c\neq y_i}\frac{\exp\left(-(\eb_{y_i}-\eb_c)^\top\W\hb_i\right)}{1+\exp\left(-(\eb_{y_i}-\eb_c)^\top\W\hb_i\right)}
\end{align}

\begin{lemma}[Analogue of Lemma \ref{lem:G and gradient} for PLL]
\label{lem:PLL G and gradient}
For any $\W$, the PairLogLoss (PLL) satisfies:
    \[
    2B \cdot \Gcpll(\W) \geq \norm{\nabla\Lcpll(\W)} \geq \gamma\cdot \Gcpll(\W)\,.
    \]
\end{lemma}
\begin{proof}
The lower bound follows immediately from \eqref{eq:PLL inp} and expressing $\norm{\nabla\Lcpll(\W)}_{*} =\max_{\norm{\Ab}\leq 1}\inp{\Ab}{-\nabla\Lcpll(\W)}$. The lower bound follows from triangle inequality applied to \eqref{eq:pll gradient}:
\[
\none{\nabla\Lcpll(\W)}\leq \frac{1}{n}\sum_{i\in[n]}\sum_{c \neq y_i}\left|f'\left(-(\eb_{y_i}-\eb_c)^\top\W\hb_i\right)\right|\|\eb_{y_i}-\eb_c\|_1\|\hb_i\|_1 \leq 2B\cdot \Gc(\W)\,,
\]
and use the relationships in \eqref{eq:A_rela}, i.e. $\norm{\nabla\Lcpll(\W)} \leq \none{\nabla\Lcpll(\W)}$ for any entry-wise or Schatten p-norm with $p \geq 1$.
\end{proof}

For bounding with $\Gc(\W)$ the second-order term in the Taylor expansion of PLL, note the following. First, for all $i\in[n],c\neq y_i$:
\begin{align*}
    \hb_i^\top\Deltab^\top(\eb_{y_i}-\eb_c)(\eb_{y_i}-\eb_c)^\top\Deltab\hb_i &= \inp{(\eb_{y_i}-\eb_c)(\eb_{y_i}-\eb_c)^\top}{\Deltab\hb_i\hb_i^\top\Deltab^T} \\
    &\leq \none{(\eb_{y_i}-\eb_c)(\eb_{y_i}-\eb_c)^\top} \ninf{\Deltab\hb_i\hb_i^\top\Deltab^T}
    \\
    &\leq \|\eb_{y_i}-\eb_c)\|_1^2 \cdot (\|\Deltab\hb_i\|_\infty)^2
    \\
    &\leq 4 \cdot (\ninf{\Deltab})^2 \cdot \|\hb_i\|_1^2 \leq 4B^2 (\ninf{\Deltab})^2\,\\
    &\leq 4B^2 \norm{\Deltab}^2.
\end{align*}
Second,  the (easy to check) property of logistic loss that $f''(t)\leq |f'(t)|$. 
Putting these together:
\[
\frac{1}{n}\sum_{i\in[n]}\sum_{c\neq y_i}f''\left((\eb_{y_i}-\eb_c)^\top\W\hb_i\right)\cdot \hb_i^\top\Deltab^\top(\eb_{y_i}-\eb_c)(\eb_{y_i}-\eb_c)^\top\Deltab\hb_i \leq 4B^2 \cdot\Gc(\W)\cdot \left(\norm{\Deltab}\right)^2\,.
\]

Finally, we verify PLL satisfies Lemma \ref{lem:G and L}.

\begin{lemma}[Analogue of Lemma \ref{lem:G and L} for PLL]\label{lem:pll G and L}
    Let $\W \in \R^{k \times d}$, we have
    \begin{enumerate}[label=(\roman*)]
    \item $1\geq \frac{\Gcpll(\W)}{\Lcpll(\W)} \geq 1-\frac{n\Lcpll(\W)}{2} $
    \item  Suppose that $\W$ satisfies $\Lcpll (\W) \leq \frac{\log 2}{n}$ or $\Gcpll(\W) \leq \frac{1}{2n}$, then $\Lcpll(\W) \leq 2 \Gcpll(\W).$
    \end{enumerate}
\end{lemma}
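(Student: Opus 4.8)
The plan is to mirror the proof of Lemma~\ref{lem:G and L}, reducing everything to termwise scalar inequalities over the pairs $(i,c)$ with $c\neq y_i$. Fix $\W$ and for each such pair write $z_{ic}:=(\eb_{y_i}-\eb_c)^\top\W\hb_i$, and set $v_{ic}:=\log(1+e^{-z_{ic}})$ and $u_{ic}:=\frac{e^{-z_{ic}}}{1+e^{-z_{ic}}}$, so that $n\Lcpll(\W)=\sum_{i\in[n]}\sum_{c\neq y_i} v_{ic}$ and $n\Gcpll(\W)=\sum_{i\in[n]}\sum_{c\neq y_i} u_{ic}$. The key observation is the exact identity $u_{ic}=1-e^{-v_{ic}}$ (equivalently $v_{ic}=-\log(1-u_{ic})$), which converts the PLL statement into the very same scalar inequalities already invoked for the CE case.

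For part (i), the upper bound $\Gcpll(\W)\le\Lcpll(\W)$ follows termwise from $1-e^{-y}\le y$ for $y\ge 0$, applied with $y=v_{ic}$, after summing over all pairs. For the lower bound, I would use the second-order Taylor bound $e^{-y}\le 1-y+y^2/2$ for $y>0$: with $y=v_{ic}$ and the identity $u_{ic}=1-e^{-v_{ic}}$ this gives $u_{ic}\ge v_{ic}-\frac{1}{2}v_{ic}^2$; summing over pairs and using $\sum_{i,c} v_{ic}^2\le\big(\sum_{i,c} v_{ic}\big)^2$ (valid since every $v_{ic}>0$) yields $n\Gcpll(\W)\ge n\Lcpll(\W)-\frac{1}{2}\big(n\Lcpll(\W)\big)^2$, i.e. the claimed bound after dividing by $n\Lcpll(\W)$.

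For part (ii), in the regime $\Lcpll(\W)\le\frac{\log 2}{n}\le\frac1n$ I would simply invoke part (i): $\frac{\Gcpll(\W)}{\Lcpll(\W)}\ge 1-\frac{n\Lcpll(\W)}{2}\ge\frac12$, hence $\Lcpll(\W)\le 2\Gcpll(\W)$. In the regime $\Gcpll(\W)\le\frac{1}{2n}$ we have $\sum_{i,c} u_{ic}\le\frac12$, hence $u_{ic}\le\frac12$ for every pair, so $s_{ic}:=1-u_{ic}\ge\frac12\ge 0.2032$; applying the elementary inequality $\log(1/s)\le 2(1-s)$ for $s\in[0.2032,1]$ (already used in the proof of Lemma~\ref{lem:G and L}) with $s=s_{ic}$ gives $v_{ic}=-\log(1-u_{ic})\le 2u_{ic}$, and summing over pairs gives $\Lcpll(\W)\le 2\Gcpll(\W)$.

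I do not anticipate a genuine obstacle: the only thing to get right is spotting the substitution $u_{ic}=1-e^{-v_{ic}}$, after which the double sum over $(i,c)$ behaves exactly like the single sum over $i$ in the binary logistic analysis of \citet{zhang2024implicit}, and all the needed scalar facts ($1-e^{-y}\le y$, $e^{-y}\le 1-y+y^2/2$ for $y>0$, and $\log(1/s)\le 2(1-s)$ on $[0.2032,1]$) are already established in Lemma~\ref{lem:G and L}. The only minor bookkeeping is the step $\sum_{i,c} v_{ic}^2\le\big(\sum_{i,c} v_{ic}\big)^2$, which is immediate from nonnegativity of the $v_{ic}$.
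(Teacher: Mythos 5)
Your proposal is correct and follows essentially the same route as the paper: the substitution $u_{ic}=1-e^{-v_{ic}}$ reduces both bounds in (i) to the scalar facts $1-e^{-y}\le y$ and $e^{-y}\le 1-y+y^2/2$ together with $\sum v_{ic}^2\le(\sum v_{ic})^2$, exactly as in the paper's proof, and your handling of (ii) — part (i) in the small-loss regime, and the termwise inequality $v_{ic}\le 2u_{ic}$ once $\Gcpll\le\frac{1}{2n}$ forces $u_{ic}\le\frac12$ — is the paper's argument up to the reparametrization $s_{ic}=\frac{1}{1+e^{-z_{ic}}}$ (the paper states the same scalar inequality as $\log(1+e^{-l})\le 2\,\frac{e^{-l}}{1+e^{-l}}$ for $l\ge -1.366$, equivalent to your use of $\log(1/s)\le 2(1-s)$ on $[0.2032,1]$).
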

\begin{proof}
    (i) The upper bound follows by the well-known self-boundedness property of the logistic loss, namely $|f'(t)|\leq f(t)$

To prove the upper bound, it suffices to prove for for $x>0$:
\begin{align}\label{eq:pll GL lb proof n=1}
\frac{x}{1+x} \geq \log(1+x) - \frac{1}{2}\log^2(1+x).
\end{align}
The general case follows by summing over $x_{ic}=\exp\left(-(\eb_{y_i}-\eb_c)^\top\W\hb_i\right), i\in[n], c\neq y_i$ since then we have
\begin{align}
\Gc(\W)=\sum_{i\in[n]}\sum_{c\neq y_i}\frac{x_{ic}}{1+x_{ic}} 
&\geq \sum_{i\in[n]}\sum_{c\neq y_i}\log(1+x_{ic}) - \frac{1}{2}\sum_{i\in[n]}\sum_{c\neq y_i}\log^2(1+x_{ic}) \nn
\\
&\geq \sum_{i\in[n]}\sum_{c\neq y_i}\log(1+x_{ic}) - \frac{1}{2}\left(\sum_{i\in[n]}\sum_{c\neq y_i}\log(1+x_{ic})\right)^2\,,\nn
\end{align}
where the last line used $\log(1+x_{ic})\geq 0$.
 For \eqref{eq:GL lb proof n=1}, let $a=\log(1+x)>0$. The inequality becomes $e^{-a}\leq 1-a+a^2/2$, which holds for $a>0$ by the second-order Taylor expansion of $e^{-a}$ around $0$.

 (ii) Denote $\Lc \coloneqq \Lc_{pll}$ and $\Gc \coloneqq \Gc_{pll}$. Given $\Lc \leq \frac{\log(2)}{n} \leq \frac{1}{n}$, we have $1-\frac{n \Lc}{2} \geq \frac{1}{2}$, then the first part follows from (i). For the second part, denote $l_{ic} := (\eb_{y_i}-\eb_c)^\top\W\hb_i, i\in[n], c\neq y_i$.  For $\Lc \leq 2\Gc$ to hold, it is sufficient to show that $\log (1 + e^{-l_{ic}}) \leq 2 \frac{e^{-l_{ic}}}{1+ e^{-l_{ic}}}$ for all $i\in[n], c\neq y_i$. This holds true when $l_{ic} \geq -1.366$, which is clearly satisfied given the assumption $\Gc \leq \frac{1}{2n}$ implying $l_{ic} \geq 0$. 
\end{proof}

\begin{lemma} [Analogue of Lemma \ref{lem:G_ratio} for PLL] \label{lem:G_ratio_pll}
    For any $\psi \in [0,1]$, we have the following:
    \begin{align*}
        \frac{\Gc_{pll}(\W - \psi \eta \Deltab)}{\Gc_{pll}(\W)} \leq e^{2 B \psi \norm{\triangle \W}} + 2
    \end{align*}
\end{lemma}

\begin{proof}
    For logistic loss $f(z) = \log(1 + e^{-z})$, for any $z_1, z_2 \in \R$, we have the following
    \begin{align*}
        \bigm| \frac{f'(z_1)}{f'(z_2)}\bigm| = \bigm| \frac{1+e^{z_2}}{1+e^{z_1}} \bigm| &= \bigm| \frac{1 + e^{z_2} - e^{z_1} + e^{z_1}}{1 + e^{z_1}} \bigm| \\
        &= \bigm| \frac{e^{z_2} - e^{z_1}}{1+e^{z_1}} +1\bigm| \leq \bigm| \frac{e^{z_2} - e^{z_1}}{1+e^{z_1}} \bigm| + 1\\
        &\leq \bigm| e^{z_2 - z_1} - 1 \bigm| + 1\\
        &\leq e^{|z_2-z_1|} + 2.
    \end{align*}
    Denote $x_{ic}^{\W} := (\e_{y_i} - \e_c)^T \W \hb_i$ and $x_{ic}^{\W'} := (\e_{y_i} - \e_c)^T (\W - \psi \eta \Deltab) \hb_i$, then we have for $i \in [n]$, $c \neq y_i$
    \begin{align*}
        \frac{f'(x_{ic}^{\W'})}{f'(x_{ic}^{\W})} = |\frac{f'(x_{ic}^{\W'})}{f'(x_{ic}^{\W})}| \leq e^{|x_{ic}^{\W} - x_{ic}^{\W'}|} + 2 &= e^{\psi \eta|(\e_c - \e_{y_i})^T \Deltab \hb_i|} + 2 = e^{\psi \eta |\langle \Deltab, (\e_c - \e_{y_i}) \hb_i^T\rangle|} +2\\
        &\leq e^{\psi \eta \lVert \Deltab \rVert_{\max} \none{(\e_c - \e_{y_i}) \hb_i^T}} + 2 \\ 
        &= e^{\psi \eta \lVert \Deltab \rVert_{\max} \none{\e_c - \e_{y_i}} \none{\hb_i}} + 2 \\
        &\leq e^{2B\psi \eta \ninf{\Deltab}} + 2.
    \end{align*}
    This leads to $\sum_{i \in [n]} \sum_{c \neq y_i} f'(x_{ic}^{\W'}) \leq (e^{2B\psi\ninf{\Deltab \W}} + 2) \sum_{i \in [n]} \sum_{c \neq y_i} f'(x_{ic}^{\W})$. Rearrange and using the definition of $\Gc_{pll}(\W)$ and relationships in \eqref{eq:A_rela}, we obtain the desired.  
\end{proof}

\begin{lemma} [Analogue of Lemma \ref{lem:sep} for PLL]
\label{lem:sep_pll}
    Suppose that there exists $\W \in \R^{k\times d}$ such that $\Lc_{pll} (\W) \leq \frac{\log 2}{n}$, then we have
    \begin{align}
        (\eb_{y_i} - \e_c)^T \W \hb_i \geq 0, \quad \text{for all $i \in [n]$ and for all $c \in [k]$ such that $c \neq y_i$}. \label{eq:sep}
    \end{align}
\end{lemma}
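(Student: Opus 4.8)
The plan is to follow the strategy of Lemma~\ref{lem:sep}, observing that the \emph{pairwise} structure of the PairLogLoss makes the argument even more immediate. Write $l_{ic} := (\eb_{y_i}-\eb_c)^\top\W\hb_i$ for $i\in[n]$ and $c\neq y_i$, so that $n\,\Lcpll(\W) = \sum_{i\in[n]}\sum_{c\neq y_i}\log\!\big(1+e^{-l_{ic}}\big)$. The hypothesis $\Lcpll(\W)\leq \tfrac{\log 2}{n}$ is then equivalent to
\[
\sum_{i\in[n]}\sum_{c\neq y_i}\log\!\big(1+e^{-l_{ic}}\big) \;\leq\; \log 2 .
\]

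Next I would note that every summand is nonnegative, since $\log(1+e^{-l})\geq \log 1 = 0$ for all real $l$. Hence each individual term is bounded by the whole sum, which gives $\log(1+e^{-l_{ic}})\leq \log 2$ for \emph{every} pair $(i,c)$ with $c\neq y_i$. This is precisely where PairLogLoss is more convenient than the cross-entropy loss of Lemma~\ref{lem:sep}: there the sum over $c\neq y_i$ sits inside a single logarithm, so one first has to pass to $\max_{c\neq y_i}$ and use the inequality $e^{-\min_c(\cdot)}\leq\sum_c e^{-(\cdot)}$, whereas here each pairwise margin already contributes its own nonnegative logistic term, so the per-pair bound comes for free.

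Finally, exponentiating $\log(1+e^{-l_{ic}})\leq \log 2$ yields $1+e^{-l_{ic}}\leq 2$, hence $e^{-l_{ic}}\leq 1$, hence $l_{ic}\geq 0$; that is, $(\eb_{y_i}-\eb_c)^\top\W\hb_i\geq 0$ for all $i\in[n]$ and all $c\neq y_i$, as claimed. There is no genuine obstacle in this proof; the only care needed is tracking the directions of the inequalities through the (monotone) maps $\log$ and $\exp$, and recording explicitly that nonnegativity of every term is what licenses the term-by-term bound.
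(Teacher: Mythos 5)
Your proposal is correct and follows essentially the same route as the paper: both bound each nonnegative pairwise term $\log(1+e^{-l_{ic}})$ by the full sum $n\,\Lcpll(\W)\leq \log 2$ and then deduce $e^{-l_{ic}}\leq 1$, i.e.\ $l_{ic}\geq 0$. Your added remark on why the pairwise structure avoids the $\max$/union step needed for cross-entropy is accurate but not a departure from the paper's argument.
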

\begin{proof}
    Denote $x_{ic} = (\e_{y_i} - \e_c)^T \W \hb_i$. Then, by the assumption, we have for any $i \in [n], c \neq y_i$
    \begin{align*}
        \log(1 + e^{-x_{ic}}) \leq \sum_{i \in [n]} \sum_{c \neq y_i} \log(1+ e^{-x_{ic}}) \leq \log(2).
    \end{align*}
    This implies that $x_{ic} \geq 0$ for all $i \in [n], c \neq y_i$. 
\end{proof}

\begin{lemma} [Analogue of Lemma \ref{lem:l_fast_bound} for PLL] 
\label{lem:l_pll_fast_bound}
    For any $\W, \W_0 \in \R^{k \times d}$, suppose that $\Lc(\W)$ is convex, we have
    \begin{align*}
        |\Lc_{pll}(\W) - \Lc_{pll}(\W_0)| \leq 2B \norm{\W - \W_0}. 
    \end{align*}
\end{lemma}
\begin{proof}
    This lemma is a direct consequence of Lemma \ref{lem:PLL G and gradient} and can be proved in the same way as Lemma \ref{lem:l_fast_bound}. 
\end{proof}

Thus, we have proved all the Lemmas for $\Gc_{pll}(\W)$ and its relationships to $\Lc_{pll} (\W)$ in analogous to those  in section \ref{sec: G and L}. The proof of NSD (\eqref{eq:nsd_main}) with PairLogLoss  follow the same steps as with cross-entropy loss given in section \ref{sec:app_nsd}. 

\section{Implicit Bias of Adam} \label{sec:sec_adam}
We consider Adam without the stability constant ($\epsilon$), which performs the following coordinate-wise updates for iteration $t \geq 0$ and initialization $\W_0$ \citep{kingma2014adam}:
\begin{subequations}
\begin{align}
    \mathbf{M}_t &= \beta_{1} \mathbf{M}_{t-1} + (1 - \beta_1) \nabla \Lc(\W_t) \label{eq: adam1_app}\\
    \mathbf{V}_t &= \beta_{2} \mathbf{V}_{t-1} + (1 - \beta_2) \nabla \Lc(\W_t)^2 \label{eq: adam2}\\
    \W_{t+1}&= \W_t - \eta_t \frac{\mathbf{M}_t}{\sqrt{\mathbf{V}_t}}, \label{eq: adam3}
\end{align}
\end{subequations}
where $\mathbf{M}_t$ and $\mathbf{V}_t$ are the first and second moment estimates of the gradient (with momentum parameters $\beta_1$ and $\beta_2$) respectively. The squaring $(\cdot)^2$ and dividing $\frac{\cdot}{\cdot}$ operations are  applied \emph{entry-wise}. In the special case of $\beta_1 = \beta_2 = 0$, this simplifies to the SignGD updates.

To study the implicit bias of Adam, we further make the following assumption, which ensures that all entries of the second moment buffer $\Vb_t$ of Adam are bounded away from $0$ for all $t \geq 0$. Previously used by \citet{zhang2024implicit} in binary classification, this assumption is  satisfied when the data distribution is continuous and non-degenerate. A similar assumption appears in  \cite{xie2024implicit}. 

\begin{assumption} \label{ass:adam_init}The Adam initialization satisfies $\nabla \Lc(\W_0)[c,j]^2 \geq \omega$ for all $c \in [k]$ and $j \in [d]$. 
\end{assumption}

The proof of Adam follows the similar approach as NSD. The key challenge is to connect $\M_t$ and $\Vb_t$ to a per-class decomposition of $\Gc(\W_t)$. The following Lemma in \cite[Lemma 6.5]{zhang2024implicit} is useful. It provides an entry-wise bound on the ratio between the first moment and square root of the second moment. 
\begin{lemma} \label{lem:iter_bound}
    Considering the Adam updates given in \eqref{eq: adam1}, \eqref{eq: adam2}, and \eqref{eq: adam3}, suppose that $\beta_1 \leq \beta_2$ and set $\alpha = \sqrt{\frac{\beta_2(1-\beta_1)^2}{(1 - \beta_2) (\beta_2 - \beta_1^2)^2}}$, then we obtain $\M_{t} [c,j] \leq \alpha \cdot \sqrt{\Vb_{t}[c,j]}$ for all $c \in [k]$ and $j \in [d]$. 
\end{lemma}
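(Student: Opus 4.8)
The plan is to reduce the statement to a one-dimensional estimate by fixing a coordinate $(c,j)$ and unrolling the two moment recursions. Writing $g_s := \nabla\Lc(\W_s)[c,j]$ and using the standard initialization $\M_{-1}=\Vb_{-1}=0$, equations \eqref{eq: adam1} and \eqref{eq: adam2} give the closed forms $\M_t[c,j] = (1-\beta_1)\sum_{s=0}^{t}\beta_1^{t-s}g_s$ and $\Vb_t[c,j] = (1-\beta_2)\sum_{s=0}^{t}\beta_2^{t-s}g_s^2$. It thus suffices to prove $|\M_t[c,j]| \leq \alpha\sqrt{\Vb_t[c,j]}$, which is stronger than the stated inequality and amounts to bounding a weighted $\ell_1$-sum of the $g_s$ by a weighted $\ell_2$-sum.

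First I would apply the triangle inequality and then Cauchy--Schwarz with the deliberate splitting $\beta_1^{t-s}|g_s| = \bigl(\beta_1^{t-s}\beta_2^{-(t-s)/2}\bigr)\cdot\bigl(\beta_2^{(t-s)/2}|g_s|\bigr)$:
\[
|\M_t[c,j]| \;\leq\; (1-\beta_1)\Bigl(\sum_{s=0}^{t}\bigl(\beta_1^2/\beta_2\bigr)^{t-s}\Bigr)^{1/2}\Bigl(\sum_{s=0}^{t}\beta_2^{t-s}g_s^2\Bigr)^{1/2}.
\]
The purpose of this weighting is that the second factor equals $\bigl(\Vb_t[c,j]/(1-\beta_2)\bigr)^{1/2}$ exactly, while the first is a geometric series with ratio $\beta_1^2/\beta_2$. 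This is precisely where the hypothesis $\beta_1\leq\beta_2$ is used: since $0<\beta_1\leq\beta_2<1$ we get $\beta_1^2\leq\beta_2^2<\beta_2$, hence $\beta_1^2/\beta_2<1$ (which also shows $\beta_2-\beta_1^2>0$, so $\alpha$ is well defined) and $\sum_{s=0}^{t}(\beta_1^2/\beta_2)^{t-s}\leq\sum_{m\geq 0}(\beta_1^2/\beta_2)^m = \beta_2/(\beta_2-\beta_1^2)$.

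Combining the two bounds gives $\M_t[c,j]^2 \leq \frac{(1-\beta_1)^2\beta_2}{(1-\beta_2)(\beta_2-\beta_1^2)}\,\Vb_t[c,j]$, and since $\beta_2-\beta_1^2\leq 1$ the prefactor is at most $\alpha^2=\frac{\beta_2(1-\beta_1)^2}{(1-\beta_2)(\beta_2-\beta_1^2)^2}$; taking square roots yields the claim for every $c\in[k]$ and $j\in[d]$. I do not anticipate a real obstacle here: the argument is a short, self-contained Cauchy--Schwarz estimate, and the only points needing care are (i) choosing the geometric splitting so that exactly one factor collapses to $\Vb_t[c,j]$ while the other stays summable, and (ii) double-checking the indexing and initial conditions of the $\M_t,\Vb_t$ buffers so the unrolled formulas are exact. (The same computation in fact delivers the sharper constant $\frac{(1-\beta_1)^2\beta_2}{(1-\beta_2)(\beta_2-\beta_1^2)}$; the stated $\alpha$ is just the slightly looser bound obtained by discarding the harmless factor $\beta_2-\beta_1^2\leq 1$.)
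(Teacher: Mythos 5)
Your proof is correct: unrolling the two buffers from zero initialization and applying Cauchy--Schwarz with the splitting $\beta_1^{t-s}=(\beta_1^{t-s}\beta_2^{-(t-s)/2})\,\beta_2^{(t-s)/2}$ gives $\M_t[c,j]^2\leq\frac{(1-\beta_1)^2\beta_2}{(1-\beta_2)(\beta_2-\beta_1^2)}\Vb_t[c,j]$, which is even slightly sharper than the stated $\alpha^2$ since $\beta_2-\beta_1^2\leq 1$. The paper does not prove this lemma itself but imports it from \citet[Lemma 6.5]{zhang2024implicit}, and your argument is essentially that standard one; the only implicit hypotheses to flag are the zero initialization $\M_{-1}=\Vb_{-1}=\mathbf{0}$ (consistent with how the paper unrolls $\M_t$ and $\Vb_t$ elsewhere) and $\beta_2>0$ so that $\beta_2-\beta_1^2>0$ and $\alpha$ is well defined.
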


The following Lemma bounds the first moment buffer ($\M_t$) of Adam in terms of the product of $\eta_t$ with the sume of $\Gc_c(\W_t)$ and $\Qc_c(\W_t)$. It is used in the proof of Lemma \ref{lem:adam_intermediate_bound}. 

\begin{lemma} \label{lem:first_G_adam} Let $c \in [k]$. Under the same setting as Theorem \ref{thm:adam}, there exists a time $t_0$ such that the following holds for all $t \geq t_0$ 
\begin{align*}
    |\mathbf{M}_t[c,j] - (1-\beta_{1}^{t+1})\nabla \Lc(\W_t)[c,j]| &\leq \alpha_M \eta_t (\Gc_c(\W_t) + \Qc_c(\W_t)),
\end{align*}
where $j \in [d]$ and $\alpha_M$ is some constant that depends on $B$ and $\beta_1$. 
\end{lemma}

\begin{proof}
    The proof follows the same steps as Lemma \ref{lem:first_G_main_app} with $\norm{\Deltab}$ replaced by $\ninf{\frac{\M}{\sqrt{\Vb}}}$.  
\end{proof}

The following Lemma bounds the first moment buffer ($\Vb_t$) of Adam in terms of the product of $\eta_t$ and with $\Gc_c(\W_t)$ and $\Qc_c(\W_t)$. It is used in the proof of Lemma \ref{lem:adam_intermediate_bound}. 

\begin{lemma} \label{lem:second_G} Let $c \in [k]$. Under the same setting as Theorem \ref{thm:adam}, there exists a time $t_0$ such that the following holds for all $t \geq t_0$ 
\begin{align*}
    \bigm| \sqrt{\Vb_t[c,j]} - \sqrt{(1 - \beta_2^{t+1})} |\nabla \Lc(\W_t) [c,j]| \bigm| 
    &\leq \alpha_V \sqrt{\eta_t} (\Qc_c(\W_t) + \Gc_c(\W_t)),
\end{align*}
where $j \in [d]$, and $\alpha_V$ is some constant that depends on $B$ and $\beta_2$. 
\end{lemma}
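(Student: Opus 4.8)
The plan is to mirror the structure of the proof of Lemma \ref{lem:first_G}, but account for the fact that the second moment recursion \eqref{eq: adam2} accumulates \emph{squared} gradients. First I would unroll the recursion: for fixed $c \in [k]$, $j \in [d]$, write
\begin{align*}
\bigm| \Vb_t[c,j] - (1-\beta_2^{t+1})\nabla\Lc(\W_t)[c,j]^2 \bigm| \leq \sum_{\tau=0}^t (1-\beta_2)\beta_2^\tau \underbrace{\bigm| \nabla\Lc(\W_{t-\tau})[c,j]^2 - \nabla\Lc(\W_t)[c,j]^2 \bigm|}_{\diamond}.
\end{align*}
Then, using $\nabla\Lc(\W)[c,j] = -\tfrac1n\sum_{i\in[n]}(\delta_{cy_i}-\sfti{c}{\W\hb_i})h_{ij}$, I would expand the square as a double sum over sample indices $i,p$ and introduce $f_{c,i,p}(\W) := (\delta_{cy_i}-\sfti{c}{\W\hb_i})(\delta_{cy_p}-\sfti{c}{\W\hb_p})$, so that after bounding $|h_{ij}h_{pj}| \leq B^2$ (via $\|\hb_i\|_1 \leq B$), we get $\diamond \leq \tfrac{B^2}{n^2}\sum_{i,p}|f_{c,i,p}(\W_{t-\tau}) - f_{c,i,p}(\W_t)|$.

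The core of the argument is to bound $|f_{c,i,p}(\W_{t-\tau}) - f_{c,i,p}(\W_t)|$. I would split the double sum into four cases according to whether $y_i = c$ or $y_i \neq c$, and likewise for $p$. In each case, writing $\delta_{cy_i}-\sfti{c}{\vb}$ as either $1-\sfti{c}{\vb}$ (when $y_i=c$) or $-\sfti{c}{\vb}$ (when $y_i\neq c$), the difference of products telescopes as $f(\W') - f(\W) = a'b' - ab = a'(b'-b) + b(a'-a)$, and each factor $a', b$ can be bounded by $1$ while each difference $b'-b$, $a'-a$ is handled by multiplying and dividing to produce the ratios $|\sfti{c}{\vb'}/\sfti{c}{\vb} - 1|$ and $|(1-\sfti{c}{\vb'})/(1-\sfti{c}{\vb}) - 1|$, which Lemma \ref{lem:unified_helper} bounds by $e^{2\|(\W_{t-\tau}-\W_t)\hb_i\|_\infty}-1 \leq e^{2B\ninf{\W_{t-\tau}-\W_t}}-1$. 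Summing over the appropriate index set converts the $(1-s_{iy_i})$ or $s_{ic}$ weights into $\Gc_c(\W_t)$ or $\Qc_c(\W_t)$; because the double sum has two indices, the four cases yield the products $\Gc_c(\W_t)^2$, $\Qc_c(\W_t)^2$, and $\Gc_c(\W_t)\Qc_c(\W_t)$, whose sum is $(\Gc_c(\W_t)+\Qc_c(\W_t))^2$. Using the Adam update $\ninf{\W_{t-\tau}-\W_t} \leq \sum_{s=1}^\tau \eta_{t-s}\ninf{\M_{t-s}/\sqrt{\Vb_{t-s}}} \leq \alpha\sum_{s=1}^\tau \eta_{t-s}$ (Lemma \ref{lem:iter_bound}), we obtain $\diamond \leq B^2(e^{2\alpha B\sum_{s=1}^\tau \eta_{t-s}}-1)(\Gc_c(\W_t)+\Qc_c(\W_t))^2$, and Assumption \ref{ass:learning_rate_2} gives $\sum_{\tau=0}^t(1-\beta_2)\beta_2^\tau(e^{2\alpha B\sum_{s=1}^\tau\eta_{t-s}}-1) \leq (1-\beta_2)c_2\eta_t$. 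Hence $|\Vb_t[c,j] - (1-\beta_2^{t+1})\nabla\Lc(\W_t)[c,j]^2| \leq B^2(1-\beta_2)c_2\,\eta_t(\Gc_c(\W_t)+\Qc_c(\W_t))^2$.

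Finally I would pass from the bound on $\Vb_t$ to a bound on $\sqrt{\Vb_t}$ using the elementary inequality $|\sqrt{x}-\sqrt{y}| \leq \sqrt{|x-y|}$ for $x,y\geq 0$: taking $x = \Vb_t[c,j]$ and $y = (1-\beta_2^{t+1})\nabla\Lc(\W_t)[c,j]^2$, and using $\sqrt{1-\beta_2^{t+1}}\,|\nabla\Lc(\W_t)[c,j]| = \sqrt{y}$, we get $|\sqrt{\Vb_t[c,j]} - \sqrt{1-\beta_2^{t+1}}|\nabla\Lc(\W_t)[c,j]|| \leq \sqrt{B^2(1-\beta_2)c_2\,\eta_t}\,(\Gc_c(\W_t)+\Qc_c(\W_t)) = \alpha_V\sqrt{\eta_t}(\Gc_c(\W_t)+\Qc_c(\W_t))$ with $\alpha_V = B\sqrt{(1-\beta_2)c_2}$. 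The main obstacle is the bookkeeping in the four-way case split of the double sum and verifying that each case genuinely produces only the three quadratic monomials in $\Gc_c,\Qc_c$ (so that they repackage into $(\Gc_c+\Qc_c)^2$ without spurious factors of $n$ or $k$); the $\sqrt{\cdot}$ step and the learning-rate summation are routine given the lemmas already established.
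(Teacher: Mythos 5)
Your plan follows the paper's route almost step for step: unroll the recursion for $\Vb_t[c,j]$, expand the squared gradient into a double sum over sample pairs $(i,p)$ with $f_{c,i,p}(\W)=(\delta_{cy_i}-\sfti{c}{\W\hb_i})(\delta_{cy_p}-\sfti{c}{\W\hb_p})$, split into four cases according to $y_i=c$ and $y_p=c$, control the change from $\W_t$ to $\W_{t-\tau}$ through softmax ratios and Lemma \ref{lem:unified_helper} together with Lemma \ref{lem:iter_bound}, apply Assumption \ref{ass:learning_rate_2}, and finally pass to square roots (your inequality $|\sqrt{x}-\sqrt{y}|\leq\sqrt{|x-y|}$ is exactly the paper's $|p-q|^2\leq|p^2-q^2|$ for $p,q\geq 0$). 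However, the one step you flag as delicate is precisely where your sketch, as written, breaks: telescoping $a'b'-ab=a'(b'-b)+b(a'-a)$ and bounding the factors $a'$ and $b$ by $1$ retains only \emph{one} softmax weight evaluated at $\W_t$ per term. Summing then gives a bound on $\diamond$ that is \emph{linear} in $\Gc_c(\W_t)+\Qc_c(\W_t)$, not the quadratic monomials $\Gc_c^2,\Qc_c^2,\Gc_c\Qc_c$ you claim. This is not a cosmetic loss: with only $|\Vb_t[c,j]-(1-\beta_2^{t+1})\nabla\Lc(\W_t)[c,j]^2|\lesssim\eta_t(\Gc_c+\Qc_c)$, the square-root step yields $\sqrt{\eta_t}\sqrt{\Gc_c(\W_t)+\Qc_c(\W_t)}$, which once the loss is small is far larger than $\sqrt{\eta_t}\,(\Gc_c(\W_t)+\Qc_c(\W_t))$, so the stated lemma (and the later descent argument, which needs the error proportional to $\Gc_c+\Qc_c$ so it is dominated by the $-\gamma\Gc(\W_t)$ term) would not follow.

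The fix is what the paper actually does: factor out the full \emph{product} of weights at $\W_t$, e.g.\ in the case $y_i\neq c,\ y_p\neq c$ write $|f_{c,i,p}(\W_{t-\tau})-f_{c,i,p}(\W_t)|=\sfti{c}{\W_t\hb_i}\sfti{c}{\W_t\hb_p}\,\bigl|\tfrac{\sfti{c}{\W_{t-\tau}\hb_i}\sfti{c}{\W_{t-\tau}\hb_p}}{\sfti{c}{\W_t\hb_i}\sfti{c}{\W_t\hb_p}}-1\bigr|$ and invoke parts (iii)--(v) of Lemma \ref{lem:unified_helper}; in the mixed cases the $\delta$ cross-terms must first be regrouped so that the difference becomes, e.g., $\sfti{c}{\W_t\hb_i}(1-\sfti{c}{\W_t\hb_p})-\sfti{c}{\W_{t-\tau}\hb_i}(1-\sfti{c}{\W_{t-\tau}\hb_p})$ (this is the paper's $\spadesuit_2$ computation). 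Alternatively, you can keep your telescoping but replace the bound $a'\leq 1$ by $|a'|\leq |a|\,e^{2B\ninf{\W_{t-\tau}-\W_t}}$, which restores the second weight at $\W_t$. Either way you obtain the three quadratic monomials with prefactor $e^{4\alpha B\sum_{s=1}^{\tau}\eta_{t-s}}-1$ (note the constant $4\alpha B$, not the $2\alpha B$ in your sketch, though Assumption \ref{ass:learning_rate_2} accommodates any constant), giving $(\Gc_c+\Qc_c)^2$ overall; your subsequent summation over $\tau$, the square-root step, and the constant $\alpha_V=B\sqrt{(1-\beta_2)c_2}$ then match the paper exactly.
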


    \begin{proof}
    Consider any fixed $c \in [k]$ and $j \in [d]$,
    \begin{align}
        |\Vb_t[c,j] - (1 - \beta_2^{t+1}) \nabla \Lc(\W_t) [c,j]^2| &= |\sum_{\tau=0}^t (1-\beta_2) \beta_2^{\tau} \bigl( \nabla \Lc(\W_{t-\tau})[c,j]^2 - \nabla \Lc (\W_t) [c,j]^2 \bigr)| \nn \\
        & \leq \sum_{\tau = 0}^t (1-\beta_2) \beta_2^{\tau} \underbrace{|\nabla \Lc (\W_{t-\tau}) [c,j]^2 - \nabla \Lc (\W_t)[c,j]^2|}_{\spadesuit}. \label{eq: second_G_eq1}
    \end{align}
    For any  $\W\in\R^{k\times d}$, recall that $-\frac{1}{n} \sum_{i \in [n]} \e_c^T \bigl( \e_{y_i} - \sft{\W \hb_i} \bigr) h_{ij}$. Then, we can obtain $\nabla \Lc (\W)[c,j]^2 = \frac{1}{n^2} \sum_{i \in [n]} \sum_{p \in [n]} h_{ij} h_{pj} (\delta_{cy_i} - \sfti{c}{ \W \hb_i}) (\delta_{cy_p} - \sfti{c}{\W \hb_p})$ where $\delta_{cy} = 1$ if and only if $c=y$. Next, we define the function $f_{c,i,p}$ to be $f_{c,i,p}(\W) := (\delta_{cy_i} - \sfti{c}{\W \hb_i}) (\delta_{cy_p} - \sfti{c}{\W \hb_p})$. Then, we have
    \begin{align*}
        |f_{c,i,p}(\W_{t-\tau}) - f_{c,i,p}(\W_{t})| &= \delta_{c y_i} \bigl( \sfti{c}{\W_t \hb_p} - \sfti{c}{\W_{t - \tau} \hb_p}\bigr) + \delta_{c y_p} \bigl( \sfti{c}{\W_{t} \hb_i} - \sfti{c}{\W_{t-\tau} \hb_i}\bigr) \\
        &\quad \quad + \bigl( \sfti{c}{\W_{t - \tau} \hb_i}\sfti{c}{\W_{t - \tau} \hb_p} - \sfti{c}{\W_{t} \hb_i} \sfti{c}{\W_{t} \hb_p} \bigr)    
    \end{align*}
    We can substitute this result into $\spadesuit$ to obtain
    \begin{align*}
        \spadesuit &= |\frac{1}{n^2} \sum_{i \in [n]} \sum_{p \in [n]} h_{ij} h_{pj} (f_{c,i,p}(\W_{t-\tau}) - f_{c,i,p}(\W_t))| \\
        &\leq \frac{B^2}{n^2} \sum_{i \in [n]} \sum_{p \in [n]}  |f_{c,i,p}(\W_{t-\tau}) - f_{c,i,p}(\W_{t})| \\
        &= B^2 \underbrace{\frac{1}{n^2} \sum_{i \in [n], y_i \neq c} \sum_{p \in [n], y_p \neq c} |f_{c,i,p}(\W_{t-\tau}) - f_{c,i,p}(\W_{t})|}_{\spadesuit_1} \\
        &\quad \quad +   B^2 \underbrace{ \frac{1}{n^2} \sum_{i \in [n], y_i \neq c} \sum_{p \in [n], y_p = c} |f_{c,i,p}(\W_{t-\tau}) - f_{c,i,p}(\W_{t})|}_{\spadesuit_2} \\ 
        &\quad \quad +  B^2 \underbrace{ \frac{1}{n^2} \sum_{i \in [n], y_i = c} \sum_{p \in [n], y_p \neq c} |f_{c,i,p}(\W_{t-\tau}) - f_{c,i,p}(\W_{t})|}_{\spadesuit_3} \\ 
        &\quad \quad +  B^2 \underbrace{ \frac{1}{n^2} \sum_{i \in [n], y_i = c} \sum_{p \in [n], y_p = c} |f_{c,i,p}(\W_{t-\tau}) - f_{c,i,p}(\W_{t})|}_{\spadesuit_4}
    \end{align*}
    We deal with the $4$ terms $\spadesuit_1, \spadesuit_2, \spadesuit_3$, and $\spadesuit_4$ separately. Starting with the first term, we have
    \begin{align*}
        \spadesuit_1 &= \frac{1}{n^2} \sum_{i \in [n], y_i \neq c} \sum_{p \in [n], y_p \neq c} |\sfti{c}{\W_{t - \tau} \hb_i}\sfti{c}{\W_{t - \tau} \hb_p} - \sfti{c}{\W_{t} \hb_i} \sfti{c}{\W_{t} \hb_p} | \\
        &= \frac{1}{n^2} \sum_{i \in [n], y_i \neq c} \sum_{p \in [n], y_p \neq c} \sfti{c}{\W_{t} \hb_i} \sfti{c}{\W_{t} \hb_p} |\frac{\sfti{c}{\W_{t - \tau} \hb_i}\sfti{c}{\W_{t - \tau} \hb_p}}{\sfti{c}{\W_{t} \hb_i} \sfti{c}{\W_{t} \hb_p}}  - 1| \\
        &\stackrel{(a)}{\leq} \frac{1}{n^2} \sum_{i \in [n], y_i \neq c} \sum_{p \in [n], y_p \neq c} \sfti{c}{\W_{t} \hb_i} \sfti{c}{\W_{t} \hb_p} \bigl( e^{2 \bigl( \lVert (\W_{t - \tau} - \W_{t}) \hb_i \rVert_{\infty} + \lVert (\W_{t - \tau} - \W_{t}) \hb_p \rVert_{\infty} \bigr) } - 1\bigr) \\
        &\stackrel{(b)}{\leq} \frac{1}{n^2} \sum_{i \in [n], y_i \neq c} \sum_{p \in [n], y_p \neq c} \sfti{c}{\W_{t} \hb_i} \sfti{c}{\W_{t} \hb_p} \bigl( e^{4 B \ninf{\W_{t - \tau} - \W_t}}  -1 \bigr) \\
        &\stackrel{(c)}{\leq} \bigl( e^{4 B \sum_{s=1}^{\tau} \eta_{t-s} \ninf{\frac{\M_{t-s}}{\sqrt{\Vb_{t-s}}}}} - 1 \bigr) \frac{1}{n^2} \sum_{i \in [n], y_i \neq c} \sum_{p \in [n], y_p \neq c} \sfti{c}{\W_{t} \hb_i} \sfti{c}{\W_{t} \hb_p} \\
        &\stackrel{(d)}{\leq} \bigl( e^{4 B \alpha \sum_{s=1}^{\tau} \eta_{t-s}} - 1 \bigr) \Qc_c(\W_t)^2,
    \end{align*}
    where (a) is by Lemma \ref{lem:unified_helper}, (b) is by $\lVert \hb_i \rVert_{1} \leq B$ for all $i \in [n]$, (c) is by \eqref{eq: adam3} and the triangle inequality, and (d) is by Lemma \ref{lem:iter_bound} and the definition of $\Gc(\W_t)$. 
    For the second term, we have
    \begin{align*}
        \spadesuit_2 &= \frac{1}{n^2} \sum_{i \in [n], y_i \neq c} \sum_{p \in [n], y_p = c} |\bigl( \sfti{c}{\W_{t} \hb_i} - \sfti{c}{\W_{t-\tau} \hb_i}\bigr) \\
        &\quad \quad \quad \quad + \bigl( \sfti{c}{\W_{t - \tau} \hb_i}\sfti{c}{\W_{t - \tau} \hb_p} - \sfti{c}{\W_{t} \hb_i} \sfti{c}{\W_{t} \hb_p} \bigr)| \\
        &= \frac{1}{n^2} \sum_{i \in [n], y_i \neq c} \sum_{p \in [n], y_p = c} |\sfti{c}{\W_t \hb_i} \bigl( 1 - \sfti{c}{\W_t \hb_p} \bigr) - \bigl( 1 - \sfti{c}{\W_{t - \tau} \hb_p} \bigr) \sfti{c}{\W_{t - \tau} \hb_i}| \\
        &= \frac{1}{n^2} \sum_{i \in [n], y_i \neq c} \sum_{p \in [n], y_p = c} \sfti{c}{\W_t \hb_i} \bigl( 1 - \sfti{c}{\W_t \hb_p} \bigr)| 1 - \frac{\bigl( 1 - \sfti{c}{\W_{t - \tau} \hb_p} \bigr) \sfti{c}{\W_{t - \tau} \hb_i}}{\bigl( 1 - \sfti{c}{\W_t \hb_p} \bigr) \sfti{c}{\W_t \hb_i}}| \\
        &\leq \bigl( e^{4 B \alpha \sum_{s=1}^{\tau} \eta_{t-s}} - 1 \bigr) \Qc_c(\W_t) \Gc_c(\W_t) ,
    \end{align*}
    where the last inequality is by Lemma \ref{lem:unified_helper} and the same steps taken for $\spadesuit_1$. 
    The third term can be derived similarly as the second term and we can obtain the same bound as follows: 
    \begin{align*}
        \spadesuit_3 &= \frac{1}{n^2} \sum_{i \in [n], y_i = c} \sum_{p \in [n], y_p \neq c} \sfti{c}{\W_t \hb_p} \bigl( 1 - \sfti{c}{\W_t \hb_i} \bigr)| 1 - \frac{\bigl( 1 - \sfti{c}{\W_{t - \tau} \hb_p} \bigr) \sfti{c}{\W_{t - \tau} \hb_i}}{\bigl( 1 - \sfti{c}{\W_t \hb_i} \bigr) \sfti{c}{\W_t \hb_p}}| \\
        &\leq \bigl( e^{4 B \alpha \sum_{s=1}^{\tau} \eta_{t-s}} - 1 \bigr) \Qc_c(\W_t) \Gc_c(\W_t).
    \end{align*}
    For the fourth term, we obtain: 
    \begin{align*}
         \spadesuit_4 &= \frac{1}{n^2} \sum_{i \in [n], y_i = c} \sum_{p \in [n], y_p = c} |\bigl( 1 - \sfti{c}{\W_{t - \tau} \hb_i} \bigr) \bigl( 1 - \sfti{c}{\W_{t - \tau} \hb_p} \bigr) - \bigl( 1 - \sfti{c}{\W_{t} \hb_i} \bigr) \bigl( 1 - \sfti{c}{\W_{t} \hb_p} \bigr)| \\
         &= \frac{1}{n^2} \sum_{i \in [n], y_i = c} \sum_{p \in [n], y_p = c} \bigl( 1 - \sfti{c}{\W_{t} \hb_i} \bigr) \bigl( 1 - \sfti{c}{\W_{t} \hb_p} \bigr) |\frac{\bigl( 1 - \sfti{c}{\W_{t - \tau} \hb_i} \bigr) \bigl( 1 - \sfti{c}{\W_{t - \tau} \hb_p} \bigr)} {\bigl( 1 - \sfti{c}{\W_{t} \hb_i} \bigr) \bigl( 1 - \sfti{c}{\W_{t} \hb_p} \bigr)} - 1| \\
         &\leq \bigl( e^{4 B \alpha \sum_{s=1}^{\tau} \eta_{t-s}} - 1 \bigr) \Gc_c(\W_t)^2,
    \end{align*}
    where the last inequality is by Lemma \ref{lem:unified_helper} and the same steps taken for $\spadesuit_1$. We combine the bounds for $\spadesuit_1$, $\spadesuit_2$, $\spadesuit_3$, and $\spadesuit_4$ to obtain: $\spadesuit \leq 4B^2  \bigl( e^{4 B \alpha \sum_{s=1}^{\tau} \eta_{t-s}} - 1 \bigr)  (\Gc_c(\W_t) + \Qc_c(\W_t))^2$. Then, we substitute this into \eqref{eq: second_G_eq1} to obtain:
    \begin{align*}
        |\Vb_t[c,j] - (1 - \beta_2^{t+1}) \nabla \Lc(\W_t) [c,j]^2| &\leq B^2(1-\beta_2) 
        (\Qc_c(\W_t) + \Gc_c(\W_t))^2 \sum_{\tau = 0}^t \beta_2^{\tau} \bigl( e^{4 \alpha B \sum_{s=1}^{\tau} \eta_{t-s} } - 1 \bigr) \\
        &\leq B^2(1-\beta_2) c_2 \eta_t (\Qc_c(\W_t) + \Gc_c(\W_t))^2,
    \end{align*}
    where the last inequality is by the Assumption \ref{ass:learning_rate_2}. The final result follows from the fact that $|p-q|^2 \leq |p^2 - q^2|$ when both $p$ and $q$ are positive.
    \end{proof}

The following Lemma bounds the term $\bigm| \langle \nabla \Lc(\W_t), \frac{\M_t}{\sqrt{\Vb_t}} - \frac{\nabla \Lc(\W_t)}{|\nabla \Lc(\W_t)|}\rangle \bigm|$ using $\Gc(\W_t)$.  It is used in Lemma \ref{lem:adam_descent} to show the decrease in the risk. The proof is similar to that of \citet[Lemma A.3]{zhang2024implicit}, but here we need to carefully track the index $c\in[k]$ using both $\Gc_c(\W)$ and $\Qc_c(\W)$ to avoid $k$ dependence. The final result crucially relies on the decomposition $\Gc(\W_t) = \sum_{c \in [k]} \Tc_c(\W_t) = \sum_{c \in [k]} \Qc_c(\W_t)$.  

\begin{lemma} \label{lem:adam_intermediate_bound} 
Under the same setting as Theorem \ref{thm:adam}, we have 
\begin{align*}
    \underbrace{\bigm| \langle \nabla \Lc(\W_t), \frac{\M_t}{\sqrt{\Vb_t}} - \frac{\nabla \Lc(\W_t)}{|\nabla \Lc(\W_t)|}\rangle \bigm|}_{\clubsuit} &\leq 4 \sqrt{\frac{\beta_1^{t+1}}{1 - \beta_2^{t+1}}} \none{\nabla \Lc(\W_t)} + \\ 
    &\quad \quad \frac{2d}{\sqrt{1-\beta_2}}\bigl( \frac{6 \alpha_V}{\sqrt{1 - \beta_2^{t+1}}}\sqrt{\eta_t} + 3 \alpha_M \eta_t\bigr) \Gc(\W_t). 
\end{align*}    
\end{lemma}

\begin{proof} For simplicity, we drop the subscripts $t$.
    Denote $\Tc_c(\W) \coloneqq \Gc_c(\W) + \Qc_c(\W)$. Then, by Lemmas \ref{lem:first_G_adam} and \ref{lem:second_G}, we have for any $c \in [k]$ and $j \in [d]$: 
    \begin{align}
        \M[c,j] &= (1 - \beta_1^{t+1}) \nabla \Lc(\W)[c,j] + \alpha_M \eta_t \Tc_c(\W) \epsilon_{m,c,j}, \label{eq:M_t}\\
        \sqrt{\Vb[c,j]} &= \sqrt{1-\beta_2^{t+1}} |\nabla \Lc(\W)[c,j]| + \alpha_V \sqrt{\eta_t} \Tc_c(\W) \epsilon_{v,c,j},
        \label{eq:V_t}
    \end{align}
    where $|\epsilon_{m,c,j}| \leq 1$ and $|\epsilon_{v,c,j}| \leq 1$ are some residual terms. We denote $\psi_{c,j} \coloneqq \nabla \Lc(\W)[c,j](\frac{\M[c,j]}{\sqrt{\Vb[c,j]}} - \frac{\nabla \Lc(\W)[c,j]}{|\nabla \Lc(\W)[c,j]|})$, the set of index $E_{c,j} \coloneqq \{ j \in [d] \Bigm|  \sqrt{1 - \beta_2^{t+1}}|\nabla \Lc(\W)[c,j]| \geq 2 \alpha_V \sqrt \eta_t \Tc_c(\W) |\epsilon_{v,c,j}|\}$, and its complement $E_{c,j}^c = [d] \backslash E_{c,j}$. The goal is to bound $|\psi_{c,j}|$ when $j \in E_{c,j}^c$ or $j \in E_{c,j}$ using $\Tc_c(\W)$. We start with the indices in $E_{c,j}^c$:
    \begin{align*}
        \sum_{j \in E_{c,j}^c} |\psi_{c,j}| &\leq \sum_{j \in E_{c,j}^c} |\nabla \Lc(\W)[c,j]| \bigl( \frac{|\M[c,j]|}{\sqrt{\Vb[c,j]}} + 1 \bigr) \\
        &\stackrel{(a)}{\leq} \sum_{j \in E_{c,j}^c} 
        |\nabla \Lc(\W)[c,j]| \bigl( \frac{(1 - \beta_1^{t+1})|\nabla \Lc(\W)[c,j]| + \alpha_M \eta_t \Tc_c(\W)}{\sqrt{1 - \beta_2} |\nabla \Lc(\W)[c,j]|} + 1\bigr)\\
        &\stackrel{(b)}{\leq} \sum_{j \in E_{c,j}^c} 
        (\frac{1 - \beta_1^{t+1}}{\sqrt{1 - \beta_2}} + 1)\frac{2\alpha_V \sqrt{\eta_t} \Tc_c(\W)}{\sqrt{1 - \beta_2^{t+1}}} + \frac{\alpha_M \eta_t \Tc_c(\W)}{\sqrt{1 - \beta_2}}\\
        &\leq \frac{d}{\sqrt{1-\beta_2}}\bigl( \frac{4 \alpha_V}{\sqrt{1 - \beta_2^{t+1}}}\sqrt{\eta_t} + \alpha_M \eta_t\bigr) \Tc_c(\W),
    \end{align*}
    where (a) is by \eqref{eq:M_t}, $|\epsilon_{m,c,j}| \leq 1$, and $\Vb[c,j] \geq (1-\beta_2) \nabla \Lc(\W)[c,j]^2$; and (b) is by $j \in E^c_{c,j}$ s.t. $|\nabla \Lc(\W)[c,j]| \leq \frac{2\alpha_V \sqrt{\eta_t} \Tc_c(\W)}{\sqrt{1-\beta_2^{t+1}}}$. Next, we focus on the indices $j \in E_{c,j}$. In this case, we have
    \begin{align*}
        \psi_{c,j} &= \nabla \Lc(\W)[c,j] \bigl( \frac{\M[c,j]}{\sqrt{1-\beta_2^{t+1}}|\nabla \Lc(\W)[c,j]| + \alpha_V \sqrt{\eta_t} \Tc_c(\W) \epsilon_{v,c,j}} - \frac{\nabla \Lc(\W)[c,j]}{|\nabla \Lc(\W)[c,j]|}\bigr) \\
        &=  \nabla \Lc(\W)[c,j]  \underbrace{\frac{\M[c,j]|\nabla \Lc(\W)[c,j]| - \bigl(\sqrt{1-\beta_2^{t+1}}|\nabla \Lc(\W)[c,j]| + \alpha_V \sqrt{\eta_t} \Tc_c(\W) \epsilon_{v,c,j} \bigr) \nabla \Lc(\W)[c,j]}{\bigl(\sqrt{1-\beta_2^{t+1}}|\nabla \Lc(\W)[c,j]| + \alpha_V \sqrt{\eta_t} \Tc_c(\W) \epsilon_{v,c,j}\bigr)|\nabla \Lc(\W)[c,j]|}}_{\frac{\spadesuit_1}{\spadesuit_2}},
    \end{align*}
    where 
    \begin{align*}
        \Bigm| \spadesuit_1 \Bigm| &= \Bigm| \bigl( 1 - \beta_1^{t+1} - \sqrt{1-\beta_2^{t+1}}\bigr) \nabla \Lc(\W)[c,j] |\nabla \Lc(\W)[c,j]| + \\
        &\quad \quad \alpha_M \eta_t \Tc_c(\W) \epsilon_{m,c,j} | \nabla \Lc(\W)[c,j]| - \alpha_V \sqrt{\eta_t} \Tc_c(\W) \epsilon_{v,c,j} \nabla \Lc(\W) [c,j]\Bigm| \\
        &\stackrel{(c)}{\leq} \bigm| 1 - \beta_1^{t+1} - \sqrt{1-\beta_2^{t+1}} | | \nabla \Lc(\W) [c,j] |^3 + (\alpha_M \eta_t + \alpha_V \sqrt{\eta_t}) \Tc_c(\W) |\nabla \Lc(\W)[c,j]|^2, 
    \end{align*}
    and 
    \begin{align*}
        \Bigm| \spadesuit_2 \Bigm| = \spadesuit_2 \stackrel{(d)}{\geq} \frac{1}{2} \sqrt{1 - \beta_2^{t+1}} |\nabla \Lc(\W)[c,j]|^2.
    \end{align*}
    Inequality (c) is by $|\epsilon_{m,c,j}| \leq 1$ and $|\epsilon_{v,c,j}| \leq 1$, and (d) is by $\alpha_V \sqrt{\eta_t} \Tc_c(\W) \epsilon_{v,c,j} \geq -\frac{1}{2} \sqrt{1 - \beta_2^{t+1}} |\nabla \Lc(\W)[c,j]|$ for any $j \in E_{c,j}$. Putting these two pieces together, we obtain
    \begin{align*}
         \sum_{j \in E_{c,j}} |\psi_{c,j}| &\leq \sum_{j \in E_{c,j}} \frac{|1 - \beta_1^{t+1} - \sqrt{1-\beta_2^{t+1}}||\nabla \Lc(\W) [c,j]|^3 + (\alpha_M \eta_t + \alpha_V \sqrt{\eta_t}) \Tc_c(\W) |\nabla \Lc(\W)[c,j]|^2}{\frac{1}{2} \sqrt{1 - \beta_2^{t+1}} |\nabla \Lc(\W)[c,j]|^2}\\
         &\stackrel{(e)}{\leq}  \Bigl( \sum_{j \in E_{c,j}} 4\sqrt{\frac{\beta_1^{t+1}}{1 - \beta_2^{t+1}}}|\nabla \Lc(\W)[c,j]| \Bigr) + d \bigl( \frac{2 \alpha_V}{\sqrt{1 - \beta_2^{t+1}}} \sqrt{\eta_t} + \frac{2 \alpha_M}{\sqrt{1 - \beta_2^{t+1}}} \eta_t \bigr) \Tc_c(\W) \\
         &\leq 4 \sqrt{\frac{\beta_1^{t+1}}{1 - \beta_2^{t+1}}} \none{\nabla \Lc(\W)[c,:]} + \frac{d}{\sqrt{1-\beta_2}}\bigl( \frac{2 \alpha_V}{\sqrt{1 - \beta_2^{t+1}}}\sqrt{\eta_t} + 2\alpha_M \eta_t\bigr) \Tc_c(\W),
    \end{align*}
    where (e) is by $\sqrt{a} \leq \sqrt{a-b} + \sqrt{b}$ implying $1 - \sqrt{1 - \beta_2^{t+1}} \leq \beta_1^{\frac{t+1}{2}}$, and $\nabla \Lc(\W)[c,:]$ denotes the $c$th row of $\nabla \Lc(\W)$. Finally, we note that 
    $|\langle \nabla \Lc(\W), \frac{\M}{\sqrt{\Vb}} - \frac{\nabla \Lc(\W)}{|\nabla \Lc(\W)|}\rangle |  = | \sum_{(c,j)} \psi_{c,j}| \leq \sum_{(c,j)} |\psi_{c,j}|$. Then, we obtain
    \begin{align*}
        \sum_{c,j} |\psi_{c,j}| &= \sum_{c \in [k]} \bigl( \sum_{j \in E^c_{c,j}} |\psi_{c,j}| + \sum_{j \in E_{c,j}} |\psi_{c,j}| \bigr) \\
        &= \sum_{c \in [k]} 4\sqrt{\frac{\beta_1^{t+1}}{1 - \beta_2^{t+1}}} \none{\nabla \Lc(\W)[c,:]} + \sum_{c \in [k]} \frac{d}{\sqrt{1-\beta_2}}\bigl( \frac{2 \alpha_V}{\sqrt{1 - \beta_2^{t+1}}}\sqrt{\eta_t} + 2\alpha_M \eta_t\bigr) \Tc_c(\W) \\
        &\stackrel{(f)}{=}4\sqrt{\frac{\beta_1^{t+1}}{1 - \beta_2^{t+1}}} \none{\nabla \Lc(\W)} + \frac{2d}{\sqrt{1-\beta_2}}\bigl( \frac{2 \alpha_V}{\sqrt{1 - \beta_2^{t+1}}}\sqrt{\eta_t} + 2\alpha_M \eta_t\bigr) \Gc(\W),
    \end{align*}
    where (f) is by $\sum_{c \in [k]} \Tc_c(\W) = \sum_{c \in [k]} \Qc_c(\W) + \Gc_c(\W) = 2 \Gc(\W)$. 
\end{proof}

\begin{lemma}[Adam Descent] \label{lem:adam_descent}
    Under the same setting as Theorem \ref{thm:adam}, set $t_A \coloneqq \frac{2 \log(\frac{\sqrt{1-\beta_2}}{4})}{\log(\beta_1)}$, then we have for all $t \geq t_A$
\begin{align*}
\Lc(\W_{t+1}) &\leq  \Lc(\W_t) - \eta_t \gamma \bigl( 1 - \alpha_{a_1} \beta_1^{t/2} - \alpha_{a _2} d \eta_{t}^{\frac{1}{2}} - \alpha_{a_3} d \eta_t\bigr) \Gc(\W_t), 
\end{align*}
where $\alpha_{a_1}$, $\alpha_{a_2}$, and $\alpha_{a_3}$ are some constants that depend on $B$, $\gamma$,$\beta_1$, and $\beta_2$.
\end{lemma}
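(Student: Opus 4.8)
The plan is to mirror the SignGD descent argument of Lemma~\ref{lem:sign_descent}, substituting the Adam step $\Deltab_t = \W_{t+1}-\W_t = -\eta_t\,\M_t/\sqrt{\Vb_t}$ for the sign step and absorbing the resulting discrepancy through Lemma~\ref{lem:adam_intermediate_bound}. Concretely, I would apply the exact second-order Taylor expansion of the CE loss (Lemma~\ref{lem:hessian}) at $\W=\W_t$ with perturbation $\Deltab_t$, writing $\Lc(\W_{t+1})$ as $\Lc(\W_t)$ plus the first-order term $\inp{\nabla\Lc(\W_t)}{\Deltab_t}$ plus a Hessian term evaluated at a mean-value point $\W_{t,t+1,\zeta^*}=\W_t+\zeta^*\Deltab_t$, $\zeta^*\in[0,1]$. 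It then remains to bound these two terms in terms of $\eta_t$, $\Gc(\W_t)$, and the three correction scales $\beta_1^{t/2}$, $d\eta_t^{1/2}$, $d\eta_t$.

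For the first-order term, I would use $\inp{\nabla\Lc(\W_t)}{\nabla\Lc(\W_t)/|\nabla\Lc(\W_t)|}=\none{\nabla\Lc(\W_t)}$ to get $\inp{\nabla\Lc(\W_t)}{\Deltab_t}\le -\eta_t\none{\nabla\Lc(\W_t)}+\eta_t\,\clubsuit$ exactly as in~\eqref{eq:adam_intermediate_main}, and bound $\clubsuit$ via Lemma~\ref{lem:adam_intermediate_bound}: a piece $4\sqrt{\beta_1^{t+1}/(1-\beta_2^{t+1})}\,\none{\nabla\Lc(\W_t)}$ plus a $\Gc(\W_t)$-piece of order $d\big(\sqrt{\eta_t}/\sqrt{1-\beta_2^{t+1}}+\eta_t\big)$. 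The threshold $t_A=2\log(\sqrt{1-\beta_2}/4)/\log\beta_1$ is chosen exactly so that for $t\ge t_A$ one has $4\beta_1^{(t+1)/2}\le\sqrt{1-\beta_2}\le\sqrt{1-\beta_2^{t+1}}$, hence the coefficient $1-4\sqrt{\beta_1^{t+1}/(1-\beta_2^{t+1})}$ is nonnegative; this nonnegativity is what licenses inserting the lower bound $\none{\nabla\Lc(\W_t)}\ge\gamma\,\Gc(\W_t)$ from Lemma~\ref{lem:G and gradient} into the aligned part, producing a contribution $-\eta_t\gamma\Gc(\W_t)\big(1-4\sqrt{\beta_1^{t+1}/(1-\beta_2^{t+1})}\big)$. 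Monotonizing $\beta_1^{(t+1)/2}\le\beta_1^{t/2}$ and $\sqrt{1-\beta_2^{t+1}}\ge\sqrt{1-\beta_2}$ turns $4\sqrt{\beta_1^{t+1}/(1-\beta_2^{t+1})}$ into $\alpha_{a_1}\beta_1^{t/2}$ with $\alpha_{a_1}=4/\sqrt{1-\beta_2}$, and likewise converts the $\Gc(\W_t)$-piece of $\clubsuit$ into the $d\eta_t^{1/2}$ correction (with $\alpha_{a_2}=12\alpha_V/(\gamma(1-\beta_2))$) and part of the $d\eta_t$ correction.

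For the Hessian term, I would apply Lemma~\ref{lem:hessian_bound} with $\vb\leftarrow\Deltab_t\hb_i$ and $c\leftarrow y_i$, use $\|\Deltab_t\hb_i\|_\infty\le B\ninf{\Deltab_t}$ (Ass.~\ref{ass:data_bound}), and---the one genuinely Adam-specific ingredient beyond SignGD---$\ninf{\Deltab_t}=\eta_t\ninf{\M_t/\sqrt{\Vb_t}}\le\alpha\eta_t$ by Lemma~\ref{lem:iter_bound}. This bounds the Hessian term by $2B^2\alpha^2\eta_t^2\,\Gc(\W_{t,t+1,\zeta^*})$, after which Lemma~\ref{lem:G_ratio} (with $\psi=\zeta^*$, $\triangle\W=\Deltab_t$) replaces the mean-value point by $\W_t$: $\Gc(\W_{t,t+1,\zeta^*})\le e^{2B\ninf{\Deltab_t}}\Gc(\W_t)\le e^{2B\alpha\eta_0}\Gc(\W_t)$. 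So the Hessian term is at most $2B^2\alpha^2e^{2B\alpha\eta_0}\eta_t^2\,\Gc(\W_t)$, an $O(\eta_t)$ correction once $\eta_t\gamma\Gc(\W_t)$ is factored out and $\eta_t\le\eta_0$ is used. Summing both contributions, factoring $\eta_t\gamma\Gc(\W_t)$, and using $1/\sqrt{1-\beta_2^{t+1}}\le1/\sqrt{1-\beta_2}$, $\eta_t\le\eta_0$, and $1\le d$ to merge the pure-$\eta_t$ piece into the $d\eta_t$ piece, one obtains $\Lc(\W_{t+1})\le\Lc(\W_t)-\eta_t\gamma\Gc(\W_t)\big(1-\alpha_{a_1}\beta_1^{t/2}-\alpha_{a_2}d\eta_t^{1/2}-\alpha_{a_3}d\eta_t\big)$ with $\alpha_{a_3}=6\alpha_M/(\gamma\sqrt{1-\beta_2})+2B^2\alpha^2e^{2B\alpha\eta_0}/\gamma$; since $\alpha_M,\alpha_V,\alpha$ depend only on $B,\beta_1,\beta_2$ (and $c_2,\eta_0$), so do $\alpha_{a_1},\alpha_{a_2},\alpha_{a_3}$, as claimed.

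The heavy lifting---controlling $\M_t$ and $\Vb_t$ by $\Gc(\W_t)$ without a factor of $k$---has already been done in Lemmas~\ref{lem:first_G}, \ref{lem:second_G}, and \ref{lem:adam_intermediate_bound}, so I expect the main obstacle here to be purely bookkeeping: one must check that $t_A$ is exactly the threshold making the first-order coefficient nonnegative (so that $\none{\nabla\Lc(\W_t)}\ge\gamma\Gc(\W_t)$ can legitimately be substituted), and that each $1/\sqrt{1-\beta_2^{t+1}}$, $\beta_1^{(t+1)/2}$, and $\eta_t^2$ factor monotonizes cleanly into $1/\sqrt{1-\beta_2}$, $\beta_1^{t/2}$, and $\eta_0\eta_t$ respectively, so the aggregate correction collapses precisely to $\alpha_{a_1}\beta_1^{t/2}+\alpha_{a_2}d\eta_t^{1/2}+\alpha_{a_3}d\eta_t$. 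A subtlety worth flagging is that the Hessian's softmax argument is the mean-value point and not $\W_t$, which is exactly why the $\Gc$-ratio bound of Lemma~\ref{lem:G_ratio} is needed rather than a direct estimate.
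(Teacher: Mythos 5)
Your proposal is correct and follows essentially the same route as the paper's proof: the same Taylor expansion with a mean-value point, the same splitting of the first-order term via Lemma \ref{lem:adam_intermediate_bound} with $t_A$ chosen precisely so the coefficient of $\none{\nabla\Lc(\W_t)}$ is nonnegative before invoking $\none{\nabla\Lc(\W_t)}\geq\gamma\Gc(\W_t)$, and the same Hessian bound via Lemmas \ref{lem:hessian_bound}, \ref{lem:iter_bound}, and \ref{lem:G_ratio}, yielding the same constants $\alpha_{a_1},\alpha_{a_2},\alpha_{a_3}$ (your $e^{2B\alpha\eta_0}$ versus the paper's $e^{2B\eta_0}$ in $\alpha_{a_3}$ is an immaterial, arguably more careful, constant).
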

\begin{proof}
    We follow the same notations and strategy of Lemma \ref{lem:nsd_descent}, and recall the definitions $\spadesuit_t = \inp{\nabla \Lc(\W_{t})}{\Deltab_t}$ and $\clubsuit_t = \hb_i^\top\Deltab_t^\top\left(\diag{\sft{\W_{t,t+1,\gamma}\hb_i}}-\sft{\W_{t,t+1,\zeta^*}\hb_i}\sft{\W_{t,t+1,\zeta^*}\hb_i}^\top\right)\Deltab_t\,\hb_i$. In the case of Adam, we have $\Deltab_t = \frac{\M_t}{\sqrt{\Vb_t}}$. We bound $\spadesuit_t$ and $\clubsuit_t$ separately. Starting with $\spadesuit_t$, we have for all $t \geq t_{A}$
    \begin{align*}
        \spadesuit_t &= -\eta_t \langle \nabla \Lc(\W_t), \frac{\M_t}{\sqrt{\Vb_t}}\rangle \\
        &= -\eta_t \bigl( \langle \nabla \Lc(\W_t), \frac{\M_t}{\sqrt{\Vb_t}} - \frac{\nabla \Lc(\W_t)}{|\nabla \Lc(\W_t)|}\rangle + \langle \nabla \Lc(\W_t), \frac{\nabla \Lc(\W_t)}{|\nabla \Lc(\W_t)|}\rangle \bigr) \\
        &\leq -\eta_t \none{\nabla \Lc(\W_t)} + \eta_t \bigm| \langle \nabla \Lc(\W_t), \frac{\M_t}{\sqrt{\Vb_t}} - \frac{\nabla \Lc(\W_t)}{|\nabla \Lc(\W_t)|}\rangle \bigm| \\
        &\stackrel{(a)}{\leq} -\eta_t \bigl( 1- 4\sqrt{\frac{\beta_1^{t+1}}{1 - \beta_2^{t+1}}} \bigr) \none{\nabla \Lc(\W_t)}  + \frac{2d}{\sqrt{1-\beta_2}}\bigl( \frac{6 \alpha_V}{\sqrt{1 - \beta_2^{t+1}}} \eta_t^{\frac{3}{2}} + 3 \alpha_M \eta_t^2\bigr) \Gc(\W_t) \\
        &\leq -\eta_t \bigl( 1- 4\frac{\beta_1^{\frac{t}{2}}}{\sqrt{1 - \beta_2}} \bigr) \none{\nabla \Lc(\W_t)}  + \frac{12 \alpha_V}{1 - \beta_2} d \eta_t^{3/2}\Gc(\W_t) + \frac{6\alpha_M}{\sqrt{1-\beta_2}} d \eta_t^2 \Gc(\W_t) \\
        &\stackrel{(b)}{\leq} - \eta_t \gamma \bigl( 1- 4\frac{\beta_1^{\frac{t}{2}}}{\sqrt{1 - \beta_2}} \bigr) \Gc(\W_t) + \frac{12 \alpha_V}{1 - \beta_2} d \eta_t^{3/2}\Gc(\W_t) + \frac{6\alpha_M}{\sqrt{1-\beta_2}} d \eta_t^2 \Gc(\W_t),
    \end{align*}
    where (a) is by Lemma \ref{lem:adam_intermediate_bound}, and (b) is by Lemma \ref{lem:G and gradient}. For $\clubsuit_t$, we apply Lemma \ref{lem:hessian_bound} to obtain
    \begin{align*}
        \clubsuit_t \leq 4 \| \Deltab_t \hb_i \|_{\infty}^2 (1 - \sfti{y_i}{\W_{t,t+1,\zeta^*} \hb_i}) \leq 4 \eta_t^2 \alpha^2 B^2 (1 - \sfti{y_i}{\W_{t,t+1,\zeta^*} \hb_i}),
    \end{align*}
    where in the second inequality we have used $\| \Deltab_t \hb_i \|_{\infty} \leq \ninf{\Deltab_t} \| \hb_i \|_{1}$, $\| \hb_i \|_{1} \leq B$, and $\ninf{\Deltab_t}  = \eta_t \ninf{\frac{\M_t}{\sqrt{\Vb_t}}} \leq \eta_t \alpha$ by Lemma \ref{lem:iter_bound} given $t \geq  t_A$ implying that $1 \geq 4 \frac{\beta_1^{\frac{t}{2}}}{\sqrt{1 - \beta_2}}$. Combing this with Lemma \ref{lem:hessian_bound}, we obtain
    \begin{align*}
        \frac{1}{2n}\sum_{i\in[n]}  \hb_i^\top\Deltab_t^\top & \left(\diag{\sft{\W_{t,t+1,\gamma}\hb_i}}- \sft{\W_{t,t+1,\zeta^*}\hb_i}\sft{\W_{t,t+1,\zeta^*}\hb_i}^\top\right)\Deltab_t\,\hb_i \\
        &\leq \frac{1}{2n} \sum_{i \in [n]} 4 \eta_t^2 \alpha^2 B^2 (1 - \sfti{y_i}{\W_{t,t+1,\zeta^*} \hb_i}) \leq 2 \alpha^2 \eta_t^2 B^2 e^{2B\eta_0} \Gc(\W_{t}),
    \end{align*}
    where the derivation of the second inequality can be found in the derivation of \ref{eq: sign_descent_eq1}. Putting everything together, we obtain
    \begin{align*}
        \Lc(\W_{t+1}) &\leq \Lc(\W_t) - \eta_t \gamma \Gc(\W_t) + 4 \frac{\beta_1^{\frac{t}{2}}}{\sqrt{1-\beta_2}} \gamma \eta_t \Gc(\W_t) +  \frac{12 \alpha_V}{1 - \beta_2} d \eta_t^{3/2}\Gc(\W_t) + \\
        &\quad \quad \quad \bigl( \frac{6\alpha_M}{\sqrt{1-\beta_2}} + 2 \alpha^2 B^2 e^{2 B\eta_0}\bigr) d\eta_t^{2}\Gc(\W_t) \\
        &= \Lc(\W_t) - \eta_t \gamma \bigl( 1 - \alpha_{a_1} \beta_1^{t/2} - \alpha_{a _2} d \eta_{t}^{\frac{1}{2}} - \alpha_{a_3} d \eta_t\bigr) \Gc(\W_t),
    \end{align*}
    where we have defined $\alpha_{a_1} \coloneqq \frac{4}{\sqrt{1 - \beta_2}}$, $\alpha_{a_2} \coloneqq \frac{12 \alpha_V}{\gamma(1-\beta_2)}$, and $\alpha_{a_3} \coloneqq \frac{6 \alpha_M}{\gamma \sqrt{1-\beta_2}} + \frac{2 \alpha^2 B^2 e^{2 B \eta_0}}{\gamma}$.
\end{proof}

Built upon Lemma \ref{lem:adam_descent}, we can further lower bound the unnormalized margin of Adam iterates for a sufficiently large $t$. The proof is similar to that of NSD (i.e., Lemma \ref{lem:nsd_unnormalized_margin}), which crucially depends on the separability condition obtained after achieving a low loss (Lemma \ref{lem:sep}). The time $\tilde{t}_A$ will be specified in the proof of Theorem \ref{thm:adam}.

\begin{lemma} [Adam Unnormalized Margin]\label{lem:adam_unnormalized_margin} 
Under the same setting as Theorem \ref{thm:adam}, suppose that there exist $\tilde{t}$ such that $\Lc(\W_t) \leq \frac{\log 2}{n}$ for all $t > \tilde{t}$, then we have for all $t \geq \tilde{t}_A \coloneqq \max \{t_A, \tilde{t}\}$
    \begin{align*}
       \min_{i \in [n], c \neq y_i} (\eb_{y_i} - \eb_c)^T \W_t \hb_i \geq \gamma\sum_{s=\tilde{t}_A}^{t-1} \eta_s \frac{\Gc(\W_s)}{\Lc(\W_s)} - \alpha_{a_5} d \sum_{s=\tilde{t}_A}^{t-1} \eta_s^{\frac{3}{2}} - \alpha_{a_6} d \sum_{s=\tilde{t}_A}^{t-1} \eta_s^2 - \alpha_{a_7},
    \end{align*}
where $t_A = \frac{2 \log(\frac{\sqrt{1-\beta_2}}{4})}{\log(\beta_1)}$, and $\alpha_{a_5}$, $\alpha_{a_6}$, and $\alpha_{a_7}$ are some constants that depend on $B$, $\beta_1$, and $\beta_2$.
\end{lemma}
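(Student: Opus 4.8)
The plan is to follow the SignGD argument of Lemma~\ref{lem:sign_unnormalized_margin} essentially verbatim, replacing the SignGD descent estimate by the Adam descent estimate of Lemma~\ref{lem:adam_descent} and carefully accounting for its two extra error contributions: the momentum term $\beta_1^{t/2}$ and the feature-dimension terms $d\eta_t^{1/2}$, $d\eta_t$. Throughout, fix $t\geq\tilde{t}_A\geq t_A$ and recall that $\Lc(\W)>0$ and, by Lemma~\ref{lem:G and L}(i), $0<\Gc(\W)/\Lc(\W)\leq 1$ for every $\W$.

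\textbf{Step 1 (multiplicative recursion).} Rewrite Lemma~\ref{lem:adam_descent} as $\Lc(\W_{t+1})\leq\Lc(\W_t)-\eta_t\gamma\Gc(\W_t)+\eta_t\gamma\Gc(\W_t)\,\epsilon_t$, where $\epsilon_t:=\alpha_{a_1}\beta_1^{t/2}+\alpha_{a_2}d\eta_t^{1/2}+\alpha_{a_3}d\eta_t\geq0$. Dividing by $\Lc(\W_t)$, applying $1+x\leq e^x$, and using $\Gc(\W_t)/\Lc(\W_t)\leq1$ only inside the nonnegative error term (so that $\eta_t\gamma\tfrac{\Gc(\W_t)}{\Lc(\W_t)}\epsilon_t\leq\eta_t\gamma\epsilon_t$) gives
\[
\Lc(\W_{t+1})\leq\Lc(\W_t)\exp\Big(-\gamma\eta_t\tfrac{\Gc(\W_t)}{\Lc(\W_t)}+\gamma\eta_t\epsilon_t\Big).
\]
This manipulation stays valid even when $1-\epsilon_t<0$ (possible for $t$ close to $t_A$), precisely because we never require $\epsilon_t<1$; we simply keep $\eta_t\gamma\epsilon_t$ as an additive nonnegative term.

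\textbf{Step 2 (telescope and absorb constants).} Unrolling the recursion from $\tilde{t}_A$ to $t-1$ and using $\Lc(\W_{\tilde{t}_A})\leq\tfrac{\log2}{n}$ (which holds since $\tilde{t}_A\geq\tilde{t}$; the boundary case will be pinned down in the proof of Theorem~\ref{thm:adam}) yields
\[
\Lc(\W_t)\leq\frac{\log2}{n}\exp\Big(-\gamma\sum_{s=\tilde{t}_A}^{t-1}\eta_s\tfrac{\Gc(\W_s)}{\Lc(\W_s)}+\gamma\sum_{s=\tilde{t}_A}^{t-1}\eta_s\epsilon_s\Big).
\]
Since $\eta_s\leq\eta_0$ and $\beta_1\in(0,1)$, the momentum part is summable: $\sum_{s\geq0}\eta_s\beta_1^{s/2}\leq\eta_0/(1-\sqrt{\beta_1})=:C$. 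Hence $\sum_{s=\tilde{t}_A}^{t-1}\eta_s\epsilon_s\leq\alpha_{a_1}C+\alpha_{a_2}d\sum_{s=\tilde{t}_A}^{t-1}\eta_s^{3/2}+\alpha_{a_3}d\sum_{s=\tilde{t}_A}^{t-1}\eta_s^{2}$, which is exactly the source of the constant $\alpha_{a_7}$ and of the $d\sum\eta_s^{3/2}$, $d\sum\eta_s^{2}$ terms in the claim.

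\textbf{Step 3 (from loss to margin).} This step is identical to the SignGD case. Since $\Lc(\W_t)\leq\tfrac{\log2}{n}$ for $t\geq\tilde{t}_A$, Lemma~\ref{lem:sep} gives $\min_{c\neq y_i}(\eb_{y_i}-\eb_c)^\top\W_t\hb_i\geq0$ for every $i\in[n]$; combining this with the elementary inequality $\log(1+e^{-z})\geq(\log2)e^{-z}$ for $z\geq0$ (applied at $z=\min_{c\neq y_i}(\eb_{y_i}-\eb_c)^\top\W_t\hb_i$) and $\Lc(\W_t)=\tfrac1n\sum_{i\in[n]}\log(1+\sum_{c\neq y_i}e^{-(\eb_{y_i}-\eb_c)^\top\W_t\hb_i})$ yields $e^{-\min_{i\in[n],c\neq y_i}(\eb_{y_i}-\eb_c)^\top\W_t\hb_i}\leq\tfrac{n}{\log2}\Lc(\W_t)$. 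Substituting the bound from Step~2, taking logarithms and negating produces the claimed inequality with $\alpha_{a_5}=\gamma\alpha_{a_2}$, $\alpha_{a_6}=\gamma\alpha_{a_3}$, and $\alpha_{a_7}=\gamma\alpha_{a_1}C=\gamma\alpha_{a_1}\eta_0/(1-\sqrt{\beta_1})$.

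\textbf{Main obstacle.} There is no genuinely new idea beyond Lemma~\ref{lem:sign_unnormalized_margin}; the only care required is (i) keeping the descent error $\epsilon_t$ as a nonnegative quantity so Step~1 goes through even when the descent coefficient $1-\epsilon_t$ is briefly negative near $t_A$, and (ii) recognizing that the momentum-weighted sum $\sum_s\eta_s\beta_1^{s/2}$ converges to a constant while the $d$-weighted sums $d\sum\eta_s^{3/2}$, $d\sum\eta_s^{2}$ are precisely the residual error terms that survive in the statement (for the optimal schedule $a=2/3$ used later, $\sum\eta_s^{3/2}=\Theta(\log t)$ and $\sum\eta_s^{2}=\Theta(1)$). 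All heavier work—relating $\M_t,\Vb_t$ to $\Gc_c,\Qc_c$—has already been done in Lemmas~\ref{lem:first_G}, \ref{lem:second_G}, \ref{lem:adam_intermediate_bound}, and \ref{lem:adam_descent}.
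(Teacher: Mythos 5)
Your proposal is correct and follows essentially the same route as the paper's own proof: apply the Adam descent lemma, use $\Gc(\W_t)/\Lc(\W_t)\leq 1$ only on the nonnegative error terms, telescope via $1+x\leq e^x$, absorb the geometrically-decaying momentum sum $\sum_s\eta_s\beta_1^{s/2}\leq\eta_0/(1-\sqrt{\beta_1})$ into the constant $\alpha_{a_7}$, and finish with the SignGD endgame (Lemma \ref{lem:sep} plus $\log(1+e^{-z})\geq(\log 2)e^{-z}$). Even your constants $\alpha_{a_5}=\gamma\alpha_{a_2}$, $\alpha_{a_6}=\gamma\alpha_{a_3}$, $\alpha_{a_7}=\gamma\alpha_{a_1}\eta_0/(1-\beta_1^{1/2})$ match the paper's choices.
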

\begin{proof}
     We denote $\alpha_{a_4} \coloneqq \frac{4}{\sqrt{1 - \beta_2}}$, $\alpha_{a_5} \coloneqq \frac{12 \alpha_V}{1 - \beta_2}$, and $\alpha_{a_6} \coloneqq \frac{6 \alpha_M}{\sqrt{1-\beta_2}} + 2 \alpha^2 B^2 e^{2B\eta_0}$. Under the assumption that $\Lc(\W_t) \leq \frac{\log2}{n}$ for all $t\geq \tilde{t}$, we have for all $t \geq \tilde{t}_A \coloneqq \max \{t_A, \tilde{t}\}$ (recall that $t_A = \frac{2 \log(\frac{\sqrt{1-\beta_2}}{4})}{\log(\beta_1)}$) 
    \begin{align*}
        \Lc(\W_{t+1}) &\stackrel{(a)}{\leq} \Lc(\W_t) - \eta_t \gamma \Gc(\W_t) + \alpha_{a_4} \beta_1^{\frac{t}{2}} \gamma \eta_t \Gc(\W_t) + \alpha_{a_5} d \eta_t^{\frac{3}{2}} \Gc(\W_t) + \alpha_{a_6} d \eta_t^2 \Gc(\W_t) \\
        &\stackrel{(b)}{\leq} \Lc(\W_t) \bigl( 1 - \eta_t \gamma \frac{\Gc(\W_t)}{\Lc(\W_t)} + \alpha_{a_4} \beta_1^{\frac{t}{2}} \gamma \eta_t + \alpha_{a_5} d \eta_t^{\frac{3}{2}} + \alpha_{a_6} d \eta_t^2\bigr) \\
        &\leq \Lc(\W_{\tilde{t}_A}) \exp \bigl(-\gamma\sum_{s=\tilde{t}_A}^t \eta_s \frac{\Gc(\W_s)}{\Lc(\W_s)} + \alpha_{a_4} \gamma \sum_{s=\tilde{t}_A}^t \beta_1^{\frac{s}{2}} \eta_s + \alpha_{a_5} d \sum_{s=\tilde{t}_A}^t \eta_s^{\frac{3}{2}} + \alpha_{a_6} d \sum_{s=\tilde{t}_A}^t \eta_s^2 \bigr) \\
        &\stackrel{(c)}{\leq} \frac{\log 2}{n} \exp \bigl(-\gamma\sum_{s=\tilde{t}_A}^t \eta_s \frac{\Gc(\W_s)}{\Lc(\W_s)} + \alpha_{a_5} d \sum_{s=\tilde{t}_A}^t \eta_s^{\frac{3}{2}} + \alpha_{a_6} d \sum_{s=\tilde{t}_A}^t \eta_s^2 + \alpha_{a_7}\bigr), 
    \end{align*}
    where (a) is by Lemma \ref{lem:adam_descent}, (b) is by $\frac{\Gc(\W_t)}{\Lc(\W_t)} \leq 1$ (shown in Lemma \ref{lem:G and L}), and (c) is by $\Lc(\W_{\tilde{t}_A}) \leq \frac{\log 2}{n}$ and $\alpha_{a_4} \gamma \sum_{s = \tilde{t}_A}^t \beta_1^{\frac{s}{2}} \eta_s \leq \frac{\alpha_{a_4} \gamma \eta_0}{1 - \beta_1^{\frac{1}{2}}} \eqqcolon \alpha_{a_7}$ . The rest of the proof follows the same arguments in Lemma \ref{lem:nsd_unnormalized_margin}. Namely, the assumption $\Lc(\W_t) \leq \frac{\log 2}{n}$ implies that $\min_{c \neq y_i} (\eb_{y_i} - \eb_c)^T \W_t \hb_i \geq 0$ for all $i \in [n]$. This separability condition can be used further to show that for all $t \geq \tilde{t}_A$
    \begin{align*}
       e^{-\min_{i \in [n], c \neq y_i} (\eb_{y_i} - \eb_c)^T \W_t \hb_i} \leq \exp \bigl(-\gamma\sum_{s=\tilde{t}_A}^{t-1} \eta_s \frac{\Gc(\W_s)}{\Lc(\W_s)} + \alpha_{a_5} d \sum_{s=\tilde{t}_A}^{t-1} \eta_s^{\frac{3}{2}} + \alpha_{a_6} d \sum_{s=\tilde{t}_A}^{t-1} \eta_s^2 + \alpha_{a_7}\bigr).
    \end{align*}
    Taking the $\log$ on both sides leads to the final result.
\end{proof}

Next lemma upper bounds the max-norm of Adam iterates. It involves showing that the risk upper bounds entry-wise second moment, which will become small after the risk starts to monotonically decrease. Its proof can be found in \citet[Lemma 6.4]{zhang2024implicit}. Here, we only show the steps that are specific in our settings.
\begin{lemma}[Adam $\ninf{\W_t}$]\label{lem:adam_weight}
    Under the same setting as Theorem \ref{thm:adam}, suppose that there exists $\tilde{t}_B > \log(\frac{1}{\omega})$ such that $\Lc(\W_t) \leq \frac{1}{\sqrt{4 B^2 + \alpha_V \eta_0}}$ for all $t \geq \tilde{t}_B$, then we have
    \begin{align*}
        \ninf{\W_t} \leq \alpha_{a_8} \sum_{s=0}^{\tilde{t}_B - 1} \eta_s + \sum_{s=\tilde{t}_B}^{t-1} \eta_s + \ninf{\W_0},
    \end{align*}
    where $\alpha_{a_8}$ is some constant that depends on $B$, $\beta_1$, and $\beta_2$. 
\end{lemma}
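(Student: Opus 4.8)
The plan is to split the trajectory at time $\tilde t_B$ and bound, via the triangle inequality, $\ninf{\W_t} \le \ninf{\W_0} + \ninf{\W_{\tilde t_B} - \W_0} + \ninf{\W_t - \W_{\tilde t_B}}$, handling the warm-up phase $s < \tilde t_B$ with a crude uniform bound on the per-step increment and the tail phase $s \ge \tilde t_B$ with the refined Adam argument of \citet[Lemma~6.4]{zhang2024implicit}. For the warm-up phase I would write $\W_{\tilde t_B} - \W_0 = -\sum_{s=0}^{\tilde t_B-1}\eta_s \M_s/\sqrt{\Vb_s}$ and invoke Lemma~\ref{lem:iter_bound} (legitimate since $\beta_1 \le \beta_2$ is part of the setting of Theorem~\ref{thm:adam}), which gives $|\M_s[c,j]|/\sqrt{\Vb_s[c,j]} \le \alpha$ for every $c\in[k],j\in[d]$, hence $\ninf{\W_{\tilde t_B} - \W_0} \le \alpha \sum_{s=0}^{\tilde t_B-1}\eta_s$; this furnishes the $\alpha_{a_8}\sum_{s=0}^{\tilde t_B-1}\eta_s$ term (with $\alpha_{a_8}$ taken at least $\alpha$ and large enough to also absorb an additive $\beta$-dependent constant, using $\sum_{s=0}^{\tilde t_B-1}\eta_s \ge \eta_0 > 0$).

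The tail phase is where our multiclass machinery enters, and it does so only through the single fact that $\Vb_t[c,j] \le 1$ for all $t \ge \tilde t_B$, $c\in[k]$, $j\in[d]$. I would re-derive this exactly as in the ``Adam Implicit Bias'' paragraph of the main text: Lemma~\ref{lem:second_G} (together with $\Gc_c(\W)\le\Gc(\W)$ and $\Qc_c(\W)\le\Gc(\W)$, which hold since $\Gc(\W)=\sum_c\Gc_c(\W)=\sum_c\Qc_c(\W)$ with nonnegative summands) gives $\Vb_t[c,j] \le \nabla\Lc(\W_t)[c,j]^2 + \order{\sqrt{\eta_t}}\,\Gc(\W_t)^2$; then Lemma~\ref{lem:G and gradient} yields $\nabla\Lc(\W_t)[c,j]^2 \le 4B^2\Gc(\W_t)^2$, Lemma~\ref{lem:G and L}(i) gives $\Gc(\W_t)\le\Lc(\W_t)$, and $\eta_t\le\eta_0$; altogether $\Vb_t[c,j]\le (4B^2+\alpha_V\eta_0)\Lc(\W_t)^2$, which is $\le 1$ precisely under the hypothesis $\Lc(\W_t)\le 1/\sqrt{4B^2+\alpha_V\eta_0}$. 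The crucial point of the per-class bookkeeping is that this bound is governed by the \emph{global} loss $\Lc(\W_t)$ and carries no factor of $k$.

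With $\Vb_s[c,j]\in(0,1]$ for all $s\ge\tilde t_B$ in hand, I would invoke \citet[Lemma~A.4]{zhang2024implicit} (see also \citet[Lemma~4.2]{xie2024implicit}), which relates the entry-wise displacement of Adam's iterate to $\log\Vb_s[c,j]$ and bounds $|\W_t[c,j] - \W_{\tilde t_B}[c,j]| \le \sum_{s=\tilde t_B}^{t-1}\eta_s$ plus a correction of order $\log(1/\Vb_{\tilde t_B}[c,j])$; Assumption~\ref{ass:adam_init} forces $\Vb_0[c,j]\ge(1-\beta_2)\omega$, hence a lower bound on $\Vb_{\tilde t_B}[c,j]$, and the hypothesis $\tilde t_B > \log(1/\omega)$ is exactly what makes the resulting correction absorbable into the warm-up term $\alpha_{a_8}\sum_{s=0}^{\tilde t_B-1}\eta_s$. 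Summing the two phases then gives the claim. I expect the only genuinely new ingredient to be the bound $\Vb_t[c,j]\le 1$ (everything downstream is transplanted verbatim from the binary analysis), and within it the mild obstacle of converting the per-class estimate of Lemma~\ref{lem:second_G} into a class-free bound expressed purely in terms of $\Lc(\W_t)$.
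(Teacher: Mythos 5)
Your proposal is correct and follows essentially the same route as the paper: the multiclass-specific content is exactly the entrywise bound $\Vb_t[c,j]\leq(4B^2+\alpha_V\eta_0)\,\Lc(\W_t)^2\leq 1$ obtained from Lemma \ref{lem:second_G} (with $\Gc_c,\Qc_c\leq\Gc$), Lemma \ref{lem:G and gradient}, and Lemma \ref{lem:G and L}, paired with the lower bound $\Vb_t[c,j]\geq\beta_2^t(1-\beta_2)\omega$ from Assumption \ref{ass:adam_init}, and the paper likewise delegates the conversion of these two facts into the iterate bound (warm-up sum with constant $\alpha_{a_8}$, tail sum with coefficient one) to the entrywise $\W$--$\log\Vb_t$ argument of \citet[Lemma 6.4]{zhang2024implicit}. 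Your only deviations are cosmetic: handling the warm-up phase explicitly via Lemma \ref{lem:iter_bound} and slightly different constants from using the square-rooted form of Lemma \ref{lem:second_G}, neither of which changes the argument.
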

\begin{proof}
    For any $c \in [k]$ and $j \in [d]$, we have for all $t \geq \tilde{t}_B$
    \begin{align*}
        \Vb_t[c,j] &\stackrel{(a)}{\leq} (1-\beta_2^{t+1}) \nabla \Lc(\W_t) [c,j]^2 +\alpha_V \eta_t \Gc(\W_t)^2 \\
        &\leq \nabla \Lc(\W_t)[c,j]^2 + \alpha_V \eta_t \Gc(\W_t)^2 \\
        &\stackrel{(b)}{\leq} 4 B^2 \Gc(\W_t)^2 + \alpha_V \eta_0 \Gc(\W_t)^2 \\
        &\stackrel{(c)}{\leq} (4 B^2 + \alpha_V \eta_0) \Lc(\W_t)^2 \stackrel{(d)}{\leq} 1, 
    \end{align*}
    where (a) is by Lemma \ref{lem:first_G_adam}, (b) is by Lemma \ref{lem:G and gradient}, (c) is by Lemma \ref{lem:G and L}, and (d) is by the assumption.
    This implies that for all $t \geq \tilde{t}_B$
    \begin{align*}
        0 \geq \log(\Vb_t[c,j]) \geq \log(\beta_2^t (1-\beta_2) \Lc(\W_t)[c,j]^2) \stackrel{(e)}{\geq} t \log(\beta_2) + \log(1 - \beta_2) + \log(\omega),
    \end{align*}
    where (e) is by the Assumption \ref{ass:adam_init}. The rest proof follows the same arguments in \citet[Lemma 6.4]{zhang2024implicit}. 
\end{proof}

\begin{theorem} \label{thm:adam} Suppose that Assumption \ref{ass:sep}, 
\ref{ass:learning_rate_1},
\ref{ass:learning_rate_2}, \ref{ass:data_bound}, 
and \ref{ass:adam_init} hold, 
and $\beta_1 \leq \beta_2$,  then there exists $t_{a_2} = t_{a_2}(n,d, \gamma,B,\W_0,\beta_1,\beta_2,\omega)$ such that Adam achieves the following for all $t > t_{a_2}$
\begin{align*}
     \left|\frac{\min_{i \in [n], c \neq y_i} (\e_{y_i} - \e_c)^T \W_t \hb_i}{\ninf{\W_t}} - \gamma\right| &\leq  \mathcal{O}(\frac{\sum_{s=t_{a_2}}^{t-1} \eta_s e^{-\frac{\gamma}{4} \sum_{\tau = t_{a_2}}^{s-1}\eta_{\tau}} + \sum_{s=0}^{t_{a_2}-1}\eta_s + d\sum_{s = t_{a_2}}^{t-1}\eta_s^{3/2}}{\sum_{s=0}^{t-1} \eta_s}). 
\end{align*}
\end{theorem}

\begin{proof}
\textbf{Determination of $t_{a_1}$.} Here, we consider learning rate schedule of the form $\eta_t = \Theta(\frac{1}{t^a})$ where $a \in (0,1]$. We choose $t_{a_1}$ after ($\max\{t_0,t_A, \log(\frac{1}{\omega})\}$ where $t_0$ satisfies Assumption \ref{ass:learning_rate_2} and $t_A = \frac{2 \log(\frac{\sqrt{1 - \beta_2}}{4})}{\log \beta_1}$) such that the following conditions are met: $\alpha_{a_1} \beta_1^{t/2} \leq \frac{1}{6}$, $\alpha_{a_2} d\eta_t^{1/2} \leq \frac{1}{6}$, and $\alpha_{a_3} d \eta_t \leq \frac{1}{6}$. Concretely, we can set $t_{a_1} = \max \{\frac{- 2\log(6 \alpha_{a_1})}{\log \beta_1}, (36 \alpha_{a_2}^2 d^2)^{1/a}, (6 \alpha_{a_3} d)^{1/a} \} = \Theta(d^{2/a})$. Then, we have for all $t \geq t_{a_1}$
\begin{align}
    \Lc(\W_{t+1}) \leq \Lc(\W_t) - \frac{\eta_t \gamma}{2} \Gc(\W_t). \label{eq:adam_main_eq1} 
\end{align}
Rearranging this equation and using non-negativity of the loss we obtain $\gamma \sum_{s = t_{a_1}}^t \eta_s \Gc(\W_s) \leq 2\Lc(\W_{t_{a_1}})$. \\
\textbf{Determination of $t_{a_2}$.} By Lemma \ref{lem:l_fast_bound}, we can bound $\Lc(\W_{t_{s_1}})$ as follows
\begin{align*}
    |\Lc(\W_{t_{a_1}}) - \Lc(\W_0)| \leq 2 B \ninf{\W_{t_{a_1}} - \W_0} \leq 2 B \sum_{s = 0}^{t_{a_1}-1} \eta_s \ninf{\frac{\M_s}{\sqrt{\Vb_s}}}
    \leq 2 B \alpha \sum_{s = 0}^{t_{a_1}-1} \eta_s,
\end{align*}
where the last inequality is by Lemma \ref{lem:iter_bound}. 
 Combining this with the result above and letting $\tilde{\Lc} = \min \{\frac{\log 2}{n},  \frac{1}{\sqrt{4 B^2 + \alpha_V \eta_0}} \}$), we obtain
\begin{align*}
    \Gc(\W_{t^*}) = \min_{s \in [t_{a_1}, t_{a_2}]} \Gc(\W_s) \leq \frac{ 2 \Lc(\W_0) + 4B\alpha \sum_{s = 1}^{t_{a_1}-1} \eta_s}{\gamma \sum_{s=t_{a_1}}^{t_{a_2}} \eta_s} \leq \frac{\tilde{\Lc}}{2} \leq \frac{1}{2n}, 
\end{align*}
from which we derive the sufficient condition on $t_{a_2}$ to be $\sum_{s=t_{a_1}}^{t_{a_2}} \eta_s \geq \frac{4\Lc(\W_0) + 8 B \alpha \sum_{s = 1}^{t_{a_1}-1} \eta_s}{\gamma \tilde{\Lc}}$.\\ 
\textbf{Convergence of $\frac{\Gc(\W_t)}{\Lc(\W_t)}$} We follow the same arguments in the proof of NSD (Theorem \ref{thm:nsd}) to conclude that 
\begin{align}
    \frac{\Gc(\W_t)}{\Lc(\W_t)} \geq 1 - e^{-\frac{\gamma}{4} \sum_{s = t_{a_2}}^{t-1} \eta_s}. \label{eq:adam_main_eq2} 
\end{align}
We note that $t_{a_2}$ satisfies the assumptions in Lemma \ref{lem:adam_unnormalized_margin} and Lemma \ref{lem:adam_weight}. \\
\textbf{Margin Convergence} Finally, we combine Lemma \ref{lem:adam_unnormalized_margin}, Lemma \ref{lem:adam_weight}, and \eqref{eq:adam_main_eq2} to obtain 
\begin{align*}
    |\frac{\min_{i \in [n], c \neq y_i} (\e_{y_i} - \e_c)^T \W_t \hb_i}{\ninf{\W_t}} - \gamma| 
    &\leq \mathcal{O}(\frac{\sum_{s=t_{a_2}}^{t-1} \eta_s e^{-\frac{\gamma}{4} \sum_{\tau = t_{a_2}}^{s-1}\eta_{\tau}} + \sum_{s=0}^{t_{a_2}-1}\eta_s + d\sum_{s = t_{a_2}}^{t-1}\eta_s^{3/2} + d\sum_{s = t_{a_2}}^{t-1}\eta_s^{2}}{\sum_{s=0}^{t-1} \eta_s}) \\
    &\leq \mathcal{O}(\frac{\sum_{s=t_{a_2}}^{t-1} \eta_s e^{-\frac{\gamma}{4} \sum_{\tau = t_{a_2}}^{s-1}\eta_{\tau}} + \sum_{s=0}^{t_{a_2}-1}\eta_s + d\sum_{s = t_{a_2}}^{t-1}\eta_s^{3/2}}{\sum_{s=0}^{t-1} \eta_s})
\end{align*}
\end{proof}

Similar to the case of NSD, we can derive the margin convergence rates for Adam.

\begin{corollary} \label{cor:adam}
    Consider learning rate schedule of the form $\eta_t = \Theta(\frac{1}{t^a})$ where $a \in (0,1]$, under the same setting as Theorem \ref{thm:adam}, then we have for Adam 
    \[ 
   |\frac{\min_{i \in [n], c \neq y_i} (\e_{y_i} - \e_c)^T \W_t \hb_i}{\ninf{\W_t}} - \gamma| = \left\{
    \begin{array}{ll}
           \mathcal{O} (\frac{d t^{1-\frac{3a}{2}}+ n d^{\frac{2(1-a)}{a}} + n\Lc(\W_0) + [\log(1/\omega)]^{1-a}}{t^{1-a}}) &  \text{if} \quad a < \frac{2}{3}\\
           \mathcal{O} (\frac{d \log(t) + nd + n \Lc(\W_0) + [\log(1/\omega)]^{1/3}}{t^{1/3}}) & \text{if} \quad a = \frac{2}{3} \\
            \mathcal{O} (\frac{d + n d^{\frac{2(1-a)}{a}}+ n \Lc(\W_0) + [\log(1/\omega)]^{1-a}}{t^{1-a}}) &  \text{if}\quad \frac{2}{3} < a<1 \\
            \mathcal{O} (\frac{d + n \log(d) + n \Lc(\W_0) + \log \log(1/\omega)}{\log t}) &  \text{if} \quad a=1
    \end{array} 
    \right. 
    \]
\end{corollary}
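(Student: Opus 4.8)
The plan is to mirror the proof of Corollary~\ref{cor:sign_gd}, substituting $\eta_t=\Theta(1/t^a)$ into the bound of Theorem~\ref{thm:adam} and doing a case analysis on $a$. The only structural differences from the SignGD argument are that (i) the starting time $t_{a_1}=\Theta(d^{2/a})$ is no longer dimension-free, so it propagates into the threshold that determines $t_{a_2}$, and (ii) the tail term in the numerator is $d\sum_{s=t_{a_2}}^{t-1}\eta_s^{3/2}$ rather than $\sum_s\eta_s^{2}$, which shifts the exponent at which this term stops growing from $a>1/2$ to $a>2/3$, thereby making $a=2/3$ the optimal schedule.

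First I would recall from the proof of Theorem~\ref{thm:adam} that $t_{a_1}=\Theta(d^{2/a})$ and that $t_{a_2}$ is any time satisfying $\sum_{s=t_{a_1}}^{t_{a_2}}\eta_s\geq \frac{4\Lc(\W_0)+8B\alpha\sum_{s=0}^{t_{a_1}-1}\eta_s}{\gamma\tilde{\Lc}}$ together with $t_{a_2}>\log(1/\omega)$, where $\tilde{\Lc}=\min\{\tfrac{\log 2}{n},\tfrac{1}{\sqrt{4B^2+\alpha_V\eta_0}}\}$. Using the integral estimates $\sum_{s=1}^{T}\eta_s=\Theta(T^{1-a})$ for $a\in(0,1)$ and $\Theta(\log T)$ for $a=1$, the threshold inequality can be solved for $t_{a_2}$: for $a<1$ it gives $t_{a_2}=\Theta\big(n^{1/(1-a)}d^{2/a}+n^{1/(1-a)}\Lc(\W_0)^{1/(1-a)}+\log(1/\omega)\big)$, hence $\sum_{s=0}^{t_{a_2}-1}\eta_s=\Theta(t_{a_2}^{1-a})=\Theta\big(nd^{2(1-a)/a}+n\Lc(\W_0)+(\log(1/\omega))^{1-a}\big)$; for $a=1$ the polynomial solution degenerates to $t_{a_2}=\Theta\big(d^{2}e^{\Theta(n\log d+n\Lc(\W_0))}+\log(1/\omega)\big)$, whence $\sum_{s=0}^{t_{a_2}-1}\eta_s=\Theta\big(n\log d+n\Lc(\W_0)+\log\log(1/\omega)\big)$.

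Next I would estimate the remaining terms as functions of $t$. The denominator is $\sum_{s=0}^{t-1}\eta_s=\Theta(t^{1-a})$ for $a<1$ and $\Theta(\log t)$ for $a=1$. The tail term obeys $\sum_{s=t_{a_2}}^{t-1}\eta_s^{3/2}=\Theta(t^{1-3a/2})$ for $a<2/3$, $\Theta(\log t)$ for $a=2/3$, and $\Theta(1)$ for $a>2/3$, so multiplying by $d$ produces the $d$-factor in the numerator (and the $d\sum\eta_s^2$ piece that appeared inside Theorem~\ref{thm:adam}'s proof is always dominated by this). For the geometrically-weighted term $\sum_{s=t_{a_2}}^{t-1}\eta_s e^{-\frac{\gamma}{4}\sum_{\tau=t_{a_2}}^{s-1}\eta_\tau}$ I would invoke \citet[Corollary~4.7 and Lemma~C.1]{zhang2024implicit}, which also verifies that $\eta_t=1/(t+2)^a$ satisfies Assumption~\ref{ass:learning_rate_2}, to conclude it is bounded by a constant for all $a\in(0,1]$. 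Finally, plugging all of these into the ratio of Theorem~\ref{thm:adam}, dividing by $\sum_{s=0}^{t-1}\eta_s$, and simplifying each regime of $a$ (absorbing lower-order terms) yields the four stated rates; balancing $dt^{1-3a/2}$ against $t^{1-a}$ confirms $a=2/3$ as optimal with rate $\mathcal{O}\big(\frac{d\log t+nd+n\Lc(\W_0)+[\log(1/\omega)]^{1/3}}{t^{1/3}}\big)$.

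The main obstacle — really a matter of careful bookkeeping rather than new ideas — is tracking the dimension and stability-constant dependence through the chain $t_{a_1}\to t_{a_2}\to\sum_{s<t_{a_2}}\eta_s$, and in particular handling the $a=1$ edge case, where the threshold inequality yields an exponential dependence on the threshold so one must check that $\log(1/\omega)$ only enters at the doubly-logarithmic scale; the rest is routine integral comparison.
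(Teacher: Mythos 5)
Your proposal is correct and follows essentially the same route as the paper: the paper's own proof is just a terse version of your argument, recalling $t_{a_1}=\Theta(d^{2/a})$ and the threshold condition on $t_{a_2}$, then deferring the integral approximations and the treatment of the geometrically-weighted term (and Assumption~\ref{ass:learning_rate_2}) to the SignGD corollary and \citet[Corollary 4.7, Lemma C.1]{zhang2024implicit}. Your bookkeeping of the $d$- and $\log(1/\omega)$-dependence through $t_{a_1}\to t_{a_2}\to\sum_{s<t_{a_2}}\eta_s$, the shift of the optimal exponent to $a=2/3$ via the $d\sum_s\eta_s^{3/2}$ tail, and the $a=1$ edge case all match the paper's stated bounds.
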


\begin{proof}
    Recall that $t_{a_1} = \Theta(d^{2/a}) = C_{a_1} d^{2/a}$, and the condition on $t_{a_2}$ is $\frac{ 2 \Lc(\W_0) + 4B\alpha \sum_{s = 1}^{t_{a_1}-1} \eta_s}{\gamma \sum_{s=t_{a_1}}^{t_{a_2}} \eta_s} \leq \frac{\tilde{\Lc}}{2}$, where $\tilde{\Lc} = \min \{\frac{\log 2}{n},  \frac{1}{\sqrt{4 B^2 + \alpha_V \eta_0}} \}$. Then,  we apply integral approximations and the rest of the proof can be found in \cite[Corollary 4.7 and Lemma C.1]{zhang2024implicit}.  
\end{proof}

\begin{remark}
    These rates match exactly those in the binary case of \citet{zhang2024implicit} with logarithmic dependence on the initialization parameter $\omega$ (Ass. \ref{ass:adam_init}). This is only made possible through the fine-grained per-class bounding of the first and second moments using both $\Gc_c(\W)$ and $\Qc_c(\W)$. Note that Lemma \ref{lem:adam_intermediate_bound} takes the same form as \citet[Lemma A.4]{zhang2024implicit}. However, without the tight per-class bound and the equivalent decomposition of $\Gc(\W)$ using either $\Qc_c(\W)$ or $\Gc_c(\W)$, an extra factor of $k$ would appear. Interestingly, our rates for SignGD in Corollary \ref{cor:nsd} reveal a theoretical gap: Adam's optimal choice $a=\frac{2}{3}$ yields $\mathcal{O}(\frac{d \log(t) + nd}{t^{1/3}})$ while SignGD achieves $\mathcal{O}(\frac{\log(t)+ n}{t^{1/2}})$ with $a = \frac{1}{2}$. Despite achieving tightness w.r.t. class-dimension ($k$), this gap emerges from our entry-wise analysis of the $\clubsuit$ term in Lemma \ref{lem:adam_intermediate_bound} across the feature dimension $(d)$ using scalar functions $\Gc_c(\W),\Qc_c(\W)$. Closing this theoretical gap--revealed through our NSD analysis--that also appears in the binary case \citep{zhang2024implicit}, forms an important direction for future work.
\end{remark}

\paragraph{Numerical Validations}
We test the margin converge of Adam with different stability constants ($\epsilon$) chosen from the set $\{0, 10^{-6}, 10^{-7}, 10^{-8}\}$. We make the following observations: \textbf{(1) SignGD/Adam vs NGD:} SignGD and Adam(with zero stability) iterates favor the max-norm margin over the 2-norm margin. The opposite is true for NGD (Figs. \ref{fig:l2_margin}, \ref{fig:max_margin}); \textbf{(2) Speed of convergence:} For the learning rate schedules considered, the margin convergence of SignGD is faster than that of Adam (with zero stability constant), consistent with our theoretical results (Fig. \ref{fig:max_margin}); \textbf{(3) Effect of Adam's stability constant:} For Adam with non-zero stability constants, when the magnitude of the gradient (or the second moment) is above the stability constant, the convergence to the max-norm margin is favored over that to the $2$-norm margin. However, when the gradient values approach or fall below the stability constant, the max-norm margin starts to decrease while the 2-norm margin increases. This is shown in Figs. \ref{fig:max_margin_zoom} and \ref{fig:l2_margin_zoom} (compared against the case of zero stability constant in which the max-norm margin keeps increasing). Moreover, the max-norm to 2-norm margin transition occurs earlier in training for larger stability constants. The experiments in Figs \ref{fig:cor_inf}  and \ref{fig:cor_v2} further confirm this two-phase behavior given the correlations to the 2-norm separator $\Vb_{2}$ rise while those to the max-norm separator $\Vb_{\infty}$ plateau in the late training stage. These experiments confirm that the results of \citet{wang2021implicit} also hold in the multiclass setting provided that the stability constant is non-zero and the magnitudes of the second moment are small (compared against the stability constant). Note, however, that the stability constant is typically chosen to be very small ($\epsilon \sim 10^{-8}$) and the training is not long enough until all gradients are below this value. Thus, the max-norm margin convergence results are practically relevant. 

\begin{figure*}[t!]
    \captionsetup{width=1.\textwidth}
    \centering
    \begin{subfigure}[t]{0.24\textwidth}
        \includegraphics[width=\textwidth]{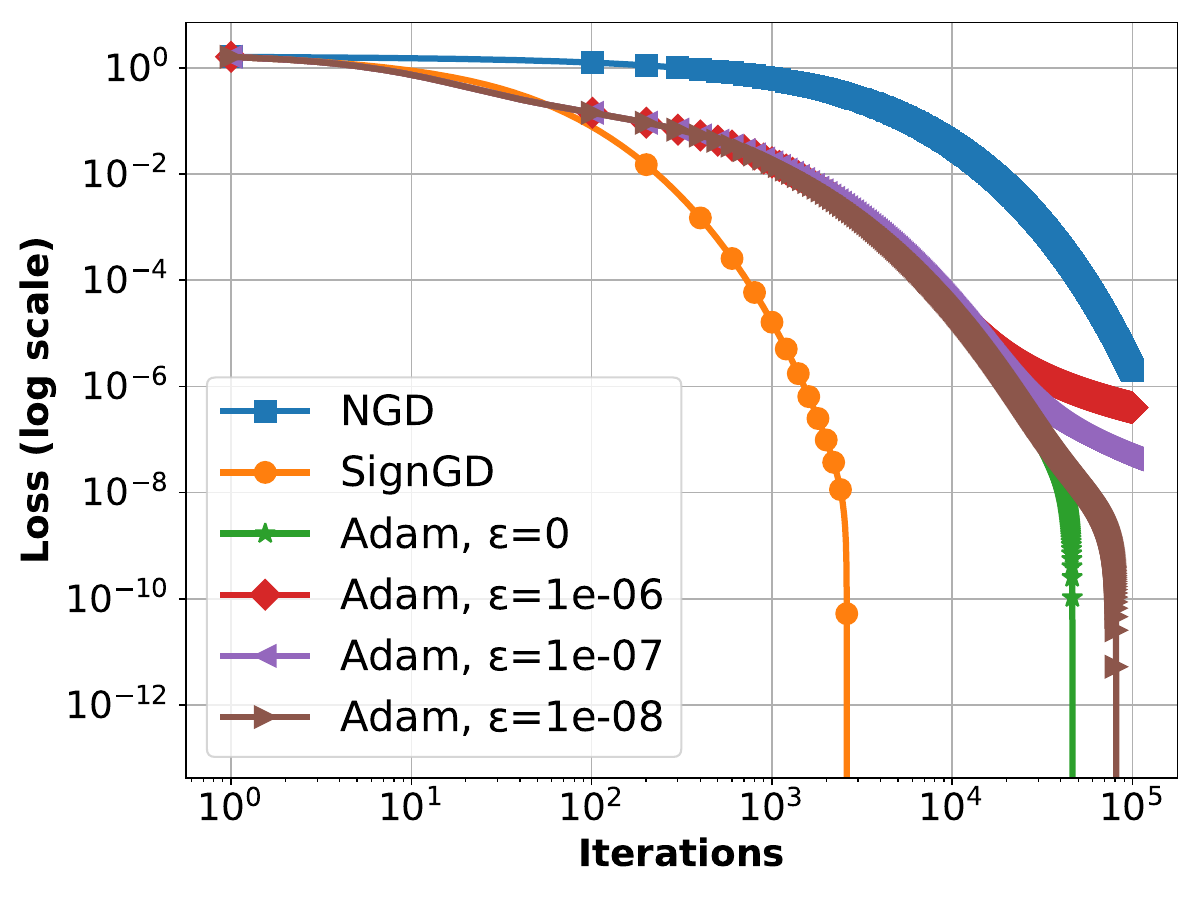}
        \captionsetup{width=1.\textwidth}
        \caption{Loss}
        \label{fig:main_loss}
    \end{subfigure}
    \hfill
    \begin{subfigure}[t]{0.24\textwidth}
        \includegraphics[width=\textwidth]{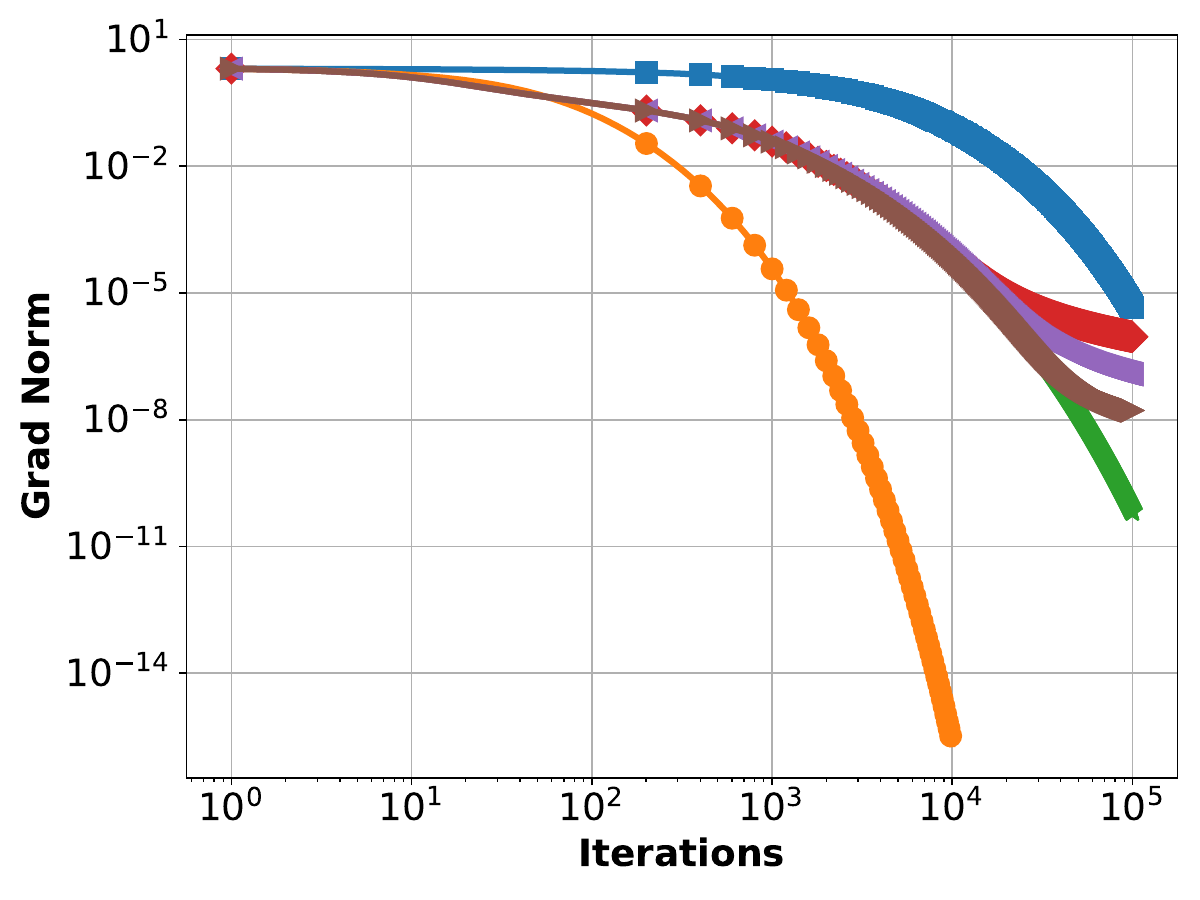}
        \captionsetup{width=1.\textwidth}
        \caption{Gradient Norm}
        \label{fig:main_grad}
    \end{subfigure}
    \hfill
    \begin{subfigure}[t]{0.24\textwidth}
        \includegraphics[width=\textwidth]{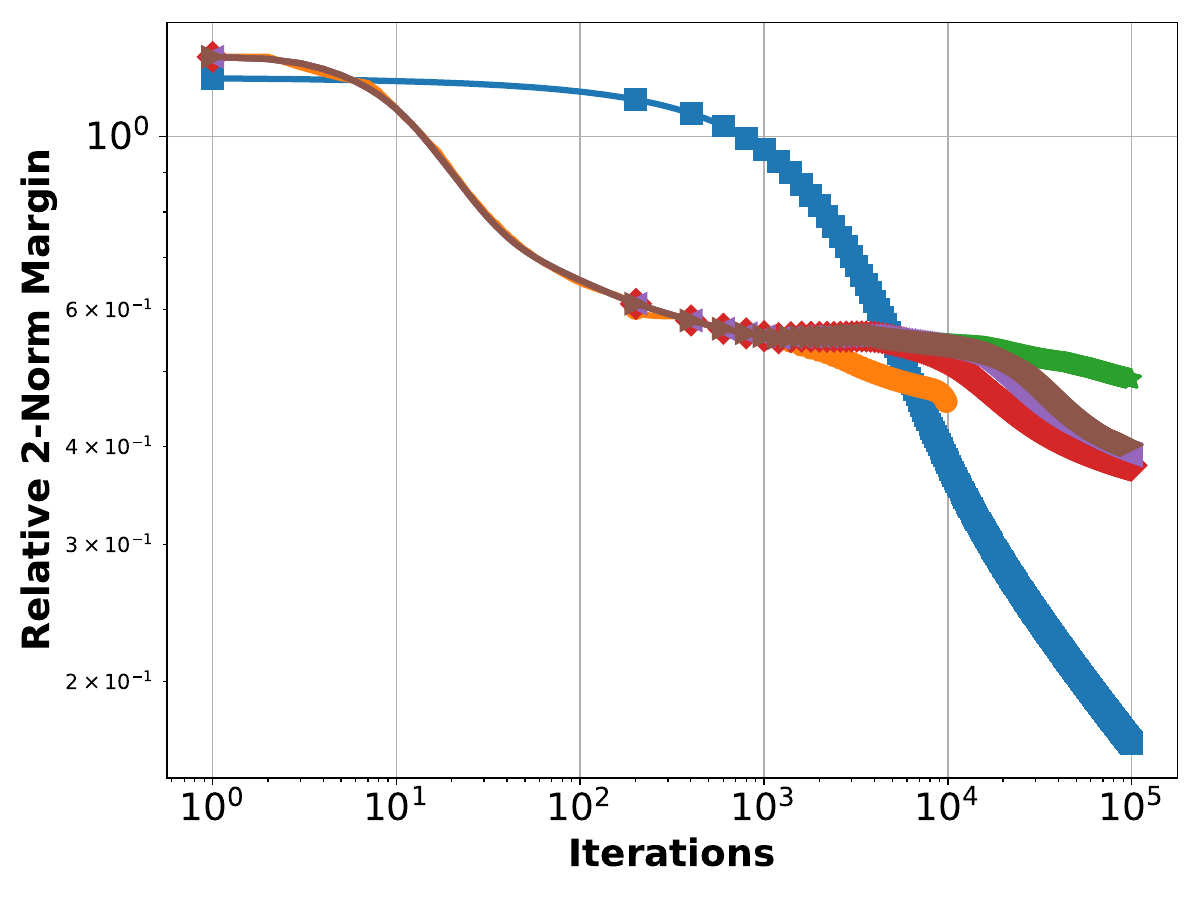}
        \captionsetup{width=1.\textwidth}
        \caption{$\gamma_{\lVert \cdot \rVert_{2}} = 2.11$}
        \label{fig:l2_margin}
    \end{subfigure}
    \hfill
    \begin{subfigure}[t]{0.24\textwidth}
        \includegraphics[width=\textwidth]{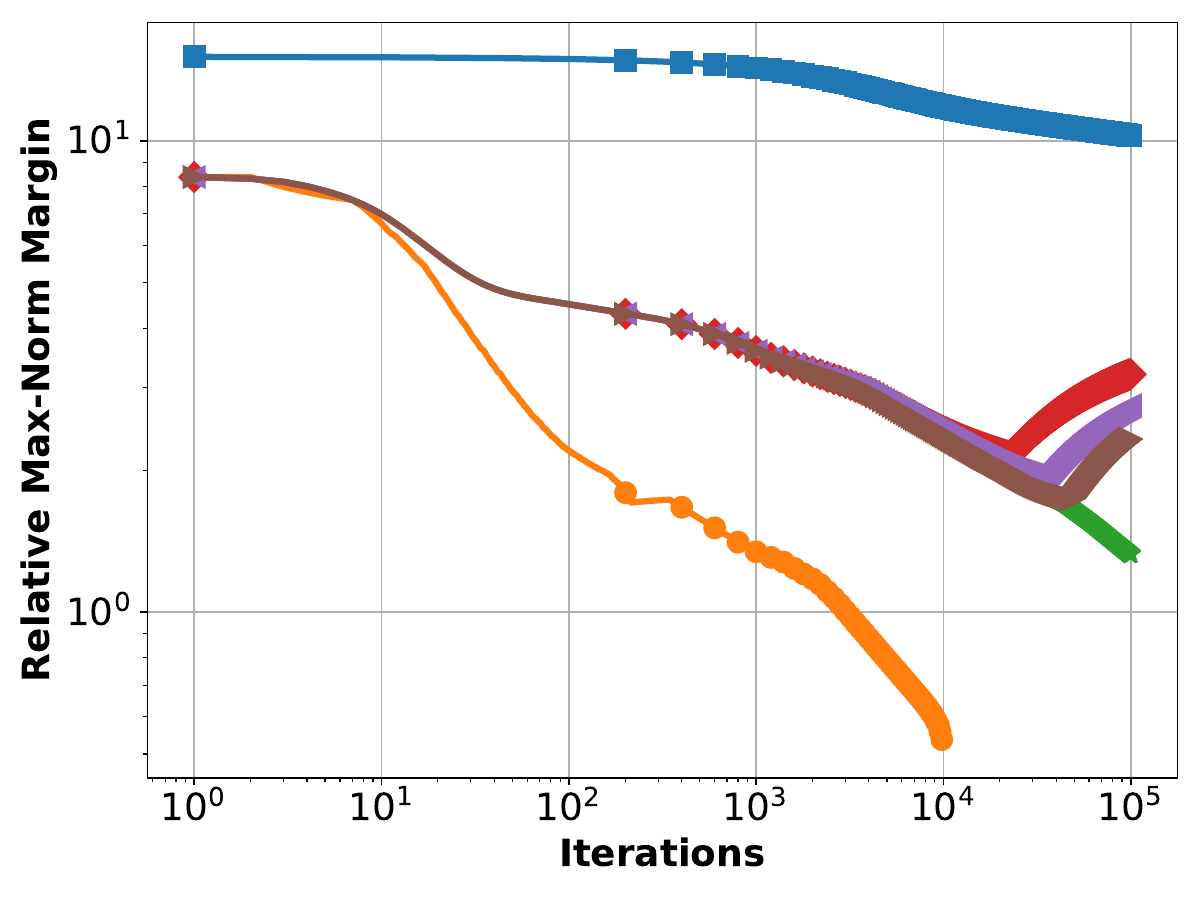}
        \captionsetup{width=1.\textwidth}
        \caption{$\gamma_{\lVert \cdot \rVert_{\max}} = 17.83$}
        \label{fig:max_margin}
    \end{subfigure}
    \caption{Implicit bias of NGD, SignGD, and Adam on multiclass separable data ($k=5$,$d=25$, and 50 data points in each class). \textbf{(a,b)} Loss and gradient 2-norm  vs. iterations: SignGD converges faster than others. \textbf{(c)}  We normalize the iterates w.r.t. 2-norm (aka Frobenius), compute the margin, then plot its difference to the dataset's max-margin w.r.t. 2-norm (given in captions). Only NGD converges to the max 2-norm margin. \textbf{(d)} Same  as \textbf{(c)} with 2-norm replaced by max-norm. 
Margins of SignGD/Adam(with $\epsilon=0$) converge to max-margin w.r.t max-norm. For SignGD, the training is stopped after $10^{4}$ iterations due to the numerical instabilities caused by the small gradient norm.}
    \label{fig:adam_1}
\end{figure*} 

\begin{figure*}[t!]
    \captionsetup{width=1.\textwidth}
    \centering
    \begin{subfigure}[t]{0.24\textwidth}
        \includegraphics[width=\textwidth]{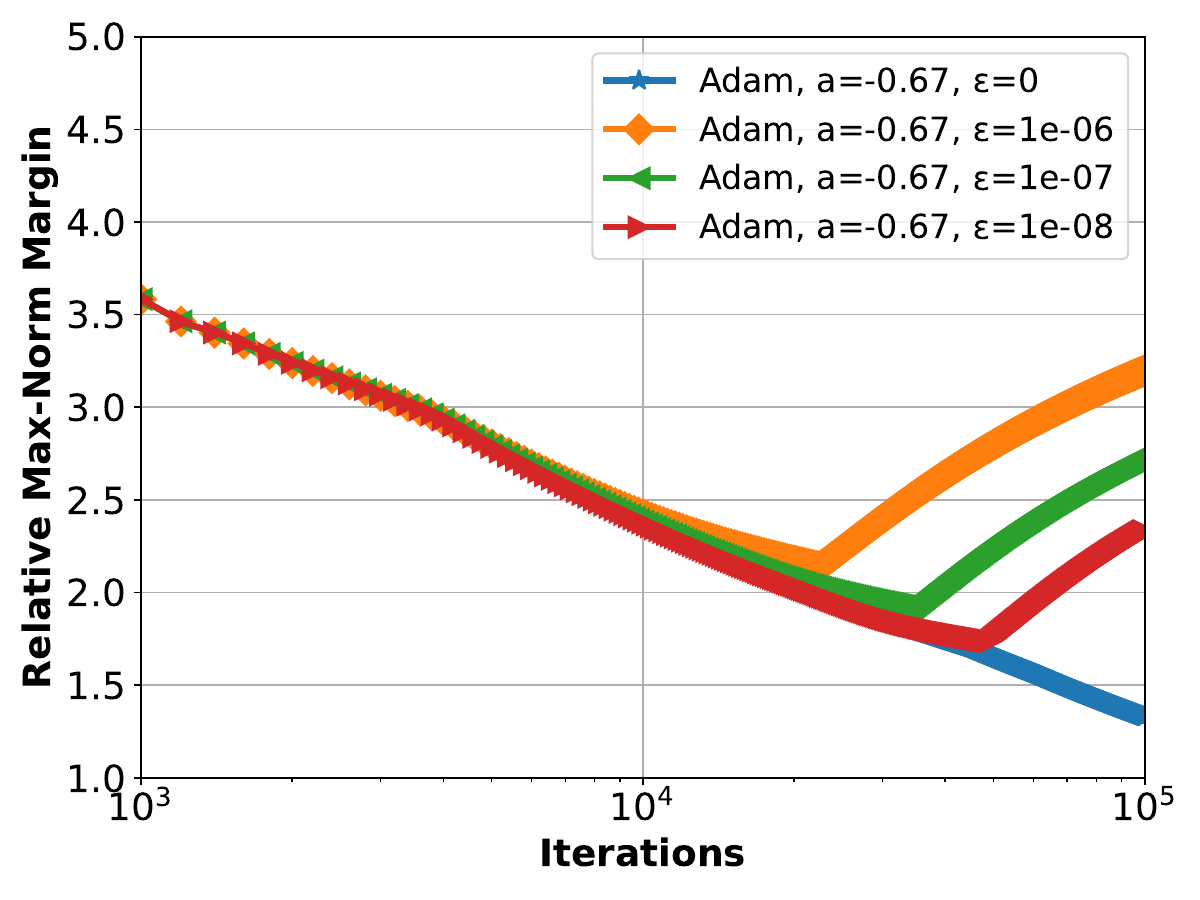}
        \captionsetup{width=1.\textwidth}
        \caption{Relative $L_{\infty}$-margin}
        \label{fig:max_margin_zoom}
    \end{subfigure}
    \hfill
    \begin{subfigure}[t]{0.24\textwidth}
        \includegraphics[width=\textwidth]{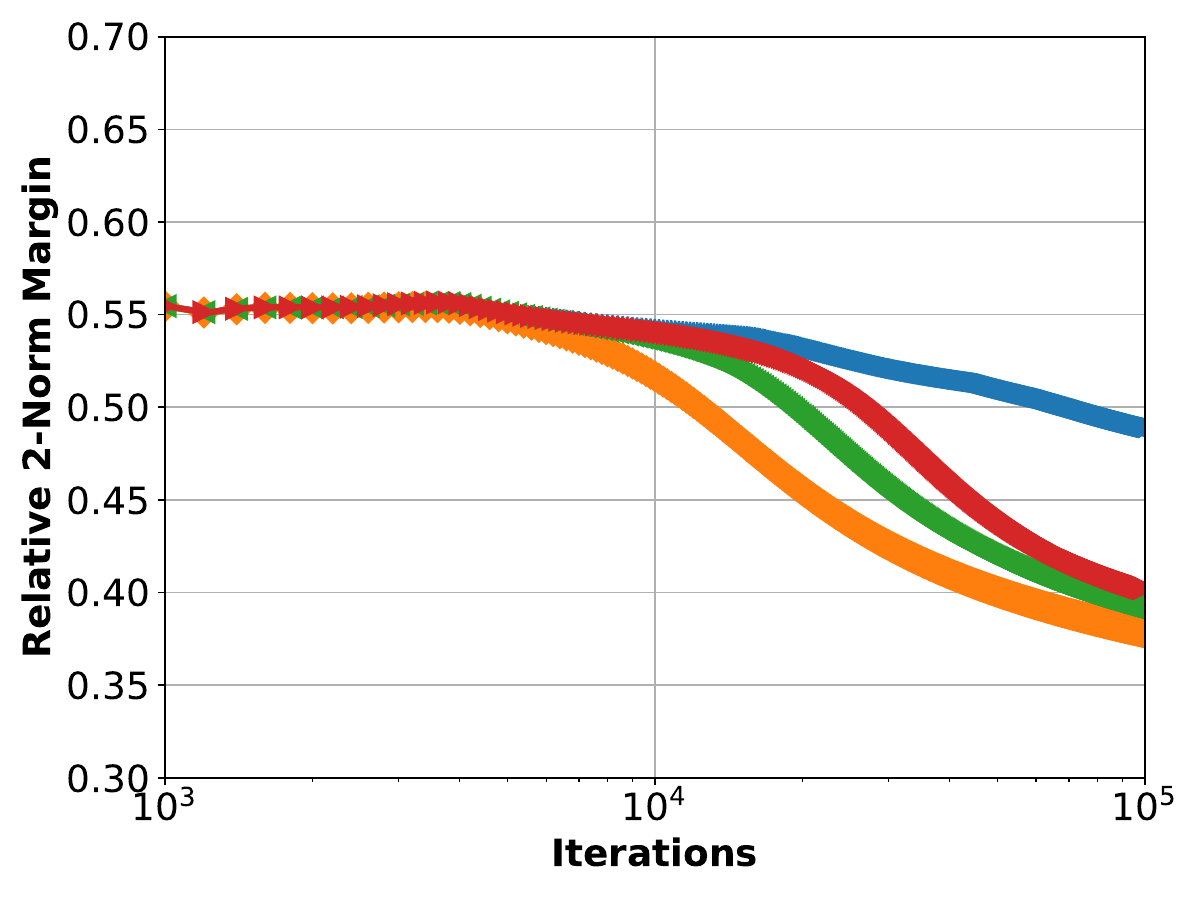}
        \captionsetup{width=1.\textwidth}
        \caption{Relative $L_2$-margin}
        \label{fig:l2_margin_zoom}
    \end{subfigure}
    \hfill
    \begin{subfigure}[t]{0.24\textwidth}
        \includegraphics[width=\textwidth]{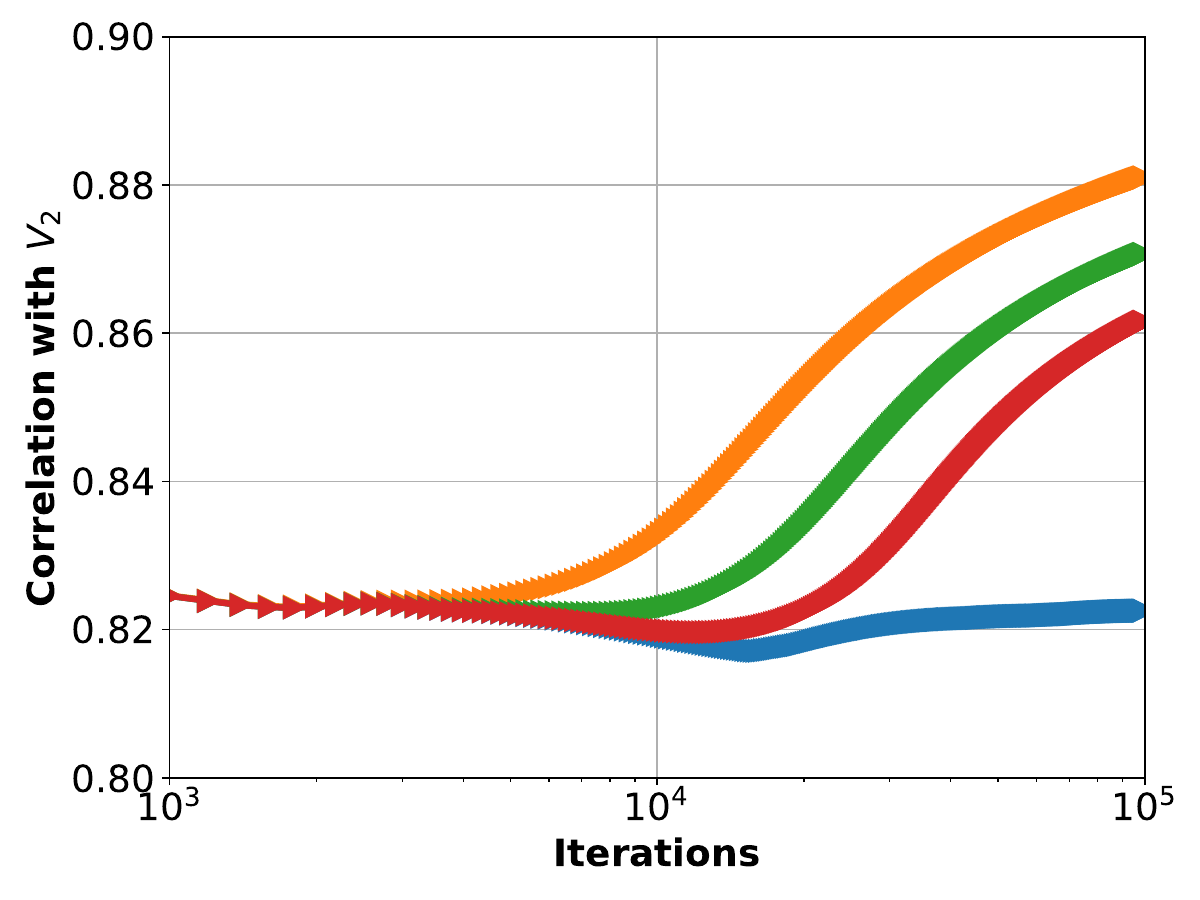}
        \captionsetup{width=1.\textwidth}
        \caption{$\frac{\langle \W_t, \Vb_2\rangle} {\lVert \W_t \rVert_2 \lVert \Vb_2 \rVert_2}$}
        \label{fig:cor_v2}
    \end{subfigure}
    \hfill
    \begin{subfigure}[t]{0.24\textwidth}
        \includegraphics[width=\textwidth]{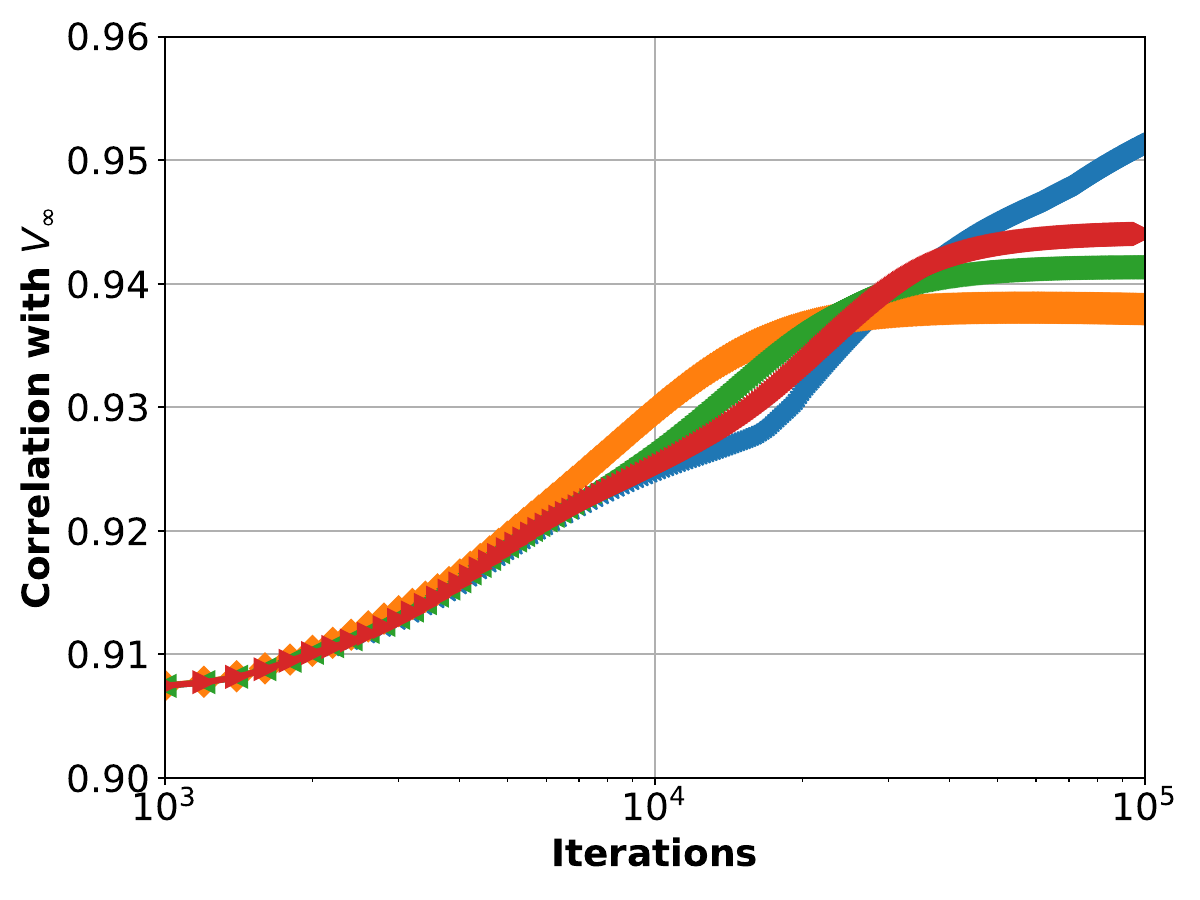}
        \captionsetup{width=1.\textwidth}
        \caption{$\frac{\langle \W_t, \Vb_{\infty}\rangle} {\lVert \W_t \rVert_2 \lVert \Vb_{\infty} \rVert_2}$}
        \label{fig:cor_inf}
    \end{subfigure}
    \caption{Effects of non-zero stability constants $\epsilon$ for Adam. We focus on late training stage between iteration $10^{3}$ to iteration $10^{5}$. \textbf{(a)} Gradient 2-norm vs. iterations. \textbf{(b)} Same quantity as Figure \ref{fig:max_margin}. We observe the max-norm margin of non-zero stability constants decreases after gradient magnitudes approach or fall below the values of the constants (drawn as horizontal lines in (a)). \textbf{(c)} Same quantity as Figure \ref{fig:l2_margin}. After $10^{4}$ iterations, the non-zero stability constants start to increase towards the max 2-norm margin. \textbf{(d, e)} Correlations between  $\W_t$ and max-margin classifiers $\Vb_2$, $\Vb_{\infty}$ against iterations. Considering correlations to $\Vb_2$, its value stays nearly constant for the zero stability constant, whereas rise after $10^{4}$ iterations for non-zero ones. We also observe the transitions occur earlier for larger stability constants. Considering correlations to $\Vb_{\infty}$, the values rise and plateau for non-zero stability constants. However, its value keeps increasing for the zero stability constant.}
    \label{fig:adam_2}
\end{figure*} 

\newpage
\section*{NeurIPS Paper Checklist}

\begin{enumerate}

\item {\bf Claims}
    \item[] Question: Do the main claims made in the abstract and introduction accurately reflect the paper's contributions and scope?
    \item[] Answer: \answerYes{} 
    \item[] Justification: See Sec. \ref{sec:intro}.
    \item[] Guidelines:
    \begin{itemize}
        \item The answer NA means that the abstract and introduction do not include the claims made in the paper.
        \item The abstract and/or introduction should clearly state the claims made, including the contributions made in the paper and important assumptions and limitations. A No or NA answer to this question will not be perceived well by the reviewers. 
        \item The claims made should match theoretical and experimental results, and reflect how much the results can be expected to generalize to other settings. 
        \item It is fine to include aspirational goals as motivation as long as it is clear that these goals are not attained by the paper. 
    \end{itemize}

\item {\bf Limitations}
    \item[] Question: Does the paper discuss the limitations of the work performed by the authors?
    \item[] Answer: \answerYes{} 
    \item[] Justification: See Sec. \ref{sec:main_conc}.
    \item[] Guidelines:
    \begin{itemize}
        \item The answer NA means that the paper has no limitation while the answer No means that the paper has limitations, but those are not discussed in the paper. 
        \item The authors are encouraged to create a separate "Limitations" section in their paper.
        \item The paper should point out any strong assumptions and how robust the results are to violations of these assumptions (e.g., independence assumptions, noiseless settings, model well-specification, asymptotic approximations only holding locally). The authors should reflect on how these assumptions might be violated in practice and what the implications would be.
        \item The authors should reflect on the scope of the claims made, e.g., if the approach was only tested on a few datasets or with a few runs. In general, empirical results often depend on implicit assumptions, which should be articulated.
        \item The authors should reflect on the factors that influence the performance of the approach. For example, a facial recognition algorithm may perform poorly when image resolution is low or images are taken in low lighting. Or a speech-to-text system might not be used reliably to provide closed captions for online lectures because it fails to handle technical jargon.
        \item The authors should discuss the computational efficiency of the proposed algorithms and how they scale with dataset size.
        \item If applicable, the authors should discuss possible limitations of their approach to address problems of privacy and fairness.
        \item While the authors might fear that complete honesty about limitations might be used by reviewers as grounds for rejection, a worse outcome might be that reviewers discover limitations that aren't acknowledged in the paper. The authors should use their best judgment and recognize that individual actions in favor of transparency play an important role in developing norms that preserve the integrity of the community. Reviewers will be specifically instructed to not penalize honesty concerning limitations.
    \end{itemize}

\item {\bf Theory assumptions and proofs}
    \item[] Question: For each theoretical result, does the paper provide the full set of assumptions and a complete (and correct) proof?
    \item[] Answer: \answerYes{} 
    \item[] Justification: See Sec. \ref{sec:prelim}.
    \item[] Guidelines:
    \begin{itemize}
        \item The answer NA means that the paper does not include theoretical results. 
        \item All the theorems, formulas, and proofs in the paper should be numbered and cross-referenced.
        \item All assumptions should be clearly stated or referenced in the statement of any theorems.
        \item The proofs can either appear in the main paper or the supplemental material, but if they appear in the supplemental material, the authors are encouraged to provide a short proof sketch to provide intuition. 
        \item Inversely, any informal proof provided in the core of the paper should be complemented by formal proofs provided in appendix or supplemental material.
        \item Theorems and Lemmas that the proof relies upon should be properly referenced. 
    \end{itemize}

    \item {\bf Experimental result reproducibility}
    \item[] Question: Does the paper fully disclose all the information needed to reproduce the main experimental results of the paper to the extent that it affects the main claims and/or conclusions of the paper (regardless of whether the code and data are provided or not)?
    \item[] Answer: \answerYes{} 
    \item[] Justification: See Sec. \ref{sec:main_exp}.  
    \item[] Guidelines: 
    \begin{itemize}
        \item The answer NA means that the paper does not include experiments.
        \item If the paper includes experiments, a No answer to this question will not be perceived well by the reviewers: Making the paper reproducible is important, regardless of whether the code and data are provided or not.
        \item If the contribution is a dataset and/or model, the authors should describe the steps taken to make their results reproducible or verifiable. 
        \item Depending on the contribution, reproducibility can be accomplished in various ways. For example, if the contribution is a novel architecture, describing the architecture fully might suffice, or if the contribution is a specific model and empirical evaluation, it may be necessary to either make it possible for others to replicate the model with the same dataset, or provide access to the model. In general. releasing code and data is often one good way to accomplish this, but reproducibility can also be provided via detailed instructions for how to replicate the results, access to a hosted model (e.g., in the case of a large language model), releasing of a model checkpoint, or other means that are appropriate to the research performed.
        \item While NeurIPS does not require releasing code, the conference does require all submissions to provide some reasonable avenue for reproducibility, which may depend on the nature of the contribution. For example
        \begin{enumerate}
            \item If the contribution is primarily a new algorithm, the paper should make it clear how to reproduce that algorithm.
            \item If the contribution is primarily a new model architecture, the paper should describe the architecture clearly and fully.
            \item If the contribution is a new model (e.g., a large language model), then there should either be a way to access this model for reproducing the results or a way to reproduce the model (e.g., with an open-source dataset or instructions for how to construct the dataset).
            \item We recognize that reproducibility may be tricky in some cases, in which case authors are welcome to describe the particular way they provide for reproducibility. In the case of closed-source models, it may be that access to the model is limited in some way (e.g., to registered users), but it should be possible for other researchers to have some path to reproducing or verifying the results.
        \end{enumerate}
    \end{itemize}

\item {\bf Open access to data and code}
    \item[] Question: Does the paper provide open access to the data and code, with sufficient instructions to faithfully reproduce the main experimental results, as described in supplemental material?
    \item[] Answer: \answerYes{} 
    \item[] Justification: The code is provided in supplemental materials.
    \item[] Guidelines:
    \begin{itemize}
        \item The answer NA means that paper does not include experiments requiring code.
        \item Please see the NeurIPS code and data submission guidelines (\url{https://nips.cc/public/guides/CodeSubmissionPolicy}) for more details.
        \item While we encourage the release of code and data, we understand that this might not be possible, so “No” is an acceptable answer. Papers cannot be rejected simply for not including code, unless this is central to the contribution (e.g., for a new open-source benchmark).
        \item The instructions should contain the exact command and environment needed to run to reproduce the results. See the NeurIPS code and data submission guidelines (\url{https://nips.cc/public/guides/CodeSubmissionPolicy}) for more details.
        \item The authors should provide instructions on data access and preparation, including how to access the raw data, preprocessed data, intermediate data, and generated data, etc.
        \item The authors should provide scripts to reproduce all experimental results for the new proposed method and baselines. If only a subset of experiments are reproducible, they should state which ones are omitted from the script and why.
        \item At submission time, to preserve anonymity, the authors should release anonymized versions (if applicable).
        \item Providing as much information as possible in supplemental material (appended to the paper) is recommended, but including URLs to data and code is permitted.
    \end{itemize}

\item {\bf Experimental setting/details}
    \item[] Question: Does the paper specify all the training and test details (e.g., data splits, hyperparameters, how they were chosen, type of optimizer, etc.) necessary to understand the results?
    \item[] Answer: \answerYes{} 
    \item[] Justification: See Sec. \ref{sec:main_exp}. 
    \item[] Guidelines:
    \begin{itemize}
        \item The answer NA means that the paper does not include experiments.
        \item The experimental setting should be presented in the core of the paper to a level of detail that is necessary to appreciate the results and make sense of them.
        \item The full details can be provided either with the code, in appendix, or as supplemental material.
    \end{itemize}

\item {\bf Experiment statistical significance}
    \item[] Question: Does the paper report error bars suitably and correctly defined or other appropriate information about the statistical significance of the experiments?
    \item[] Answer: \answerNo{} 
    \item[] Justification: The algorithm is deterministic with no batching involved. 
    \item[] Guidelines:
    \begin{itemize}
        \item The answer NA means that the paper does not include experiments.
        \item The authors should answer "Yes" if the results are accompanied by error bars, confidence intervals, or statistical significance tests, at least for the experiments that support the main claims of the paper.
        \item The factors of variability that the error bars are capturing should be clearly stated (for example, train/test split, initialization, random drawing of some parameter, or overall run with given experimental conditions).
        \item The method for calculating the error bars should be explained (closed form formula, call to a library function, bootstrap, etc.)
        \item The assumptions made should be given (e.g., Normally distributed errors).
        \item It should be clear whether the error bar is the standard deviation or the standard error of the mean.
        \item It is OK to report 1-sigma error bars, but one should state it. The authors should preferably report a 2-sigma error bar than state that they have a 96\% CI, if the hypothesis of Normality of errors is not verified.
        \item For asymmetric distributions, the authors should be careful not to show in tables or figures symmetric error bars that would yield results that are out of range (e.g. negative error rates).
        \item If error bars are reported in tables or plots, The authors should explain in the text how they were calculated and reference the corresponding figures or tables in the text.
    \end{itemize}

\item {\bf Experiments compute resources}
    \item[] Question: For each experiment, does the paper provide sufficient information on the computer resources (type of compute workers, memory, time of execution) needed to reproduce the experiments?
    \item[] Answer: \answerYes{} 
    \item[] Justification: See Sec. \ref{sec:main_exp}.  
    \item[] Guidelines:
    \begin{itemize}
        \item The answer NA means that the paper does not include experiments.
        \item The paper should indicate the type of compute workers CPU or GPU, internal cluster, or cloud provider, including relevant memory and storage.
        \item The paper should provide the amount of compute required for each of the individual experimental runs as well as estimate the total compute. 
        \item The paper should disclose whether the full research project required more compute than the experiments reported in the paper (e.g., preliminary or failed experiments that didn't make it into the paper). 
    \end{itemize}
    
\item {\bf Code of ethics}
    \item[] Question: Does the research conducted in the paper conform, in every respect, with the NeurIPS Code of Ethics \url{https://neurips.cc/public/EthicsGuidelines}?
    \item[] Answer: \answerYes{} 
    \item[] Justification: The paper follows the code of ethics.
    \item[] Guidelines:
    \begin{itemize}
        \item The answer NA means that the authors have not reviewed the NeurIPS Code of Ethics.
        \item If the authors answer No, they should explain the special circumstances that require a deviation from the Code of Ethics.
        \item The authors should make sure to preserve anonymity (e.g., if there is a special consideration due to laws or regulations in their jurisdiction).
    \end{itemize}

\item {\bf Broader impacts}
    \item[] Question: Does the paper discuss both potential positive societal impacts and negative societal impacts of the work performed?
    \item[] Answer: \answerNA{} 
    \item[] Justification: The paper is theoretical.
    \item[] Guidelines:
    \begin{itemize}
        \item The answer NA means that there is no societal impact of the work performed.
        \item If the authors answer NA or No, they should explain why their work has no societal impact or why the paper does not address societal impact.
        \item Examples of negative societal impacts include potential malicious or unintended uses (e.g., disinformation, generating fake profiles, surveillance), fairness considerations (e.g., deployment of technologies that could make decisions that unfairly impact specific groups), privacy considerations, and security considerations.
        \item The conference expects that many papers will be foundational research and not tied to particular applications, let alone deployments. However, if there is a direct path to any negative applications, the authors should point it out. For example, it is legitimate to point out that an improvement in the quality of generative models could be used to generate deepfakes for disinformation. On the other hand, it is not needed to point out that a generic algorithm for optimizing neural networks could enable people to train models that generate Deepfakes faster.
        \item The authors should consider possible harms that could arise when the technology is being used as intended and functioning correctly, harms that could arise when the technology is being used as intended but gives incorrect results, and harms following from (intentional or unintentional) misuse of the technology.
        \item If there are negative societal impacts, the authors could also discuss possible mitigation strategies (e.g., gated release of models, providing defenses in addition to attacks, mechanisms for monitoring misuse, mechanisms to monitor how a system learns from feedback over time, improving the efficiency and accessibility of ML).
    \end{itemize}
    
\item {\bf Safeguards}
    \item[] Question: Does the paper describe safeguards that have been put in place for responsible release of data or models that have a high risk for misuse (e.g., pretrained language models, image generators, or scraped datasets)?
    \item[] Answer: \answerNA{} 
    \item[] Justification: No such risks.
    \item[] Guidelines:
    \begin{itemize}
        \item The answer NA means that the paper poses no such risks.
        \item Released models that have a high risk for misuse or dual-use should be released with necessary safeguards to allow for controlled use of the model, for example by requiring that users adhere to usage guidelines or restrictions to access the model or implementing safety filters. 
        \item Datasets that have been scraped from the Internet could pose safety risks. The authors should describe how they avoided releasing unsafe images.
        \item We recognize that providing effective safeguards is challenging, and many papers do not require this, but we encourage authors to take this into account and make a best faith effort.
    \end{itemize}

\item {\bf Licenses for existing assets}
    \item[] Question: Are the creators or original owners of assets (e.g., code, data, models), used in the paper, properly credited and are the license and terms of use explicitly mentioned and properly respected?
    \item[] Answer: \answerNA{} 
    \item[] Justification: No existing assets used.
    \item[] Guidelines:
    \begin{itemize}
        \item The answer NA means that the paper does not use existing assets.
        \item The authors should cite the original paper that produced the code package or dataset.
        \item The authors should state which version of the asset is used and, if possible, include a URL.
        \item The name of the license (e.g., CC-BY 4.0) should be included for each asset.
        \item For scraped data from a particular source (e.g., website), the copyright and terms of service of that source should be provided.
        \item If assets are released, the license, copyright information, and terms of use in the package should be provided. For popular datasets, \url{paperswithcode.com/datasets} has curated licenses for some datasets. Their licensing guide can help determine the license of a dataset.
        \item For existing datasets that are re-packaged, both the original license and the license of the derived asset (if it has changed) should be provided.
        \item If this information is not available online, the authors are encouraged to reach out to the asset's creators.
    \end{itemize}

\item {\bf New assets}
    \item[] Question: Are new assets introduced in the paper well documented and is the documentation provided alongside the assets?
    \item[] Answer: \answerNA{} 
    \item[] Justification: No new assets released.
    \item[] Guidelines:
    \begin{itemize}
        \item The answer NA means that the paper does not release new assets.
        \item Researchers should communicate the details of the dataset/code/model as part of their submissions via structured templates. This includes details about training, license, limitations, etc. 
        \item The paper should discuss whether and how consent was obtained from people whose asset is used.
        \item At submission time, remember to anonymize your assets (if applicable). You can either create an anonymized URL or include an anonymized zip file.
    \end{itemize}

\item {\bf Crowdsourcing and research with human subjects}
    \item[] Question: For crowdsourcing experiments and research with human subjects, does the paper include the full text of instructions given to participants and screenshots, if applicable, as well as details about compensation (if any)? 
    \item[] Answer: \answerNA{} 
    \item[] Justification: No crowdsourcing and research with human subjects are involved.
    \item[] Guidelines:
    \begin{itemize}
        \item The answer NA means that the paper does not involve crowdsourcing nor research with human subjects.
        \item Including this information in the supplemental material is fine, but if the main contribution of the paper involves human subjects, then as much detail as possible should be included in the main paper. 
        \item According to the NeurIPS Code of Ethics, workers involved in data collection, curation, or other labor should be paid at least the minimum wage in the country of the data collector. 
    \end{itemize}

\item {\bf Institutional review board (IRB) approvals or equivalent for research with human subjects}
    \item[] Question: Does the paper describe potential risks incurred by study participants, whether such risks were disclosed to the subjects, and whether Institutional Review Board (IRB) approvals (or an equivalent approval/review based on the requirements of your country or institution) were obtained?
    \item[] Answer: \answerNA{} 
    \item[] Justification: No crowdsourcing and research with human subjects are involved.
    \item[] Guidelines:
    \begin{itemize}
        \item The answer NA means that the paper does not involve crowdsourcing nor research with human subjects.
        \item Depending on the country in which research is conducted, IRB approval (or equivalent) may be required for any human subjects research. If you obtained IRB approval, you should clearly state this in the paper. 
        \item We recognize that the procedures for this may vary significantly between institutions and locations, and we expect authors to adhere to the NeurIPS Code of Ethics and the guidelines for their institution. 
        \item For initial submissions, do not include any information that would break anonymity (if applicable), such as the institution conducting the review.
    \end{itemize}

\item {\bf Declaration of LLM usage}
    \item[] Question: Does the paper describe the usage of LLMs if it is an important, original, or non-standard component of the core methods in this research? Note that if the LLM is used only for writing, editing, or formatting purposes and does not impact the core methodology, scientific rigorousness, or originality of the research, declaration is not required.
    \item[] Answer: \answerNA{} 
    \item[] Justification: No LLMs usage involved.
    \item[] Guidelines:
    \begin{itemize}
        \item The answer NA means that the core method development in this research does not involve LLMs as any important, original, or non-standard components.
        \item Please refer to our LLM policy (\url{https://neurips.cc/Conferences/2025/LLM}) for what should or should not be described.
    \end{itemize}

\end{enumerate}
\end{document}